\newtheorem{assumption}{Assumption}
\DeclareMathAlphabet\mathbb{U}{msb}{m}{n}
\definecolor{Gray}{gray}{0.85}
\newcolumntype{g}{>{\columncolor{Gray}}c}
\def\Rset{\mathbb{R}}
\DeclareMathOperator*{\E}{\mathbb{E}}
\DeclareMathOperator*{\argmax}{\rm argmax}
\DeclareMathOperator{\sign}{sign}
\newcommand{\nrm}[1]{{\left\vert\kern-0.25ex\left\vert\kern-0.25ex\left\vert #1 
    \right\vert\kern-0.25ex\right\vert\kern-0.25ex\right\vert}}
\DeclarePairedDelimiter{\abs}{\lvert}{\rvert} 
\DeclarePairedDelimiter{\bracket}{[}{]}
\DeclarePairedDelimiter{\curl}{\{}{\}}
\DeclarePairedDelimiter{\paren}{(}{)}
\DeclarePairedDelimiter{\norm}{\|}{\|}
\newcommand{\sA}{{\mathscr A}}
\newcommand{\sC}{{\mathscr C}}
\newcommand{\sD}{{\mathscr D}}
\newcommand{\sE}{{\mathscr E}}
\newcommand{\sF}{{\mathscr F}}
\newcommand{\sG}{{\mathscr G}}
\newcommand{\sH}{{\mathscr H}}
\newcommand{\sM}{{\mathscr M}}
\newcommand{\sR}{{\mathscr R}}
\newcommand{\sX}{{\mathscr X}}
\newcommand{\sY}{{\mathscr Y}}
\newcommand{\bb}{{\mathbf b}}
\newcommand{\bw}{{\mathbf w}}
\newcommand{\sfB}{{\mathsf B}}
\newcommand{\sfL}{{\mathsf L}}
\newcommand{\wt}{\widetilde}
\newcommand{\e}{\epsilon}
\newcommand{\ignore}[1]{}
\newcommand{\hh}{{\sf h}}
\newcommand{\1}{\mathds{1}}
\newcommand{\labs}{{\sfL_{\rm{abs}}}}
\newcommand{\labsc}{{\wt \sfL_{\rm{abs}}}}
\title[Predictor-Rejector Multi-Class Abstention]
      {Predictor-Rejector Multi-Class Abstention:\\
      Theoretical Analysis and Algorithms}
\begin{document}

\maketitle

\begin{abstract}

We study the key framework of learning with abstention in the
multi-class classification setting. In this setting, the learner can
choose to abstain from making a prediction with some pre-defined
cost. We present a series of new theoretical and algorithmic results
for this learning problem in the predictor-rejector framework.
We introduce several new families of surrogate losses for which we
prove strong non-asymptotic and hypothesis set-specific consistency
guarantees, thereby resolving positively two existing open questions. These guarantees provide upper bounds on the estimation error
of the abstention loss function in terms of that of the surrogate
loss. We analyze both a single-stage setting where the predictor and
rejector are learned simultaneously and a two-stage setting crucial in
applications, where the predictor is learned in a first stage using a
standard surrogate loss such as cross-entropy.
These guarantees suggest new multi-class abstention algorithms based
on minimizing these surrogate losses. We also report the results of
extensive experiments comparing these algorithms to the current
state-of-the-art algorithms on CIFAR-10, CIFAR-100 and SVHN
datasets. Our results demonstrate empirically the benefit of our new
surrogate losses and show the remarkable performance of our broadly
applicable two-stage abstention algorithm.

\end{abstract}

\begin{keywords}%
  abstention, learning to abstain, consistency, learning theory
\end{keywords}

\section{Introduction}
\label{sec:intro}

The problem of learning with abstention has become increasingly
crucial in various applications. In natural language generation or
question-answering, it is not always possible to provide accurate or
factual responses. Therefore, it is of utmost importance to develop
the ability to abstain from generating a response in such cases to
prevent the occurrence of misleading or incorrect information, often
referred to as \emph{hallucinations} \citep{WeiEtAl2022,
  Filippova2020, maynez2020}.  Abstention plays a critical role in
several other application areas. For instance, in the field of
autonomous vehicle control, an incorrect prediction can pose a
significant threat to human lives. Similarly, in decision-making
systems, incorrect choices can have severe ethical implications.
These scenarios highlight the necessity of incorporating abstention
mechanisms to ensure the safety, reliability, and ethical soundness of
the learning process.

We can distinguish several broad methods for learning with abstention
in the literature: \emph{confidence-based methods}, which consist of
abstaining when the score returned by a pre-trained model falls below
some threshold
\citep{Chow1957,chow1970optimum,bartlett2008classification,
  yuan2010classification,WegkampYuan2011,ramaswamy2018consistent,NiCHS19}; \emph{selective
classification}, which analyzes a set-up with a \emph{predictor} and a
\emph{selector} and defines a selection risk or loss normalized by the expected selection or coverage
\citep{el2010foundations,wiener2011agnostic,el2012active,
  wiener2015agnostic,geifman2017selective,geifman2019selectivenet}; a
\emph{predictor-rejector formulation}, which is based on learning both
a \emph{predictor} and a \emph{rejector}, each from a different family
of functions, and that takes into account explicitly the 
abstention cost $c$
\citep{CortesDeSalvoMohri2016,CortesDeSalvoMohri2016bis,CortesDeSalvoMohri2023,cheng2023regression,MohriAndorChoiCollinsMaoZhong2024learning,li2024no}; and a more
recent \emph{score-based formulation} that consists of augmenting the
multi-class categories with a rejection label and of abstaining when
the score assigned to the rejection label is the highest
\citep{mozannar2020consistent,caogeneralizing,MaoMohriZhong2024score}.  Another problem closely related to
abstention is that of \emph{deferring} to an alternative model, or
even to a human in some instances. This can also be considered as a
special case of the general abstention scenario and tackled
in a similar way
\citep{madras2018predict,raghu2019algorithmic,mozannar2020consistent,
  okati2021differentiable,wilder2021learning,verma2022calibrated,
  narasimhanpost,verma2023learning,mao2023two,cao2023defense,MaoMohriZhong2024deferral,chen2024learning,mao2024regression}.

We will be particularly interested in the predictor-rejector
formulation, which explicitly models the cost of abstention. The
selective classification of \citet{el2010foundations} is also
interesting, but it does not explicitly factor in the cost $c$ and is
based on a distinct objective. Confidence-based methods are also very
natural and straightforward, but they may fail when the predictor is
not calibrated, a property that often does not hold. Additionally,
they have been shown to be suboptimal when the predictor differs from
the Bayes classifier \citep{CortesDeSalvoMohri2016}. The score-based
formulation \citep{mozannar2020consistent} admits very common
properties and also explicitly takes into account the rejection cost
$c$.  We will compare the predictor-rejector formulation with the
score-based one. We will show via an example that the
predictor-rejector is more natural in some instances and will also
compare the two formulations in our experiments. We further elaborate on the difference
between the two formulations in Appendix~\ref{app:difference}.

How should the problem of multi-class classification with abstention
be formulated and when is it appropriate to abstain?
The extension of the results of \citet{CortesDeSalvoMohri2016} to
\emph{multi-class classification} was found to be very challenging by
\citet{NiCHS19}. In fact, these authors left the following as an open question: \emph{can we define Bayes-consistent surrogate losses for the
predictor-rejector abstention formulation in the multi-class setting?}
This paper deals precisely with this topic: we present a series of new
theoretical, algorithmic, and empirical results for multi-class
learning with abstention in predictor-rejector formulation and, in
particular, resolve this open question in a strongly positive way.

For the score-based formulation, a surrogate loss function based on
cross-entropy was introduced by \citet{mozannar2020consistent}, which
was proven to be Bayes-consistent. Building upon this work,
\citet{caogeneralizing} presented a more comprehensive collection of
Bayes-consistent surrogate losses for the score-based
formulation. These surrogate losses can be constructed using any
consistent loss function for the standard multi-class classification
problem. More recently, \citet{MaoMohriZhong2024score} gave an extensive analysis of surrogate losses for the score-based formulation supported by \emph{$\sH$-consistency bounds}. In a recent study, \citet{pmlr-v206-mozannar23a}
demonstrated that existing score-based surrogate losses for abstention
are not \emph{realizable consistent} with respect to the abstention
loss, as defined by \citet{long2013consistency}. Instead, they
proposed a novel surrogate loss that achieves realizable $(\sH,
\sR)$-consistency, provided that the sets of predictors $\sH$ and
rejectors $\sR$ are \emph{closed under scaling}.  However, the authors
expressed uncertainty regarding the Bayes-consistency of their
proposed surrogate losses and left open the question of
\emph{identifying abstention surrogate losses that are both consistent
and realizable $(\sH, \sR)$-consistent when $\sH$ and $\sR$ satisfy
the scaling closure property}.  We address this open question by
demonstrating that our newly proposed surrogate losses benefit from
both Bayes-consistency and realizable consistency. We give a more
comprehensive discussion of related work in
Appendix~\ref{app:related-work}.

We show in Section~\ref{sec:score-example} that in some instances
the optimal solution cannot be derived in the score-based
formulation, unless we resort to more complex scoring functions. In
contrast, the solution can be straightforwardly derived in the
predictor-rejector formulation.
In Section~\ref{sec:general}, we present and analyze a new family of
surrogate loss functions for multi-class abstention in the
predictor-rejector formulation, first in the \emph{single-stage
setting}, where the predictor $h$ and the rejector $r$ are selected
simultaneously, next in a \emph{two-stage setting}, where first the
predictor $h$ is chosen and fixed and subsequently the rejector $r$ is
determined. The two-stage setting is crucial in many applications
since the predictor $h$ is often already learned after a costly
training of several hours or days. Re-training to ensure a
simultaneous learning of $h$ and $r$ is then inconceivable due to its
prohibitive cost.

In the \emph{single-stage setting} (Section~\ref{sec:single-stage}),
we first give a negative result, ruling out abstention surrogate
losses that do not
verify a technical condition. Next, we present several
positive results for abstention surrogate
losses verifying that condition, for which we prove non-asymptotic \emph{$(\sH,\sR)$-consistency bounds} \citep{awasthi2022Hconsistency,awasthi2022multi}
that are stronger than Bayes-consistency.
Next, in Section~\ref{sec:two-stage}, we also prove \emph{$(\sH,
\sR)$-consistency bounds} for the \emph{two-stage setting}. Minimizing
these new surrogate losses directly result in new algorithms for
multi-class abstention.

In Section~\ref{sec:general-realizable}, we prove realizable
consistency guarantees for both single-stage and two-stage
predictor-rejector surrogate losses.\ignore{This property underscores
  the advantages of our predictor-rejector formulation and is
  supported by the empirical success of the newly proposed surrogate
  losses. We further discuss in detail the difference between the
  predictor-rejector formulation and the score-based formulation,
  which underscores our work's innovation and significant
  contribution.}
In Section~\ref{sec:experiments}, we empirically show that our two-stage predictor-rejector surrogate loss
consistently outperforms state-of-the-art
scored-based surrogate losses\ignore{on CIFAR-10, CIFAR-100 and
SVHN datasets}, while our single-stage one achieves comparable results. Our main contributions are summarized below: 
\begin{itemize}
\itemsep-0.1em 
\item Counterexample for score-based abstention formulation.

\item Negative results for single-stage predictor-rejector surrogate losses.

\item New families of single-stage predictor-rejector surrogate
      losses for which we prove strong non-asymptotic and hypothesis
      set-specific consistency guarantees, thereby resolving
      positively an open question mentioned by \citet{NiCHS19}.

\item Two-stage predictor-rejector formulations and their
      $\sH$-consistency bounds guarantees.

\item Realizable consistency guarantees for both single-stage and
      two-stage surrogate losses, which resolve positively the recent
      open question posed by \citet{pmlr-v206-mozannar23a}.

\item Experiments on CIFAR-10, CIFAR-100 and SVHN
datasets empirically demonstrating the usefulness of our
      proposed surrogate losses.
 
\end{itemize}

\section{Preliminaries}
\label{sec:preliminaries}

We first introduce some preliminary concepts and definitions,
including the description of the predictor-rejector formulation and
background on $\sH$-consistency bounds.
We examine the standard multi-class classification scenario with an
input space $\sX$ and a set of $n \geq 2$ classes or labels $\sY =
\curl*{1, \ldots, n}$. We will denote by $\sD$ a distribution over
$\sX \times \sY$ and by $p(x, y) = \sD(Y = y \mid X = x)$ the
conditional probability of $Y = y$ given $X = x$. We will also adopt
the shorthand $p(x) = \paren*{p(x, 1), \ldots, p(x, n)}$ to denote the
vector of conditional probabilities, given $x \in \sX$. We study more
specifically the learning scenario of multi-class classification with
abstention, within the predictor-rejector formulation introduced by
\citet{CortesDeSalvoMohri2016bis}.

\textbf{Predictor-rejector formulation.}
In this formulation of the abstention problem, we have a hypothesis
set $\sH$ of prediction functions mapping from $\sX \times \sY$ to
$\Rset$, along with a family $\sR$ of abstention functions, or
\emph{rejectors}, which map from $\sX$ to $\Rset$. For an input $x \in
\sX$ and a hypothesis $h \in \sH$, the predicted label $\hh(x)$ is
defined as the one with the highest score, $\hh(x) = \argmax_{y\in
  \sY}h(x, y)$, using an arbitrary yet fixed deterministic strategy to
break ties. A rejector $r \in \sR$ is used to abstain from predicting
on input $x$ if $r(x)$ is non-positive, $r(x) \leq 0$, at a cost $c(x)
\in [0, 1]$. The \emph{predictor-rejector abstention loss} $\labs$ for
this formulation is thus defined for any $(h, r) \in \sH \times \sR$
and $(x, y) \in \sX \times \sY$ as
\begin{equation}
\label{eq:abs}
\labs(h, r, x, y)
= \1_{\hh(x) \neq y} \1_{r(x)> 0} + c(x) \1_{r(x) \leq 0}.
\end{equation}
When the learner does not abstain ($r(x) > 0$), it incurs the familiar
zero-one classification loss. Otherwise, it abstains ($r(x) \leq 0$)
at the expense of a cost $c(x)$. The learning problem consists of
using a finite sample of size $m$ drawn i.i.d.\ from $\sD$ to select a
predictor $h$ and rejector $r$ with small expected $\labs$ loss,
$\E_{(x, y) \sim \sD}[\labs(h, r, x, y)]$.
For simplification, in the following, we assume a
constant cost function $c \in (0, 1)$. This assumption is not
necessary however, and all $(\sH,
\sR)$-consistency bounds results in Sections~\ref{sec:single-stage} and \ref{sec:two-stage} extend straightforwardly to
general cost functions.

Optimizing the predictor-rejector abstention loss is intractable for
most hypothesis sets. Thus, learning algorithms in this context must
rely on a surrogate loss $\sfL$ for $\labs$. In the subsequent
sections, we study the consistency properties of several surrogate
losses. Given a surrogate loss function $\sfL$ in the
predictor-rejector framework, we denote by $\sE_{\sfL}(h, r) = \E_{(x,
  y)\sim \sD}\bracket*{\sfL(h, r, x, y)}$ the $\sfL$-expected loss of
a pair $(h, r) \in \sH \times \sR$ and by $\sE_{\sfL}^*(\sH, \sR) =
\inf_{h \in \sH, r \in \sR} \sE_{\sfL}(h, r)$ its infimum over $\sH
\times \sR$.

\textbf{$(\sH, \sR)$-consistency bounds.} We will prove $(\sH,
\sR)$-consistency bounds for several surrogate losses in the
predictor-rejector framework, which extend to the predictor-rejector
framework the $\sH$-consistency bounds of
\citet{awasthi2021calibration,awasthi2021finer,awasthi2022Hconsistency,awasthi2022multi,AwasthiMaoMohriZhong2023theoretically,awasthi2024dc,MaoMohriZhong2023cross,MaoMohriZhong2023ranking,MaoMohriZhong2023rankingabs,zheng2023revisiting,MaoMohriZhong2023characterization,MaoMohriZhong2023structured,mao2024top,mao2024h}.
These are inequalities upper-bounding the predictor-rejector
abstention estimation loss $\labs$ of a hypothesis $h \in \sH$ and
rejector $r \in \sR$ with respect to their surrogate estimation
loss. They admit the following form:
\[
\sE_{\labs}(h, r) - \sE_{\labs}^*(\sH, \sR) \leq f\paren{\sE_{\sfL}(h,
  r) - \sE_{\sfL}^*(\sH, \sR)},
\]
where $f$ is a non-decreasing
function. Thus, the estimation error $(\sE_{\labs}(h, r) -
\sE_{\labs}^*(\sH,\sR))$ is then bounded by $f(\e)$ if the surrogate
estimation loss $(\sE_{\sfL}(h, r) - \sE_{\sfL}^*(\sH,\sR))$ is reduced
to $\e$.
An important term that appears in these bounds is the
\emph{minimizability gap}, which is defined by
$\sM_\sfL(\sH, \sR) = \sE_{\sfL}^*(\sH, \sR) -
\E_x\bracket*{\inf_{h \in \sH, r\in \sR} \E_y \bracket{\sfL(h, r,
    X, y) \mid X = x}}$.
When the loss function
$\sfL$ depends solely on $h(x, \cdot)$ and $r(x)$, which holds for
most loss functions used in applications, and when $\sH$ and $\sR$
include all measurable functions, the minimizability gap is null
\citep[lemma~2.5]{steinwart2007compare}. However, it is generally
non-zero for restricted hypothesis sets $\sH$ and $\sR$. The
minimizability gap can be upper-bounded by the approximation error
$\sA_\sfL(\sH, \sR) = \sE_{\sfL}^*(\sH, \sR) -
\E_x\bracket[\big]{\inf_{h, r} \E_y \bracket{\sfL(h, r, X, y) \mid X =
    x}}$, where the infimum is taken over all measurable
functions. But, the minimizability gap is a finer quantity and leads
to more favorable guarantees.

\section{Counterexample for score-based abstention losses}
\label{sec:score-example}

We first discuss a natural example here that can be tackled
straightforwardly in the predictor-rejector formulation but for which
the same solution cannot be derived in the score-based formulation
setting, unless we resort to more complex functions. This natural
example motivates our study of surrogate losses for the
predictor-rejector formulation in Section~\ref{sec:general}. We begin by
discussing the score-based formulation and subsequently
highlight its relationship with the predictor-rejector formulation.

\textbf{Score-based abstention formulation.}
In this version of the abstention problem, the label set $\sY$ is
expanded by adding an extra category $(n + 1)$, which represents
abstention. We indicate the augmented set as $\wt \sY = \curl*{1, \ldots, n,
  n + 1}$ and consider a hypothesis set $\wt \sH$ comprising functions
that map from $\sX \times \wt \sY$ to $\Rset$.
The label assigned to an input $x \in \sX$ by $\wt h \in \wt \sH$ is
denoted as $\wt \hh(x)$. It is defined as $\wt \hh(x) = n + 1$ if $\wt
h(x, n + 1) \geq \max_{y \in \sY} \wt h(x, y)$; otherwise, $\wt
\hh(x)$ is determined as an element in $\sY$ with the highest score,
$\wt \hh(x) = \argmax_{y \in \sY} \wt h(x, y)$, using an arbitrary yet
fixed deterministic strategy to break ties. When $\wt \hh(x) = n + 1$,
the learner chooses to abstain from predicting for $x$ and incurs a
cost $c$. In contrast, it predicts the label $y = \wt \hh(x)$ if
otherwise. The \emph{score-based abstention loss} $\labsc$ for this
formulation is defined for any $\wt h \in \wt \sH$ and $(x, y) \in \sX
\times \sY$ as follows:
\begin{equation}
\label{eq:abs-score}
\labsc(\wt h, x, y)
= \1_{\wt \hh(x)\neq y}\1_{\wt \hh(x)\neq n + 1} + c \, \1_{\wt \hh(x) = n + 1}.
\end{equation}
Thus, as in the predictor-rejector context, when the learner does not
abstain ($\wt \hh(x) \neq n + 1$), it reduces to the familiar
zero-one classification loss. Conversely, when it abstains ($\wt
\hh(x) = n + 1$), it incurs the cost $c$. With a finite sample
drawn i.i.d.\ from $\sD$, the learning problem involves choosing a
hypothesis $\wt h$ within $\wt \sH$ that yields a minimal expected
score-based abstention loss, $\E_{(x, y) \sim \sD}[\labsc(\wt h, x,
  y)]$.

\textbf{Relationship between the two formulations.}
In the score-based formulation, rejection is defined by the condition
$\wt h(x, n + 1) - \max_{y \in \sY} \wt h(x, y) \geq 0$. Thus, a
predictor-rejector formulation with $(h, r) \in \sH \times \sR$ can be
equivalently formulated as a score-based problem with $\wt h$ defined
by $\wt h(x, y) = h(x, y)$ for $y \in \sY$ and $\wt h(x, n + 1) =
\max_{y \in \sY} h(x, y) - r(x)$: $\labsc(\wt h, x, y) = \labs(h, r,
x, y)$ for all $(x, y) \times \sX \times \sY$. Note, however, that
function $\wt h(\cdot, n + 1)$ is in general more complex.  As an
example, while $h$ and $r$ may both be in a family of linear
functions, in general, $\wt h(\cdot, n + 1)$ defined in this way is no
more linear. Thus, a score-based formulation might require working
with more complex functions. We further elaborate on the difference
between the two formulations in Appendix~\ref{app:difference}.
\ignore{Similarly, a score-based formulation
with $\wt h$ can be equivalently viewed as a predictor-rejector formulation
with $h$ being the restriction of $\wt h$ to $\sX \times \sY$ and
$r(x) = \max_{y \in \sY} \wt h(x, y) - \wt h(x, n + 1)$.}

\setlength{\intextsep}{0pt}
\setlength{\columnsep}{7pt}
\begin{wrapfigure}{Rt}{0.32\textwidth}
\begin{center}
\vskip -.2in
  \includegraphics[scale=0.28]{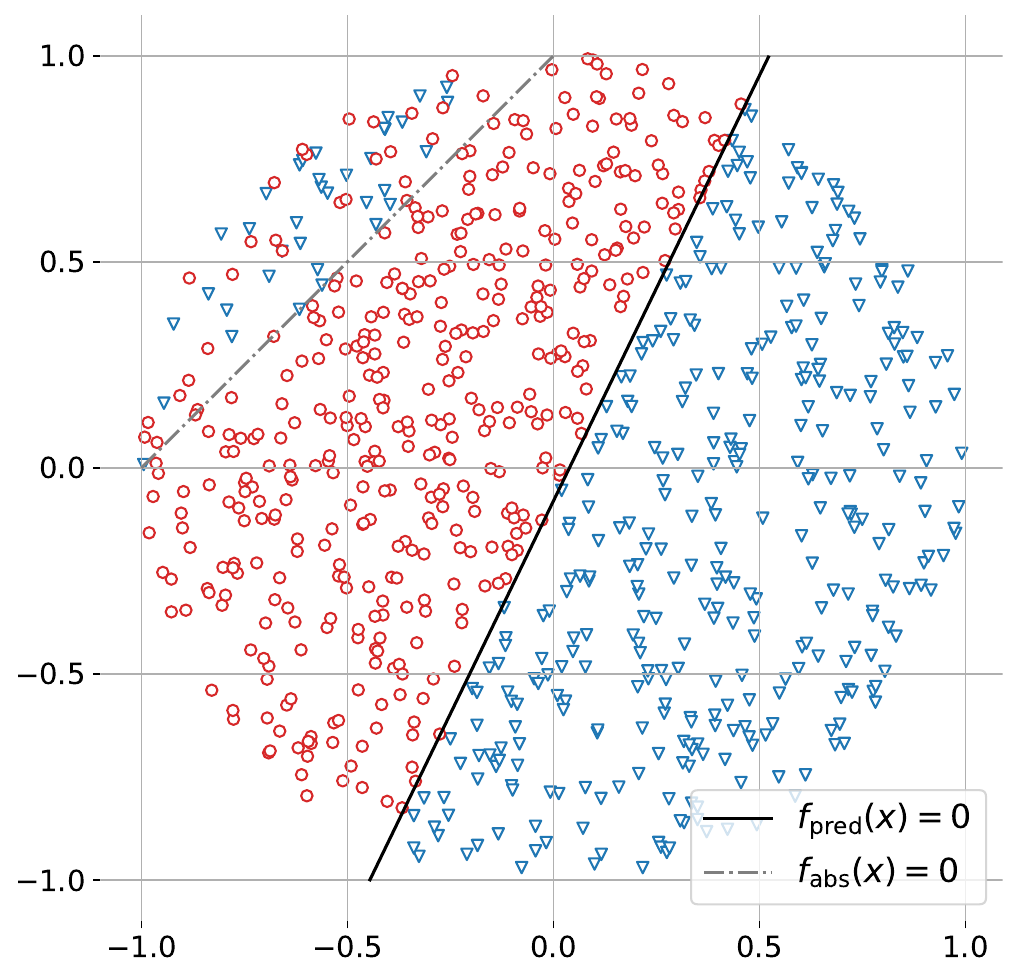}
\vskip -0.1in
\caption{Counterexample for score-based abstention losses.}
\vskip -0.15in
\label{fig:example}
\end{center}
\end{wrapfigure}

\textbf{Counterexample setting.} Let $\sY = \curl*{1, 2}$ and let $x$
follow the uniform distribution on the unit ball $\sfB_2(1) =
\curl*{x\colon \norm*{x}_2 \leq 1}$. We will consider the linear
models $f\in\sF_{\mathrm{lin}} = \curl*{x \rightarrow \bw\cdot x + \bb
  \mid \|\bw\|_2 = 1}$.  We set the label of a point $x$ as follows:
fix two linear functions $f_{\rm{abs}}(x) = \bw_{\rm{abs}}\cdot
x+\bb_{\rm{abs}}$ and $f_{\rm{pred}}(x) = \bw_{\rm{pred}}\cdot
x + \bb_{\rm{pred}}$ in $\sF_{\mathrm{lin}}$ , if $f_{\rm{abs}}(x)\leq
0$, then set $y = 1$ with probability $\frac12$ and $y = 2$ with
probability $\frac12$; if $f_{\rm{abs}}(x)> 0 \text{ and
}f_{\rm{pred}}(x) > 0$, then set $y = 1$; if $f_{\rm{abs}}(x) > 0 \text{
  and }f_{\rm{pred}}(x)\leq 0$, then set $y = 2$; see
Figure~\ref{fig:example}.

We denote by $\sH_{\mathrm{lin}}$ the hypothesis set of linear scoring
functions $h$ with two labels: $h(\cdot, 1)$ and $h(\cdot, 2)$
are in $\sF_{\rm{lin}}$ with the natural constraint $h(\cdot,1) +
h(\cdot,2) = 0$ as in \citet{lee2004multicategory}. We also denote by
$\wt \sH_{\mathrm{lin}}$ the linear hypothesis set of functions $\wt
h$ with three scores $\wt h(\cdot,1)$, $\wt h(\cdot, 2)$, $\wt
h(\cdot, 3)$ in $\sF_{\rm{lin}}$ with the same constraint $\wt
h(\cdot, 1) + \wt h(\cdot, 2)=0$ while $\wt h(\cdot, 3)$ is
independent of $\wt h(\cdot,1)$ and $\wt h(\cdot, 2)$. Note that here the
constraint is imposed only to simplify the analysis and is not
necessary for the counter-example to hold.  Thus, for any cost $c \in
\left[0,\frac12\right)$, the Bayes solution in this setting consists
  of abstaining on $\curl*{x\in \sfB_2(1):f_{\rm{abs}}(x)\leq 0}$ and
  otherwise making a prediction according to the decision surface
  $f_{\rm{pred}}(x) = 0$.

In the predictor-rejector formulation, the learner seeks to select a hypothesis $h$ in $\sH_{\rm{lin}}$
and a rejector $r$ in $\sF_{\rm{lin}}$ with small expected predictor-rejector
loss, $\E_{(x, y) \sim \sD}[\labs(h, r, x, y)]$. In the score-based abstention formulation, the learner seeks to select a hypothesis $\wt h$ in $\wt
\sH_{\rm{lin}}$ with small expected score-based abstention loss, $\E_{(x, y) \sim
  \sD}[\labsc(\wt h, x, y)]$. We will show that, in the predictor-rejector formulation, it is straightforward to find the Bayes solution but, in the score-based formulation, the same solution cannot be achieved, unless a more complex family of functions is adopted for $\wt h(\cdot, 3)$.

\textbf{Predictor-rejector formulation succeeds.} For the predictor-rejector abstention loss $\labs$, it is straightforward to see that for any cost $c\in \left[0,\frac12\right)$, the best-in-class predictor and rejector $h^*_{\sH_{\rm{lin}}}$ and $r^*_{\sF_{\rm{lin}}}$ can be expressed as follows:
$
h^*_{\sH_{\rm{lin}}}(\cdot, 1) = f_{\rm{pred}}(\cdot), \,h^*_{\sH_{\rm{lin}}}(\cdot, 2)  = -f_{\rm{pred}}(\cdot), \,r^*_{\sF_{\rm{lin}}}  = f_{\rm{abs}}
$.
Moreover, it is clear that $h^*_{\sH_{\rm{lin}}}$ and
$r^*_{\sF_{\rm{lin}}}$ match the Bayes solution.

\textbf{Score-based abstention formulation fails.} 
For the score-based abstention loss $\labsc$, for any cost $c\in \left[0,\frac12\right)$, the best-in-class classifier $\wt h^*_{\wt \sH_{\rm{lin}}}\!\!$ has the following form:
$
\wt h^*_{\wt \sH_{\rm{lin}}}(\cdot, 1) =  f_1(\cdot),\,
% \quad
\wt h^*_{\wt \sH_{\rm{lin}}}(\cdot, 2) = -f_1(\cdot),\,
% \quad
\wt h^*_{\wt \sH_{\rm{lin}}}(\cdot,3) = f_2(\cdot),
$
for some $f_1,f_2\in \sF_{\rm{lin}}$. Thus, $\wt h^*_{\wt
  \sH_{\rm{lin}}}\!\!$ abstains from making a prediction on
$\curl*{x\in \sfB_2(1):f_2(x)\leq \abs*{f_1(x)}}$ and otherwise predicts
according to the decision surface $f_1(x) = 0$. To match the Bayes
solution, $f_1$ must equal $f_{\rm{pred}}$ and $f_2$ must
satisfy the following condition:
$
  f_2(x) \geq \abs*{f_{\rm{pred}}(x)}
  \Leftrightarrow f_{\rm{abs}}(x) \leq 0,
$
which does not hold. Therefore,
unless we resort to more complex functions for $\wt h(\cdot,3)$, the
score-based formulation cannot result in the Bayes solution.

\section{Predictor-rejector surrogate losses}
\label{sec:general}

In this section, we present and analyze a new family of
surrogate loss functions $\sfL$ for $\labs$, first in 
the \emph{single-stage setting}, where the predictor $h$ and
the rejector $r$ are selected simultaneously, next in 
a \emph{two-stage setting}, where first the predictor $h$ 
is chosen and fixed and subsequently the rejector $r$ is determined. We give $(\sH,\sR)$-consistency bounds and guarantees for both settings. 

\subsection{Single-stage predictor-rejector surrogate losses}
\label{sec:single-stage}

In view of the expression of the predictor-rejector abstention loss $\labs(h, r, x, y)
= \1_{\hh(x) \neq y} \1_{r(x)> 0} + c\1_{r(x) \leq 0}$, if $\ell$ is a surrogate loss for the zero-one
multi-class classification loss over the set of labels $\sY$, then, $\sfL$ defined as follows is a natural surrogate loss for $\labs$: for all $(x, y) \in \sX \times \sY$,
\begin{equation}
\label{eq:sur-general}
\mspace{-0mu}
\sfL(h, r, x, y)
= \ell(h, x, y)\Phi\paren*{-\alpha r(x)} + \Psi(c) \Phi\paren*{\beta r(x)},
\mspace{-5mu}
\end{equation}
where $\Psi$ is a non-decreasing
function, $\Phi$ is a non-increasing auxiliary
function upper bounding $t \mapsto \1_{t \leq 0}$ and $\alpha$, $\beta$ are positive constants. The formulation \eqref{eq:sur-general} of
$\sfL$ is a multi-class generalization of the binary abstention
surrogate loss proposed in \citep{CortesDeSalvoMohri2016bis,CortesDeSalvoMohri2016}, where the binary margin-based loss $\wt\Phi(yh(x))$ and $\Psi(t) =t$ are used instead:
\begin{equation}
\label{eq:sur-general-binary}
\sfL_{\mathrm{bin}}(h, r, x, y) =
\wt\Phi(yh(x))\Phi\paren*{-\alpha r(x)} + c \Phi\paren*{\beta r(x)}.
\mspace{-1mu}
\end{equation}
Minimizing $\sfL_{\mathrm{bin}}$ with a regularization term was shown to achieve state-of-the-art results in the binary case with margin-based losses $\wt\Phi$ such as the exponential loss $\wt\Phi_{\rm{exp}}(t) = \exp(-t)$ and the hinge loss $\wt\Phi_{\rm{hinge}}(t) = \max\curl*{1 - t,0}$ \citep{CortesDeSalvoMohri2016bis,CortesDeSalvoMohri2016}. However, we will show below that its multi-class generalization $\sfL$ imposes a more stringent condition on the choice of the surrogate loss $\ell$, which rules out for example the multi-class exponential loss. We
will show, however, that several other loss
functions do satisfy that condition, for 
example the multi-class hinge loss.
In the following, for simplicity, we consider $\Phi(t) = \exp(-t)$ as in \citep{CortesDeSalvoMohri2016bis} for the main analysis, though a similar analysis can be given for other functions $\Phi$. We first present a negative result, ruling out 
surrogate losses $\sfL$ that are based on a loss $\ell$ that does not verify a certain
condition. 

As with \citep{awasthi2022multi}, we assume that the hypothesis sets are \emph{symmetric and complete}.
We say that a hypothesis set $\sG$ is \emph{symmetric} if there exists a family
$\sF$ of functions $f$ mapping from $\sX$ to $\Rset$ such that
$\curl*{\bracket*{g(x,1),\ldots,g(x,n)}\colon g\in
  \sG} = \curl*{\bracket*{f_1(x),\ldots, f_n(x)}\colon f_1, \ldots, f_n\in
  \sF}$, for any $x \in \sX$. We say that a hypothesis set $\sH$ is \emph{complete} if the set
of scores it generates spans $\Rset$, that is, $\curl*{g(x,y)\colon
  g\in \sG} = \Rset$, for any $(x, y)\in \sX \times \sY$. The hypothesis sets widely used in practice including linear models and multilayer feedforward neural networks are all symmetric and complete.

\begin{restatable}[\textbf{Negative result for single-stage surrogates}]{theorem}{NegativeBound}
\label{Thm:negative-bound}
Assume that $\sH$ is symmetric and complete, and that $\sR$ is
complete. If there exists $x \in \sX$ such that $\inf_{h \in \sH}
\E_y\bracket*{\ell(h,X, y) \mid X = x}\neq \frac{\beta \Psi \paren*{1
    - \max_{y\in \sY}p(x, y)}}{\alpha}$, then, there does not exist a
non-decreasing function $\Gamma\colon \Rset_{+} \to \Rset_{+}$ with the property
$\lim_{t\to 0^{+}}\Gamma(t) = 0$ such that the following $(\sH,
\sR)$-consistency bound holds: for all $h \in \sH$, $r \in \sR$, and
any distribution,
\ifdim\columnwidth=\textwidth
\begin{equation*}
\sE_{\labs}(h, r) - \sE_{\labs}^*(\sH, \sR) + \sM_{\labs}(\sH, \sR)
\leq \Gamma\paren*{\sE_{\sfL}(h, r) - \sE_{\sfL}^*(\sH, \sR) +
  \sM_{\sfL}(\sH, \sR)}.
\end{equation*}
\else
\begin{multline*}
\sE_{\labs}(h, r) - \sE_{\labs}^*(\sH, \sR) + \sM_{\labs}(\sH, \sR)
\\ \leq \Gamma\paren*{\sE_{\sfL}(h, r) - \sE_{\sfL}^*(\sH, \sR) +
  \sM_{\sfL}(\sH, \sR)}.
\end{multline*}
\fi
\end{restatable}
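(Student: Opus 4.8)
The plan is to prove the contrapositive: for any candidate $\Gamma$ I will produce a distribution and a sequence of pairs $(h_k, r_k) \in \sH \times \sR$ along which the right-hand side of the claimed inequality tends to $0$ while the left-hand side stays bounded below by a fixed $\delta > 0$; since $\lim_{t\to 0^+}\Gamma(t)=0$, this rules out every admissible $\Gamma$. The first step is to rewrite both sides as expected pointwise regrets. By the definition of the minimizability gap, $\sE_{\sfL}(h,r) - \sE_{\sfL}^*(\sH,\sR) + \sM_{\sfL}(\sH,\sR) = \E_x\bracket*{\cC_{\sfL}(h,r,x) - \cC_{\sfL}^*(x)}$, where $\cC_{\sfL}(h,r,x) = \E_y\bracket*{\sfL(h,r,X,y)\mid X=x}$ and $\cC_{\sfL}^*(x) = \inf_{h\in\sH, r\in\sR}\cC_{\sfL}(h,r,x)$, and similarly for $\labs$. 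Both sides thus become nonnegative pointwise conditional regrets, so it suffices to exhibit one conditional distribution at which the surrogate regret can be driven to $0$ while the abstention regret stays positive.

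Next I would carry out the pointwise minimization. Writing $u = r(x)$ and using $\Phi(t)=\exp(-t)$, the conditional surrogate risk is $\cC_{\sfL}(h,r,x) = \E_y\bracket*{\ell(h,X,y)\mid X=x}\,e^{\alpha u} + \Psi(c)\,e^{-\beta u}$. Minimizing first over $h$ (which only lowers the coefficient of $e^{\alpha u}$ to $\tau := \inf_{h\in\sH}\E_y\bracket*{\ell(h,X,y)\mid X=x}$) and then over $u$ yields a unique optimal rejector value $u^*$ with $e^{(\alpha+\beta)u^*} = \beta\Psi(c)/(\alpha\tau)$. Hence the surrogate's optimal rejector predicts ($u^*>0$) exactly when $\alpha\tau < \beta\Psi(c)$ and abstains ($u^*<0$) exactly when $\alpha\tau > \beta\Psi(c)$. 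On the abstention side, the conditional risk equals $1 - p(x,\hh(x))$ when $r(x)>0$ and $c$ when $r(x)\le 0$, so the Bayes-optimal decision predicts iff $c > \mathcal{B}$, where $\mathcal{B} := 1 - \max_{y}p(x,y)$, and $\cC_{\labs}^*(x) = \min(\mathcal{B}, c)$.

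The two decision rules agree for every cost precisely when $\alpha\tau = \beta\Psi(\mathcal{B})$; the hypothesis of the theorem is exactly the negation of this identity at some conditional vector, so the surrogate and Bayes thresholds $s := \Psi^{-1}(\alpha\tau/\beta)$ and $\mathcal{B}$ differ and there is a nonempty window of costs on which the surrogate's optimal rejector commits to the Bayes-suboptimal action. I would then build the counterexample distribution by concentrating the input marginal on a point realizing such a bad conditional vector, arranging the cost to lie strictly inside this window, i.e. between $s$ and $\mathcal{B}$ (possible precisely because the window is nonempty, and, under the general-cost extension, directly by setting $c(x)$ inside the window). Along the surrogate-minimizing sequence $(h_k,r_k)$ obtained by driving $\E_y\bracket*{\ell(h_k,X,y)\mid X=x}\to\tau$ and $r_k(x)\to u^*$ (feasible by symmetry and completeness), the pointwise surrogate regret tends to $0$, while $r_k(x)$ eventually has the sign of $u^*\ne 0$, so the abstention decision is fixed and Bayes-suboptimal, forcing the pointwise abstention regret to be at least $\abs{\mathcal{B}-c} =: \delta > 0$.

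Assembling these facts, the argument of $\Gamma$ tends to $0$ along the sequence while the left-hand side remains at least $\delta$, so a non-decreasing $\Gamma$ with $\lim_{t\to 0^+}\Gamma(t)=0$ would give $\delta \le \Gamma(\epsilon_k)$ with $\epsilon_k\to 0$, a contradiction. I expect the main obstacle to be converting a single calibration failure into a strictly positive abstention regret for the relevant cost: one must select the conditional probabilities (and, when the cost is held constant, locate the failure in a configuration where the fixed $c$ genuinely straddles $s$ and $\mathcal{B}$) and then verify that the subtracted terms $\cC_{\sfL}^*(x)$ and $\cC_{\labs}^*(x)$ are indeed the true pointwise infima over $\sH\times\sR$, so that both minimizability gaps are accounted for and the lower bound $\delta$ survives. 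The remaining steps, namely the closed-form conditional minimization and the limiting argument, are routine.
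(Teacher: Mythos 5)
Your argument is, at its core, the same as the paper's: both reduce the two sides of the bound to expected pointwise conditional regrets, minimize the conditional surrogate risk $\tau e^{\alpha u}+\Psi(c)e^{-\beta u}$ over $u=r(x)$ with $\tau=\inf_{h\in\sH}\E_y[\ell(h,X,y)\mid X=x]$, read off that the sign of the optimal $u$ is governed by $\alpha\tau$ versus $\beta\Psi(c)$, and compare with the Bayes abstention rule, which is governed by $c$ versus $1-\max_{y}p(x,y)$ (the paper's Lemma~\ref{lemma:calibration_gap_general}). The packaging differs: the paper argues by contradiction, asserting that the bound forces pointwise surrogate minimizers to align with pointwise abstention minimizers and then extracting the identity $\alpha\tau=\beta\Psi(c)$ on the boundary $\max_y p(x,y)=1-c$ from the sign of $\sF'(0)$ and convexity; you instead build an explicit one-point distribution and a minimizing sequence $(h_k,r_k)$ along which the surrogate regret vanishes while the abstention regret stays at least $\abs*{(1-\max_y p(x,y))-c}>0$. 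Your version makes explicit a step the paper leaves implicit (why misaligned pointwise minimizers actually violate the bound for some distribution), so on that point it is the more careful of the two.

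The one substantive caveat is the one you flag yourself: with a constant cost, your counterexample needs $c$ to lie strictly between $\Psi^{-1}(\alpha\tau/\beta)$ and $1-\max_y p(x,y)$, whereas the hypothesis only guarantees that these two thresholds differ at some conditional vector, not that the given fixed $c$ falls inside the resulting window. This is not a defect peculiar to your route: the paper's own derivation yields the required equality only for conditional vectors satisfying $\max_y p(x,y)=1-c$, so read literally it too contradicts the hypothesis only when the bad $x$ sits on that boundary. Both arguments close cleanly under the general cost-function extension the paper explicitly allows (choose $c(x)$ inside the window), which is evidently the intended reading, so I would treat this as a shared imprecision rather than a gap in your proposal. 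Do also dispose of the degenerate cases $\tau=0$ and $\Psi(c)=0$, where the conditional surrogate risk has no finite minimizer and the relevant rejector sign must be read off from a minimizing sequence rather than from a stationary point.
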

The proof (Appendix~\ref{app:general-negativ}) proceeds by
contradiction. Assuming that the $(\sH, \sR)$-consistency bound is
valid would entail that the pointwise best-in-class predictor and
best-in-class rejector for the single-stage surrogate loss align with
those of the abstention loss, which can be characterized by
Lemma~\ref{lemma:calibration_gap_general} in
Appendix~\ref{app:general}. Incorporating those explicit forms into
the analysis of the conditional risk of the surrogate loss leads to a
contradiction upon examination of its derivatives.

In view of Theorem~\ref{Thm:negative-bound}, to find a surrogate loss $\sfL$
that admits a meaningful $(\sH, \sR)$-consistency bound, we need to
consider multi-class surrogate losses $\ell$ for which the following
condition holds for any $x\in \sX$, for some $\Psi$ and pair
$(\alpha, \beta) \in \Rset_{\plus}^2$:
\begin{align*}
  \inf_{h \in \sH} \E_y\bracket*{\ell(h,X, y) \mid X = x}
  = \frac{\beta\Psi\paren*{1 - \max_{y\in \sY}p(x, y)}}{\alpha}.
\end{align*}
In the binary case, this condition is easily verifiable, as $\max_{y\in \sY}p(x, y)$ uniquely determines the other probabilities. However, in the multi-class scenario, a fixed $\max_{y\in \sY}p(x, y)$ still allows for variation in the other probabilities within $\inf_{h \in \sH} \E_y\left[\ell(h,X, y) \mid X = x\right]$. This leads to the difficulties in extending the binary framework to the multi-class classification.

Nevertheless, we will show that this necessary condition is satisfied by three common
multi-class surrogate losses $\ell$. Furthermore, we will prove $(\sH,
\sR)$-consistency bounds for the predictor-rejector surrogate losses
$\sfL$ based on any of these three choices of $\ell$ defined for all
$h \in \sH$ and $(x, y)$ as follows:

(i) The \emph{mean absolute error loss} \citep{ghosh2017robust}: $\ell_{\rm{mae}}(h, x, y) = 1
- \frac{e^{h(x, y)}}{\sum_{y'\in \sY}e^{h(x, y')}}$;

(ii) The \emph{constrained $\rho$-hinge loss}:
$\ell_{\rho-\mathrm{hinge}}(h, x, y) = \sum_{y'\neq
  y}\Phi_{\rho-\rm{hinge}}\paren*{-h(x, y')}$, $\rho > 0$, with
$\Phi_{\rho-\rm{hinge}}(t) = \max\curl[\big]{0, 1 - \frac{t}{\rho}}$
the $\rho$-hinge loss, and the constraint $\sum_{y\in \sY} h(x, y) =
0$.

(iii) The \emph{$\rho$-Margin loss}:
$\ell_{\rho}(h, x, y) = \Phi_{\rho}\paren*{\rho_h(x, y)}$, with
$\rho_h(x, y) = h(x, y) - \max_{y' \neq y} h(x, y')$ the confidence
margin and $\Phi_{\rho}(t) = \min\curl[\big]{\max\curl[\big]{0,1 -
    \frac{t}{\rho}},1}, \rho>0$ the $\rho$-margin loss.
    
\begin{restatable}[\textbf{$(\sH, \sR)$-consistency bounds for
      single-stage surrogates}]{theorem}{SpecificLossBound}
\label{Thm:spcific-loss-bound}
Assume that $\sH$ is symmetric and complete and $\sR$ is
complete. Then, for $\alpha=\beta$, and $\ell = \ell_{\rm{mae}}$, or
$\ell = \ell_{\rho}$ with $\Psi(t) = t$, or $\ell =
\ell_{\rho-\mathrm{hinge}}$ with $\Psi(t) = nt$, the following $(\sH,
\sR)$-consistency bound holds for all $h \in \sH$, $r \in \sR$ and any distribution:
\ifdim\columnwidth=\textwidth
\begin{equation*}
\sE_{\labs}(h, r) - \sE_{\labs}^*(\sH, \sR) + \sM_{\labs}(\sH, \sR)
\leq \Gamma\paren*{\sE_{\sfL}(h, r) - \sE_{\sfL}^*(\sH, \sR) +
  \sM_{\sfL}(\sH, \sR)},
\end{equation*}
\else
\begin{multline*}
\sE_{\labs}(h, r) - \sE_{\labs}^*(\sH, \sR) + \sM_{\labs}(\sH, \sR) \\
\leq \Gamma\paren*{\sE_{\sfL}(h, r)-\sE_{\sfL}^*(\sH, \sR) +\sM_{\sfL}(\sH, \sR)},
\end{multline*}
\fi
where $\Gamma(t) = \max\curl*{2n\sqrt{t}, n\,t}$ for $\ell=
\ell_{\rm{mae}}$; $\Gamma(t) = \max\curl*{2\sqrt{t}, t}$ for $\ell=
\ell_{\rho}$; and $\Gamma(t) = \max\curl*{2\sqrt{n t}, t}$ for $\ell=
\ell_{\rho-\rm{hinge}}$.
\end{restatable}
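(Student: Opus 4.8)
The plan is to reduce the claimed $(\sH,\sR)$-consistency bound to a single pointwise inequality between conditional regrets and then aggregate. Writing $\cC_{\sfL}(h,r,x) = \E_y\bracket*{\sfL(h,r,X,y)\mid X=x}$ for the conditional $\sfL$-risk, $\cC_{\sfL}^*(\sH,\sR,x) = \inf_{h\in\sH,r\in\sR}\cC_{\sfL}(h,r,x)$, and $\Delta\cC_{\sfL}(h,r,x) = \cC_{\sfL}(h,r,x) - \cC_{\sfL}^*(\sH,\sR,x)$ for the conditional regret (and likewise for $\labs$), the definition of the minimizability gap yields the identity $\sE_{\sfL}(h,r) - \sE_{\sfL}^*(\sH,\sR) + \sM_{\sfL}(\sH,\sR) = \E_x\bracket*{\Delta\cC_{\sfL}(h,r,x)}$, and the same for $\labs$. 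Thus it suffices to establish, for every $x$, a pointwise bound $\Delta\cC_{\labs}(h,r,x) \le \Gamma\paren*{\Delta\cC_{\sfL}(h,r,x)}$ and then lift it to the expectation-level statement through the general $\sH$-consistency reduction of \citet{awasthi2022multi}, using Jensen's inequality for the concave comparison function underlying $\Gamma$ together with its monotonicity.

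First I would compute the conditional risks and their minimizers. For $\labs$, predicting ($r(x)>0$) incurs $1 - p(x,\hh(x))$ while abstaining ($r(x)\le 0$) incurs $c$, so $\cC_{\labs}^*(\sH,\sR,x) = \min\curl*{q,c}$ with $q = 1 - \max_{y\in\sY}p(x,y)$. For $\sfL$ with $\Phi(t)=\exp(-t)$ and $\alpha=\beta$, one has $\cC_{\sfL}(h,r,x) = \ov\ell(h,x)\,e^{\alpha r(x)} + \Psi(c)\,e^{-\alpha r(x)}$, where $\ov\ell(h,x) = \E_y\bracket*{\ell(h,X,y)\mid X=x}$. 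Minimizing the convex map $s\mapsto \ov\ell(h,x)\,s + \Psi(c)/s$ over $s = e^{\alpha r(x)}>0$ gives $\inf_r \cC_{\sfL}(h,r,x) = 2\sqrt{\ov\ell(h,x)\,\Psi(c)}$, and minimizing over $h$ then produces $\cC_{\sfL}^*(\sH,\sR,x) = 2\sqrt{\Psi(q)\,\Psi(c)}$, using the necessary condition identified after Theorem~\ref{Thm:negative-bound}, namely $\inf_{h\in\sH}\ov\ell(h,x) = \Psi(q)$ (here with $\alpha=\beta$).

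Next I would verify this necessary condition and the companion standalone regret relationship for each of the three losses, which is where the loss-specific constants in $\Gamma$ originate. Using symmetry and completeness, direct computations give $\inf_h\ov\ell_{\rm{mae}}(h,x) = \inf_h\ov\ell_{\rho}(h,x) = q$ (hence $\Psi(t)=t$) and $\inf_h\ov\ell_{\rho-\mathrm{hinge}}(h,x) = nq$ (hence $\Psi(t)=nt$). In parallel I would relate the surrogate prediction regret $\ov\ell(h,x)-\Psi(q)$ to the zero-one prediction regret $\max_{y}p(x,y) - p(x,\hh(x))$: for $\ell_{\rho}$ one has $\ov\ell_\rho(h,x)\ge 1 - p(x,\hh(x))$ directly, giving a factor $1$; for $\ell_{\rm{mae}}$, writing $\ov\ell_{\rm{mae}}(h,x)-q = \sum_{y}s_y\paren*{\max_{y'}p(x,y') - p(x,y)}$ with $s$ the softmax of $h(x,\cdot)$ and bounding $s_{\hh(x)}\ge \tfrac1n$ yields a factor $n$; the $\rho$-hinge case is analogous.

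Finally I would establish the pointwise bound by a case analysis on the sign of $r(x)$ and of $q-c$. In the two correct-decision regimes, completing the square in $s=e^{\alpha r(x)}$ gives $\Delta\cC_{\sfL}(h,r,x)\ge 2\sqrt{\Psi(c)}\paren*{\sqrt{\ov\ell(h,x)}-\sqrt{\Psi(q)}}$, which, combined with the standalone prediction-regret relationship, controls the prediction part of $\Delta\cC_{\labs}$. In the two wrong-decision regimes (predicting when one should abstain, or the reverse), restricting $s$ to $(0,1]$ or to $(1,\infty)$ and using that $s\mapsto us+\Psi(c)/s$ is pointwise monotone in $u$ shows $\Delta\cC_{\sfL}(h,r,x)\ge \paren*{\sqrt{\Psi(q)}-\sqrt{\Psi(c)}}^2$, whence $\abs*{q-c}\le 2\abs*{\sqrt{q}-\sqrt{c}}$ converts this into a $\sqrt{\smash[b]{\Delta\cC_{\sfL}}}$ control of the rejection part. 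The hard part will be the regime where a prediction error and a rejection error occur simultaneously (predict while $q>c$): there I expect to need the sharper additive lower bound $\Delta\cC_{\sfL}(h,r,x)\ge \paren*{\ov\ell(h,x)-\Psi(q)} + \paren*{\sqrt{\Psi(q)}-\sqrt{\Psi(c)}}^2$, obtained by evaluating $us+\Psi(c)/s$ at the boundary $s=1$ of $(1,\infty)$, so that both error sources are dominated by a single $\Gamma\paren*{\Delta\cC_{\sfL}}$. Collecting the worst case over all regimes, and reconciling the linear and square-root contributions with the loss-specific constants, then assembles the stated $\max\curl*{\cdot\sqrt{t},\cdot\,t}$ shape of $\Gamma$.
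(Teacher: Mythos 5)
Your overall architecture matches the paper's proof: reduce to a pointwise inequality between calibration gaps via Jensen's inequality, compute $\sC^*_{\labs}$ and $\sC^*_{\sfL}(\sH,\sR,x)=2\sqrt{\Psi(q)\Psi(c)}$ with $q = 1-\max_y p(x,y)$, verify $\inf_{h}\ov\ell(h,x)=\Psi(q)$ and the companion regret inequality ($\ov\ell(h,x)-\Psi(q)\ge \frac1n(\max_y p(x,y)-p(x,\hh(x)))$ for $\ell_{\rm mae}$ via $s_{\hh(x)}\ge 1/n$, factor $1$ for the other two), and then split into four cases according to the signs of $r(x)$ and $q-c$. The case analysis for the wrong-decision regimes and the additive boundary bound at $s=1$ are essentially what the paper does.

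However, there is a genuine gap in the regime where the rejection decision is correct but the prediction may be wrong ($r(x)>0$ and $q\le c$, so $\Delta\sC_{\labs,\sH,\sR}(h,r,x)=\max_y p(x,y)-p(x,\hh(x))$). Your lower bound there, obtained by completing the square over all $s=e^{\alpha r(x)}>0$,
\begin{equation*}
\Delta\sC_{\sfL,\sH,\sR}(h,r,x)\;\ge\; 2\sqrt{\Psi(c)}\paren*{\sqrt{\ov\ell(h,x)}-\sqrt{\Psi(q)}},
\end{equation*}
carries a factor $\sqrt{\Psi(c)}$ that degenerates as $c\to 0$, so the best comparison function it yields is of the form $t\mapsto \frac{n}{\sqrt{\Psi(c)}}\,t$ rather than the $c$-free $\Gamma(t)=\max\curl*{2n\sqrt t,\,n t}$ claimed in the statement. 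Concretely, take $n=2$, $p(x)=(1,0)$, $\hh(x)=2$ with $h(x,1)=h(x,2)$, $r(x)$ slightly positive, and $c$ tiny: then $\Delta\sC_{\labs,\sH,\sR}=1$ while your lower bound on $\Delta\sC_{\sfL,\sH,\sR}$ is $O(\sqrt c)$, so no $c$-independent $\Gamma$ can be extracted from it. The paper avoids this by not minimizing over $r$ at all in this regime: it subtracts $\sC^*_{\sfL}=2\sqrt{\Psi(q)\Psi(c)}\le \Psi(q)e^{\alpha r(x)}+\Psi(c)e^{-\alpha r(x)}$ (AM--GM), leaving
\begin{equation*}
\Delta\sC_{\sfL,\sH,\sR}(h,r,x)\;\ge\;\paren*{\ov\ell(h,x)-\Psi(q)}e^{\alpha r(x)}\;\ge\;\ov\ell(h,x)-\Psi(q),
\end{equation*}
using $r(x)>0$, which gives the linear, $c$-free branch $n\,t$ of $\Gamma$. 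Replacing your completing-the-square step by this AM--GM step in that regime closes the gap; the rest of your plan goes through.
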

The theorem provides
strong guarantees for the predictor-rejector surrogate losses we
described in the single-stage setting. The technique used in the proof (Appendix~\ref{app:general-positive-single-stage})
is novel and requires careful analysis of various cases involving the pointwise
best-in-class predictor and rejector. This analysis is
challenging and needs to take into account the \emph{conditional risk} and \emph{calibration gap} (see Appendix~\ref{app:general}) of specific loss functions. The approach is
substantially different from the standard scenarios examined in
\citep{awasthi2022multi}, due to the simultaneous minimization of both
the predictor and rejector in the abstention setting. Discussions on Theorem~\ref{Thm:spcific-loss-bound} are given in Remark~\ref{remark:spcific-loss-bound} in Appendix~\ref{app:remark}. The following is a direct
consequence of Theorem~\ref{Thm:spcific-loss-bound} when $\sH$ and $\sR$ include all measurable functions,
since the minimizability gaps $\sM_{\labs}$ and $\sM_{\sfL}$ are
then zero.

\begin{restatable}[\textbf{Excess error bounds for single-stage surrogates}]{corollary}{SpecificLossBoundCor}
\label{cor:spcific-loss-bound}
For $\alpha=\beta$, and $\ell= \ell_{\rm{mae}}$, or $\ell = \ell_{\rho}$
with $\Psi(t)=t$, or
$\ell = \ell_{\rho-\mathrm{hinge}}$ with $\Psi(t)=nt$,
the following excess error bound holds for all $h\in \sH_{\rm{all}}$,
$r\in \sR_{\rm{all}}$ and any distribution:
\begin{equation*}
\sE_{\labs}(h, r) - \sE_{\labs}^*(\sH_{\rm{all}}, \sR_{\rm{all}}) 
 \leq \Gamma\paren*{\sE_{\sfL}(h, r)-\sE_{\sfL}^*(\sH_{\rm{all}}, \sR_{\rm{all}})},
\end{equation*}
where $\Gamma$ has the same form as in Theorem~\ref{Thm:spcific-loss-bound}.
\end{restatable}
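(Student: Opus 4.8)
The plan is to derive the corollary directly from Theorem~\ref{Thm:spcific-loss-bound} by specializing the hypothesis sets to all measurable functions and observing that both minimizability-gap terms vanish. First I would verify that the families $\sH_{\rm{all}}$ and $\sR_{\rm{all}}$ satisfy the hypotheses of the theorem: $\sH_{\rm{all}}$, consisting of all measurable scoring functions, is symmetric (take the underlying family $\sF$ to be all measurable functions from $\sX$ to $\Rset$) and complete (its scores span $\Rset$ at every point $(x, y)$), and $\sR_{\rm{all}}$ is likewise complete. Thus the theorem applies with $\sH = \sH_{\rm{all}}$ and $\sR = \sR_{\rm{all}}$, yielding the inequality with the stated $\Gamma$ but still carrying the two minimizability-gap terms $\sM_{\labs}(\sH_{\rm{all}}, \sR_{\rm{all}})$ and $\sM_{\sfL}(\sH_{\rm{all}}, \sR_{\rm{all}})$.

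The key step is then to show that both of these gaps are zero. For this I would invoke \citep[lemma~2.5]{steinwart2007compare}, which guarantees that the minimizability gap is null whenever the loss depends on $(h, r)$ only through $h(x, \cdot)$ and $r(x)$ and the hypothesis sets include all measurable functions. I would check both hypotheses explicitly. The abstention loss $\labs(h, r, x, y) = \1_{\hh(x) \neq y} \1_{r(x) > 0} + c\,\1_{r(x) \leq 0}$ depends on $(h, r)$ only through $\hh(x) = \argmax_{y} h(x, y)$ and $r(x)$, hence only through $h(x, \cdot)$ and $r(x)$. For each of the three choices $\ell = \ell_{\rm{mae}}$, $\ell = \ell_{\rho}$, or $\ell = \ell_{\rho-\rm{hinge}}$, the surrogate $\sfL(h, r, x, y) = \ell(h, x, y)\Phi\paren*{-\alpha r(x)} + \Psi(c)\Phi\paren*{\beta r(x)}$ likewise depends on $(h, r)$ only through $h(x, \cdot)$ and $r(x)$, since each $\ell(h, x, y)$ is a function of the score vector $h(x, \cdot)$ alone. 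The lemma then gives that each infimum over $\sH_{\rm{all}} \times \sR_{\rm{all}}$ of the expected loss coincides with the expectation of the corresponding pointwise infimum, so both $\sM_{\labs}(\sH_{\rm{all}}, \sR_{\rm{all}})$ and $\sM_{\sfL}(\sH_{\rm{all}}, \sR_{\rm{all}})$ equal zero.

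Finally I would substitute these two vanishing gaps into the theorem's bound, which collapses it to
\[
\sE_{\labs}(h, r) - \sE_{\labs}^*(\sH_{\rm{all}}, \sR_{\rm{all}})
\leq \Gamma\paren*{\sE_{\sfL}(h, r) - \sE_{\sfL}^*(\sH_{\rm{all}}, \sR_{\rm{all}})},
\]
valid for all $h \in \sH_{\rm{all}}$, $r \in \sR_{\rm{all}}$ and any distribution, with $\Gamma$ of the same form as in Theorem~\ref{Thm:spcific-loss-bound}. Since the derivation is purely a specialization of an already-proved bound, there is no substantial obstacle; the only point requiring care is confirming that each specific $\ell$ genuinely factors through $h(x, \cdot)$, so that the minimizability-gap lemma is applicable — a routine verification given the explicit definitions of $\ell_{\rm{mae}}$, $\ell_{\rho}$, and $\ell_{\rho-\rm{hinge}}$.
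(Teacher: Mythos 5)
Your proposal is correct and follows exactly the paper's route: the paper derives the corollary as a direct consequence of Theorem~\ref{Thm:spcific-loss-bound} applied to $\sH_{\rm{all}}$ and $\sR_{\rm{all}}$, noting that the minimizability gaps $\sM_{\labs}$ and $\sM_{\sfL}$ vanish for all measurable functions via \citep[lemma~2.5]{steinwart2007compare}, which is precisely your argument. Your additional verifications (symmetry and completeness of $\sH_{\rm{all}}$, and that each loss factors through $h(x,\cdot)$ and $r(x)$) are sound and simply make explicit what the paper leaves implicit.
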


The corollary resolves in a positive way the open question mentioned
by \citet{NiCHS19}. In fact it provides a stronger result since it
gives non-asymptotic excess error bounds for the three abstention
surrogate losses previously described. These are stronger guarantees
than Bayes-consistency of these loss functions, which follow
immediately by taking the limit.

It should be noted that our novel single-stage predictor-rejector
surrogate losses might present some challenges for optimization. This
is due to several factors: the difficulty of optimizing the mean
absolute error loss \citep{zhang2018generalized} (also see
Section~\ref{sec:experiments}), the restriction imposed by the
constrained hinge loss, which is incompatible with the standard use of
the softmax function in neural network hypotheses, and the
non-convexity of the $\rho$-margin loss. However, our primary
objective has been a theoretical analysis and the significance of
these surrogate losses lies in their novelty and strong guarantees. As
shown in Corollary~\ref{cor:spcific-loss-bound}, they are the first
Bayes-consistent surrogate losses within the predictor-rejector
formulation for multi-class abstention, addressing an open question in
the literature \citep{NiCHS19}.

\subsection{Two-stage predictor-rejector surrogate losses}
\label{sec:two-stage}

Here, we explore a two-stage algorithmic approach, for which we
introduce surrogate losses with more flexible choices of $\ell$ that
admit better optimization properties, and establish $(\sH,
\sR)$-consistency bounds for them.  This is a key scenario since, in
practice, often a large pre-trained prediction model is already
available (first stage), and retraining it would be prohibitively
expensive. The problem then consists of leaving the first stage
prediction model unchanged and of subsequently learning a useful
rejection model (second stage).

Let $\ell$ be a surrogate loss for standard multi-class classification
and $\Phi$ a function like the exponential function that determines a
margin-based loss $\Phi(yr(x))$ in binary classification, with $y \in
\curl*{-1, +1}$ for a function $r$. We propose a two-stage algorithmic
approach and a surrogate loss to minimize in the second stage: first,
find a predictor $h$ by minimizing $\ell$; second, with $h$ fixed,
find $r$ by minimizing $\ell_{\Phi, h}$, a surrogate loss function of
$r$ for all $(x, y)$, defined by
\begin{align}
\label{eq:ell-Phi-h}
\ell_{\Phi,h}\paren*{r, x, y}  
= \1_{\hh(x) \neq y} \Phi\paren*{-r(x)} + c \Phi\paren*{r(x)},
\end{align}
where $t \mapsto \Phi(t)$ is a non-increasing auxiliary function upper bounding $\1_{t \leq 0}$.
This algorithmic approach is straightforward since the first stage
involves the familiar task of finding a predictor using a standard
surrogate loss, such as logistic loss (or cross-entropy with softmax). The second stage is also relatively simple as $h$ remains
fixed and the form of $\ell_{\Phi, h}$ is uncomplicated, with $\Phi$
possibly being the logistic or exponential loss.
It is important to underscore that the judicious selection of the
indicator function in the initial term of \eqref{eq:ell-Phi-h} plays a
crucial role in guaranteeing that the two-stage surrogate loss
benefits from $(\sH, \sR)$-consistency bounds.  If a surrogate loss is
used in the first stage, this may not necessarily hold.

Note that in \eqref{eq:ell-Phi-h}, $h$ is fixed and only $r$ is learned by minimizing the surrogate loss corresponding to that $h$, while in contrast both $h$ and $r$ are jointly learned in the abstention loss \eqref{eq:abs}. We denote by $\ell_{\mathrm{abs}, h}$ the two-stage version of the abstention loss \eqref{eq:abs} with a fixed predictor $h$, defined as: for any $r \in \sR$, $x \in \sX$ and $y \in \sY$:
\begin{equation}
\label{eq:two-stage-abstention-loss}
\ell_{\mathrm{abs}, h} (r, x, y) = \1_{\hh(x) \neq y} \1_{r(x)> 0} + c \1_{r(x) \leq 0}.
\end{equation}
In other words, both $\ell_{\Phi,h}$ and $\ell_{\mathrm{abs}, h}$ are loss functions of the abstention function $r$, while $\labs$ is a loss function of the pair $(h, r) \in (\sH, \sR)$.

Define the binary zero-one classification loss as
$\ell_{0-1}^{\rm{binary}}(r, x, y) = \1_{y\neq \sign(r(x))}$, where
$\sign(t) = \1_{t>0} - \1_{t\leq 0}$.  As with the single-stage
surrogate losses, the two-stage surrogate losses benefit from strong
consistency guarantees as well. We first show that in the second stage
where a predictor $h$ is fixed, the surrogate loss function
$\ell_{\Phi,h}$ benefits from $\sR$-consistency bounds with respect to
$\ell_{\mathrm{abs}, h}$ if, $\Phi$ admits an $\sR$-consistency bound
with respect to the binary zero-one loss $\ell_{0-1}^{\rm{binary}}$.
\begin{restatable}[\textbf{$\sR$-consistency bounds for second-stage surrogates}]{theorem}{BoundGenralSecondStep}
\label{Thm:bound-general-second-step}
Fix a predictor $h$. Assume that $\Phi$ admits an $\sR$-consistency
bound with respect to $\ell_{0-1}^{\rm{binary}}$.  Thus, there exists
a non-decreasing concave function $\Gamma$ such that, for all $r \in
\sR$,
\begin{align*}
\sE_{\ell_{0-1}^{\rm{binary}}}(r) - \sE_{\ell_{0-1}^{\rm{binary}}}^*(\sR) + \sM_{\ell_{0-1}^{\rm{binary}}}(\sR)
& \leq \Gamma\paren*{\sE_{\Phi}(r)-\sE_{\Phi}^*(\sR) +\sM_{\Phi}(\sR)}.
\end{align*}
Then, the following $\sR$-consistency bound holds for all $r\in \sR$ and any distribution:
\begin{equation*}
\sE_{\ell_{\mathrm{abs}, h} }(r)-\sE_{\ell_{\mathrm{abs}, h} }^*(\sR) +\sM_{\ell_{\mathrm{abs}, h} }(\sR) \leq \Gamma\paren*{\paren*{\sE_{\ell_{\Phi, h} }(r)-\sE_{\ell_{\Phi, h} }^*(\sR) +\sM_{\ell_{\Phi, h}}(\sR)}/c}.
\end{equation*}
\end{restatable}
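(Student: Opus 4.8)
The plan is to exploit that the predictor $h$ is frozen in the second stage, so that $\ell_{\mathrm{abs}, h}$ and $\ell_{\Phi, h}$ are genuinely losses of the single real-valued function $r$, and to reduce the problem to a standard binary classification problem to which the assumed $\sR$-consistency bound for $\Phi$ applies. First I would note that, with $h$ fixed, both losses depend on $(x, y)$ only through $r(x)$ and the indicator $\1_{\hh(x) \neq y}$. Writing $q(x) = \E_y\bracket*{\1_{\hh(x) \neq y} \mid X = x}$ for the conditional error probability of the fixed predictor, the conditional risks $\cC_{L}(r, x) = \E_y\bracket*{L(r, X, y) \mid X = x}$ take the explicit forms $\cC_{\ell_{\mathrm{abs}, h}}(r, x) = q(x)\1_{r(x) > 0} + c\1_{r(x) \leq 0}$ and $\cC_{\ell_{\Phi, h}}(r, x) = q(x)\Phi(-r(x)) + c\Phi(r(x))$. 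The structural observation driving the proof is that both equal $(q(x) + c)$ times the conditional risks of, respectively, the binary zero-one loss and the binary $\Phi$-margin loss under the auxiliary binary distribution whose positive-class conditional probability is $\eta(x) = c/(q(x) + c)$, with the binary label encoding ``abstention is the correct action.'' Indeed $\eta(x)\Phi(r(x)) + (1 - \eta(x))\Phi(-r(x)) = \cC_{\ell_{\Phi, h}}(r, x)/(q(x) + c)$ and the analogous identity holds for the zero-one loss; moreover $\eta(x) > \tfrac12 \Leftrightarrow q(x) < c$, so the Bayes-optimal binary decision agrees with the optimal abstention decision.

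Next I would push this scaling through to the calibration gaps. Because $(q(x) + c)$ is a strictly positive constant in $r$, taking pointwise infima over $\sR$ commutes with it, giving $\Delta\cC_{\ell_{\mathrm{abs}, h}}(r, x) = (q(x) + c)\,\Delta\cC_{\ell_{0-1}^{\rm{binary}}}(r, x)$ and $\Delta\cC_{\ell_{\Phi, h}}(r, x) = (q(x) + c)\,\Delta\cC_{\Phi}(r, x)$, where $\Delta\cC_{L}(r, x) = \cC_L(r, x) - \inf_{r' \in \sR}\cC_L(r', x)$ and the binary gaps are taken under $\eta(x)$. I would then invoke the hypothesis. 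Since the assumed $\sR$-consistency bound for $\Phi$ with respect to $\ell_{0-1}^{\rm{binary}}$ holds for \emph{every} distribution, I can instantiate it at the point-mass distribution on $x$ carrying label law $\mathrm{Bernoulli}(\eta(x))$; there the minimizability gap vanishes and the bound collapses to the pointwise inequality $\Delta\cC_{\ell_{0-1}^{\rm{binary}}}(r, x) \leq \Gamma\paren*{\Delta\cC_{\Phi}(r, x)}$, valid for all $r \in \sR$ and all $x$.

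To assemble the result I would use the identity $\sE_{L}(r) - \sE_{L}^*(\sR) + \sM_{L}(\sR) = \E_x\bracket*{\Delta\cC_L(r, x)}$, which follows directly from the definition of the minimizability gap and applies to each of the three losses. The left-hand side of the target then equals $\E_x\bracket*{(q(x) + c)\,\Delta\cC_{\ell_{0-1}^{\rm{binary}}}(r, x)}$, while the right-hand argument equals $\E_x\bracket*{(q(x) + c)\,\Delta\cC_{\Phi}(r, x)}/c$. The division by $c$ is meant to be supplied by the uniform lower bound $q(x) + c \geq c$, which yields $\Delta\cC_{\Phi}(r, x) = \Delta\cC_{\ell_{\Phi, h}}(r, x)/(q(x) + c) \leq \Delta\cC_{\ell_{\Phi, h}}(r, x)/c$ and hence, since $\Gamma$ is non-decreasing, $\Delta\cC_{\ell_{0-1}^{\rm{binary}}}(r, x) \leq \Gamma\paren*{\Delta\cC_{\ell_{\Phi, h}}(r, x)/c}$. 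Passing from this pointwise estimate to the stated expected-form inequality would then use the concavity of $\Gamma$ together with Jensen's inequality to move the expectation inside $\Gamma$.

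The step I expect to be the main obstacle is reconciling the multiplicative factor $(q(x) + c)$ that sits \emph{outside} $\Gamma$ on the left with the clean scalar $1/c$ that sits \emph{inside} $\Gamma$ on the right: since $q(x) + c$ can exceed $1$, the naive bound $q(x) + c \leq 1 + c$ would only yield a weaker constant, so the factor must be absorbed by exploiting the concavity and the normalization $\Gamma(0) = 0$ of $\Gamma$ in concert with the lower bound $q(x) + c \geq c$ on the surrogate side. Carrying out this absorption so that exactly the factor $1/c$ survives — rather than a cruder prefactor — is the genuinely delicate part that distinguishes this argument from a routine binary reduction, and it is where the careful interplay between the conditional risk and the calibration gap of the second-stage losses must be exploited.
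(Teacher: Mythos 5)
Your reduction is essentially the paper's: normalize the conditional risks by $q(x)+c$, where $q(x)=\sum_{y}p(x,y)\1_{\hh(x)\neq y}$, to obtain an auxiliary binary distribution with positive-class probability $c/(q(x)+c)$; instantiate the assumed $\sR$-consistency bound for $\Phi$ at the corresponding point mass (this is exactly the paper's Lemma~\ref{lemma:aux}); use $q(x)+c\geq c$ to replace the normalizer by $c$ inside $\Gamma$; and finish with Jensen's inequality. Up to the step you flag, your argument is correct and matches the paper's line by line.

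The step you leave open is a genuine gap, and your suspicion that it cannot be closed by concavity alone is right. For concave $\Gamma$ with $\Gamma(0)=0$ one has $\Gamma(\lambda t)\leq\lambda\Gamma(t)$ for $\lambda\geq 1$, which is the wrong direction for absorbing a factor $q(x)+c>1$ sitting outside $\Gamma$ on the left: with $\Gamma(t)=\sqrt{t}$ the pointwise inequality you would need, $(q+c)\Gamma\paren*{\Delta^{\mathrm{bin}}_{\Phi}}\leq\Gamma\paren*{(q+c)\Delta^{\mathrm{bin}}_{\Phi}/c}$, reduces to $c(q+c)\leq 1$, which fails for $q$ and $c$ both close to $1$. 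What the paper does at this point is to write
\begin{equation*}
\sC_{\ell_{\mathrm{abs},h}}(r,x)-\sC^*_{\ell_{\mathrm{abs},h}}(\sR,x)
= p_1\1_{r(x)>0}+p_2\1_{r(x)\leq 0}-\inf_{r\in\sR}\paren*{p_1\1_{r(x)>0}+p_2\1_{r(x)\leq 0}},
\end{equation*}
that is, to identify the abstention calibration gap with the \emph{normalized} binary gap; this silently drops the factor $q(x)+c$ that you correctly keep, since the two sides actually differ by exactly that factor. So the paper's proof does not resolve your difficulty either — it steps over it with an equality that holds only when $q(x)+c=1$. The honest repair is the one the paper itself applies in the proof of Theorem~\ref{Thm:bound-general-two-step}: keep the factor outside, bound it by $1+c$, and use concavity to pull it out of the expectation, which establishes the statement with an additional prefactor $(1+c)$ in front of $\Gamma$; when $\Gamma$ is linear the factor $q(x)+c$ cancels between the two sides and the bound holds exactly as stated. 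In short, your proposal is faithful to the intended argument and correctly locates its weak point, but neither your write-up nor the paper's proves the theorem in the precise form stated without that extra constant.
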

The proof (Appendix~\ref{app:general-positive-second-stage}) consists
of analyzing the calibration gap of the abstention loss and
second-stage surrogate loss, for a fixed predictor $h$.  The
calibration gap here is more complex than that in the standard setting
as it takes into account the conditional probability, the error of
that fixed predictor and the cost, and thus requires a completely
different analysis. To establish $\sR$-consistency bounds, we need to
upper bound the calibration gap of the abstention loss by that of the
surrogate loss. However, directly working with them is rather
difficult due to their complex forms. Instead, a key observation is
that both forms share structural similarities with the calibration
gaps in the standard classification.  Motivated by the above
observation, we construct an appropriate conditional distribution to
transform the two calibration gaps into standard ones. Then, by
applying Lemma~\ref{lemma:aux} in Appendix~\ref{app:general}, we
manage to leverage the $\sR$-consistency bound of $\Phi$ with respect
to the binary zero-one classification loss to upper bound the target
calibration gap by that of the surrogate calibration gap.

We further discuss Theorem~\ref{Thm:bound-general-second-step} in
Remark~\ref{remark:bound-general-second-step}
(Appendix~\ref{app:remark}). In the special case where $\sH$ and $\sR$
are the family of all measurable functions, all the minimizability gap
terms in Theorem~\ref{Thm:bound-general-second-step} vanish. Thus, we
obtain the following corollary.
\begin{corollary}
\label{cor:tsr}
Fix a predictor $h$. Assume that
$\Phi$ admits an excess error bound with respect to  $\ell_{0-1}^{\rm{binary}}$.  Thus,
there exists a non-decreasing concave functions $\Gamma$
such that, for all $r \in \sR_{\rm{all}}$,
\begin{align*}
\sE_{\ell_{0-1}^{\rm{binary}}}(r) - \sE_{\ell_{0-1}^{\rm{binary}}}^*(\sR_{\rm{all}})
& \leq \Gamma\paren*{\sE_{\Phi}(r)-\sE_{\Phi}^*(\sR_{\rm{all}})}.
\end{align*}
Then, the following excess error bound holds for all $r\in \sR_{\rm{all}}$ and any distribution:
\begin{equation*}
\sE_{\ell_{\mathrm{abs}, h} }(r)-\sE_{\ell_{\mathrm{abs}, h} }^*(\sR_{\rm{all}}) \leq \Gamma\paren*{\paren*{\sE_{\ell_{\Phi, h} }(r)-\sE_{\ell_{\Phi, h} }^*(\sR_{\rm{all}})} / c}.
\end{equation*}
\end{corollary}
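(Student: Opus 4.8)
The plan is to obtain Corollary~\ref{cor:tsr} as a direct specialization of Theorem~\ref{Thm:bound-general-second-step} to the choice $\sR = \sR_{\rm{all}}$. The pivotal observation is that, with the predictor $h$ held fixed, each of the four loss functions entering the statement depends on the rejector $r$ only through the scalar value $r(x)$: the two-stage abstention loss $\ell_{\mathrm{abs}, h}(r, x, y) = \1_{\hh(x) \neq y}\1_{r(x) > 0} + c\1_{r(x) \leq 0}$ and the two-stage surrogate $\ell_{\Phi, h}(r, x, y) = \1_{\hh(x)\neq y}\Phi(-r(x)) + c\Phi(r(x))$ are functions of $r(x)$ alone for each $(x, y)$, and the same is trivially true of the binary zero-one loss $\ell_{0-1}^{\rm{binary}}$ and of the margin-based surrogate $\Phi$.

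First I would invoke the pointwise characterization of the minimizability gap: when a loss depends on $r$ solely through $r(x)$ and $\sR = \sR_{\rm{all}}$ is the family of all measurable functions, the infimum over $\sR$ commutes with the expectation over $x$, so the gap is null \citep[lemma~2.5]{steinwart2007compare}. Applied to each of the four losses, this yields $\sM_{\ell_{0-1}^{\rm{binary}}}(\sR_{\rm{all}}) = \sM_{\Phi}(\sR_{\rm{all}}) = \sM_{\ell_{\mathrm{abs}, h}}(\sR_{\rm{all}}) = \sM_{\ell_{\Phi, h}}(\sR_{\rm{all}}) = 0$.

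Next I would substitute $\sR = \sR_{\rm{all}}$ into Theorem~\ref{Thm:bound-general-second-step}. Setting $\sM_{\ell_{0-1}^{\rm{binary}}}(\sR_{\rm{all}})$ and $\sM_{\Phi}(\sR_{\rm{all}})$ to zero, the hypothesis of that theorem — the $\sR$-consistency bound relating $\Phi$ to $\ell_{0-1}^{\rm{binary}}$ — collapses exactly to the excess error bound assumed in the corollary, producing the same non-decreasing concave $\Gamma$. Likewise, setting $\sM_{\ell_{\mathrm{abs}, h}}(\sR_{\rm{all}})$ and $\sM_{\ell_{\Phi, h}}(\sR_{\rm{all}})$ to zero in the conclusion of the theorem collapses it to the claimed bound $\sE_{\ell_{\mathrm{abs}, h}}(r) - \sE_{\ell_{\mathrm{abs}, h}}^*(\sR_{\rm{all}}) \leq \Gamma\paren*{(\sE_{\ell_{\Phi, h}}(r) - \sE_{\ell_{\Phi, h}}^*(\sR_{\rm{all}}))/c}$, with the factor $1/c$ carried over unchanged from the theorem's conclusion.

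Since the argument reduces entirely to specializing an already-established result, there is no substantive obstacle; the only point demanding care is confirming that the minimizability gaps vanish, which rests on verifying that each loss depends on $r$ only through $r(x)$ once $h$ is fixed. Having checked this, the corollary follows immediately with the identical function $\Gamma$.
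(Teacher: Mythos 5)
Your proposal is correct and matches the paper's own derivation: the paper obtains Corollary~\ref{cor:tsr} precisely by specializing Theorem~\ref{Thm:bound-general-second-step} to $\sR = \sR_{\rm{all}}$ and noting that all minimizability gap terms vanish (by the same observation you make, that each loss depends on $r$ only through $r(x)$, per \citealp[lemma~2.5]{steinwart2007compare}). Your added care in verifying the gap-vanishing condition for each of the four losses is a sound elaboration of the same argument, not a different route.
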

See Remark~\ref{remark:tsr} (Appendix~\ref{app:remark}) for a
brief discussion of Corollary~\ref{cor:tsr}.

We now present $(\sH, \sR)$-consistency bounds for the whole two-stage
approach with respect to the abstention loss function $\labs$. Let
$\ell_{0-1}$ be the multi-class zero-one loss: $\ell_{0-1}(h, x, y) =
1_{\hh(x) \neq y}$. We will consider hypothesis sets $\sR$ that are
\emph{regular for abstention}, that is such that for any $x\in
\sX$, there exist $f, g \in \sR$ with $f(x)>0$ and $g(x)\leq 0$.
If $\sR$ is regular for abstention, then, for any $x$, there is an
option to accept and an option to reject.  \ignore{The next result
  shows that for those margin-based loss functions $\Phi$, their
  corresponding two-stage abstention surrogate losses admit
  $(\sH,\sR)$-consistency bounds with respect to the abstention loss
  $\labs$ as well.}
\begin{restatable}[\textbf{$(\sH, \sR)$-consistency bounds for two-stage
      approach}]{theorem}{BoundGenralTwoStep}
\label{Thm:bound-general-two-step} 
Suppose that $\sR$ is regular.  Assume that $\ell$ admits an
$\sH$-consistency bound with respect to $ \ell_{0-1}$ and that $\Phi$
admits an $\sR$-consistency bound with respect to
$\ell_{0-1}^{\rm{binary}}$.  Thus, there are non-decreasing concave
functions $\Gamma_1$ and $\Gamma_2$ such that, for all $h\in \sH$ and
$r \in \sR$,
\begin{align*}
\sE_{\ell_{0-1}}(h) - \sE_{\ell_{0-1}}^*(\sH) + \sM_{\ell_{0-1}}(\sH)
& \leq \Gamma_1\paren*{\sE_{\ell}(h)-\sE_{\ell}^*(\sH) +\sM_{\ell}(\sH)}\\
\sE_{\ell_{0-1}^{\rm{binary}}}(r) - \sE_{\ell_{0-1}^{\rm{binary}}}^*(\sR) + \sM_{\ell_{0-1}^{\rm{binary}}}(\sR)
& \leq \Gamma_2\paren*{\sE_{\Phi}(r)-\sE_{\Phi}^*(\sR) +\sM_{\Phi}(\sR)}.
\end{align*}
Then, the following $(\sH,\sR)$-consistency bound holds for all $ h\in \sH$, $r\in \sR$ and any distribution:
\begin{align*}
\sE_{\labs}(h, r) - \sE_{\labs}^*(\sH, \sR) + \sM_{\labs}(\sH, \sR)
& \leq \Gamma_1\paren*{\sE_{\ell}(h)-\sE_{\ell}^*(\sH) +\sM_{\ell}(\sH)}\\
& \quad + (1+c)\Gamma_2\paren*{\paren*{\sE_{\ell_{\Phi,h}}(r)-\sE_{\ell_{\Phi,h}}^*(\sR) +\sM_{\ell_{\Phi,h}}(\sR)}/c},
\end{align*}
where the
constant factors $(1 + c)$ and $\frac{1}{c}$ can be removed 
when $\Gamma_2$ is linear.
\end{restatable}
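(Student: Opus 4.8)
The plan is to reduce the left-hand side to an expected pointwise calibration gap and to split that gap, at each $x$, into a rejector contribution controlled by the fixed-predictor loss $\ell_{\mathrm{abs}, h}$ and a predictor contribution controlled by $\ell_{0-1}$. Writing the conditional $\labs$-risk as $\E_y[\labs(h, r, X, y) \mid X = x] = \paren*{1 - p(x, \hh(x))}\1_{r(x) > 0} + c\,\1_{r(x) \leq 0}$, the definition of the minimizability gap gives
$\sE_{\labs}(h, r) - \sE_{\labs}^*(\sH, \sR) + \sM_{\labs}(\sH, \sR) = \E_x\bracket*{\E_y[\labs(h, r, X, y) \mid X = x] - \inf_{h' \in \sH, r' \in \sR}\E_y[\labs(h', r', X, y) \mid X = x]}$,
so it suffices to bound the integrand pointwise.

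First I would compute the two infima. Abbreviate $a = 1 - p(x, \hh(x))$ and $b = 1 - \sup_{h' \in \sH} p(x, \hh'(x))$, so $a \geq b \geq 0$. Since $\sR$ is regular, at every $x$ there are rejectors of both signs, hence minimizing over $r'$ alone with $h$ fixed gives $\inf_{r' \in \sR}\E_y[\labs(h, r', X, y)\mid X = x] = \min\curl*{a, c}$; minimizing jointly and using that $t \mapsto \min\curl*{t, c}$ is nondecreasing (so it commutes with the infimum over $h'$) gives $\inf_{h' \in \sH, r' \in \sR}\E_y[\labs(h', r', X, y)\mid X = x] = \min\curl*{b, c}$. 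The integrand thus splits as the fixed-predictor regret $\bracket*{\E_y[\labs(h, r, X, y)\mid X = x] - \min\curl*{a, c}}$ plus the predictor penalty $\bracket*{\min\curl*{a, c} - \min\curl*{b, c}}$.

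Because $t \mapsto \min\curl*{t, c}$ is also $1$-Lipschitz, the penalty satisfies $\min\curl*{a, c} - \min\curl*{b, c} \leq a - b$, and $a - b = \E_y[\ell_{0-1}(h, X, y)\mid X = x] - \inf_{h' \in \sH}\E_y[\ell_{0-1}(h', X, y)\mid X = x]$ is exactly the conditional $\ell_{0-1}$-regret of $h$. Moreover $\E_x\bracket*{\min\curl*{a, c}} = \sE_{\ell_{\mathrm{abs}, h}}^*(\sR) - \sM_{\ell_{\mathrm{abs}, h}}(\sR)$, so integrating the pointwise inequality in $x$ yields
\begin{align*}
\sE_{\labs}(h, r) - \sE_{\labs}^*(\sH, \sR) + \sM_{\labs}(\sH, \sR)
&\leq \bracket*{\sE_{\ell_{\mathrm{abs}, h}}(r) - \sE_{\ell_{\mathrm{abs}, h}}^*(\sR) + \sM_{\ell_{\mathrm{abs}, h}}(\sR)} \\
&\quad + \bracket*{\sE_{\ell_{0-1}}(h) - \sE_{\ell_{0-1}}^*(\sH) + \sM_{\ell_{0-1}}(\sH)}.
\end{align*}
Applying Theorem~\ref{Thm:bound-general-second-step} to the first bracket (its hypothesis is precisely the assumed $\sR$-consistency bound for $\Phi$ with $\Gamma = \Gamma_2$) turns it into $\Gamma_2\paren*{\paren*{\sE_{\ell_{\Phi, h}}(r) - \sE_{\ell_{\Phi, h}}^*(\sR) + \sM_{\ell_{\Phi, h}}(\sR)}/c}$, and applying the assumed $\sH$-consistency bound of $\ell$ to the second turns it into $\Gamma_1\paren*{\sE_{\ell}(h) - \sE_{\ell}^*(\sH) + \sM_{\ell}(\sH)}$, giving the claimed form. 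This decomposition in fact produces leading constant $1$ on the $\Gamma_2$ term, which is subsumed by the stated $1 + c$ since $c > 0$ and $\Gamma_2 \geq 0$; the factors $1 + c$ and $1/c$ are artifacts of invoking Theorem~\ref{Thm:bound-general-second-step} in full generality and simplify in the linear case, as stated.

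I expect the main obstacle to lie in establishing the pointwise decomposition exactly rather than in any analytic estimate: one must verify, over all orderings of $a$, $b$, and $c$, that the joint infimum equals $\min\curl*{b, c}$ (this is where regularity of $\sR$ and the interchange of the two infima are essential) and that the fixed-predictor term, after integration, matches the left-hand side of Theorem~\ref{Thm:bound-general-second-step} exactly, so that the second-stage bound can be used as a black box. The single loss-specific step is the Lipschitz inequality $\min\curl*{a, c} - \min\curl*{b, c} \leq a - b$, which is what charges the predictor's suboptimality entirely to $\ell_{0-1}$ and hence to $\ell$ through $\Gamma_1$, cleanly separating the two stages.
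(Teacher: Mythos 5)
Your proof is correct and follows essentially the same route as the paper's: the identical pointwise decomposition of the calibration gap obtained by adding and subtracting $\min\{1-p(x,\hh(x)),\,c\}$, the same $1$-Lipschitz bound $\min\{a,c\}-\min\{b,c\}\leq a-b$ charging the predictor penalty to $\ell_{0-1}$, and the same reduction of the fixed-predictor rejector term to the binary problem (you invoke Theorem~\ref{Thm:bound-general-second-step} as a black box where the paper re-derives that step inline via Lemma~\ref{lemma:aux}). Your black-box use of the second-stage theorem in fact yields leading constant $1$ rather than $(1+c)$ on the $\Gamma_2$ term, which, as you correctly note, is subsumed by the stated bound since $\Gamma_2\geq 0$.
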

In the proof (Appendix~\ref{app:general-positive-two-stage}), we
express the pointwise estimation error term for the target abstention
loss as the sum of two terms. The first term represents the pointwise
estimation error of the abstention loss with a fixed $h$, while the
second term denotes that with a fixed $r^*$. This proof is entirely
novel and distinct from the approach used for a standard loss without
abstention in \citep{awasthi2022multi}. Discussions on
Theorem~\ref{Thm:bound-general-two-step} are given in
Remark~\ref{remark:bound-general-two-step} in
Appendix~\ref{app:remark}. As before, when $\sH$ and $\sR$ are the
family of all measurable functions, the following result on excess
error bounds holds.
\begin{corollary}
\label{cor:tshr}
Assume that $\ell$ admits an excess error bound with respect to $
\ell_{0-1}$ and that $\Phi$ admits an excess error bound with respect
to $\ell_{0-1}^{\rm{binary}}$.  Thus, there are non-decreasing concave
functions $\Gamma_1$ and $\Gamma_2$ such that, for all $h\in
\sH_{\rm{all}}$ and $r \in \sR_{\rm{all}}$,
\begin{align*}
\sE_{\ell_{0-1}}(h) - \sE_{\ell_{0-1}}^*(\sH_{\rm{all}})
& \leq \Gamma_1\paren*{\sE_{\ell}(h)-\sE_{\ell}^*(\sH_{\rm{all}})}\\
\sE_{\ell_{0-1}^{\rm{binary}}}(r) - \sE_{\ell_{0-1}^{\rm{binary}}}^*(\sR_{\rm{all}})
& \leq \Gamma_2\paren*{\sE_{\Phi}(r)-\sE_{\Phi}^*(\sR_{\rm{all}})}.
\end{align*}
Then, the following excess error bound holds for all $ h\in \sH_{\rm{all}}$ and $r\in \sR_{\rm{all}}$ and any distribution:
\begin{align*}
\sE_{\labs}(h, r) - \sE_{\labs}^*(\sH_{\rm{all}}, \sR_{\rm{all}})
& \leq \Gamma_1\paren*{\sE_{\ell}(h)-\sE_{\ell}^*(\sH_{\rm{all}})} + (1 + c)\Gamma_2\paren*{\paren*{\sE_{\ell_{\Phi,h}}(r)-\sE_{\ell_{\Phi,h}}^*(\sR_{\rm{all}})}/c},
\end{align*}
where the
constant factors $(1 + c)$ and $\frac{1}{c}$ can be removed 
when $\Gamma_2$ is linear.
\end{corollary}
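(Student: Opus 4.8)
The plan is to obtain Corollary~\ref{cor:tshr} as an immediate specialization of Theorem~\ref{Thm:bound-general-two-step} to the case $\sH = \sH_{\rm{all}}$ and $\sR = \sR_{\rm{all}}$, exploiting the fact that every minimizability gap appearing in that theorem vanishes once the hypothesis sets comprise all measurable functions. Concretely, I would first verify that each loss entering the theorem---namely $\labs$, $\ell$, $\ell_{0-1}$, $\Phi$, $\ell_{0-1}^{\rm{binary}}$, and $\ell_{\Phi,h}$---depends on a hypothesis only through its pointwise values $h(x,\cdot)$ and $r(x)$. Inspecting the definitions confirms this: for instance $\labs(h,r,x,y) = \1_{\hh(x)\neq y}\1_{r(x)>0} + c\,\1_{r(x)\leq 0}$ is a function of $h(x,\cdot)$ (through $\hh(x)$) and of $r(x)$ alone, and the same holds for the remaining losses. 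By the cited result \citep[lemma~2.5]{steinwart2007compare}, the minimizability gap of such a loss over the family of all measurable functions equals zero, so that $\sM_{\labs}(\sH_{\rm{all}},\sR_{\rm{all}})$, $\sM_{\ell}(\sH_{\rm{all}})$, $\sM_{\ell_{0-1}}(\sH_{\rm{all}})$, $\sM_{\Phi}(\sR_{\rm{all}})$, $\sM_{\ell_{0-1}^{\rm{binary}}}(\sR_{\rm{all}})$, and $\sM_{\ell_{\Phi,h}}(\sR_{\rm{all}})$ all vanish.

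With these gaps set to zero, the two hypotheses of Theorem~\ref{Thm:bound-general-two-step}---the $\sH$-consistency bound for $\ell$ and the $\sR$-consistency bound for $\Phi$---collapse exactly into the excess error bounds assumed in the corollary, with the same $\Gamma_1$ and $\Gamma_2$. I would also record the (trivial) verification that $\sR_{\rm{all}}$ is regular for abstention: for any $x$, the constant functions taking value $+1$ and $-1$ are measurable and lie in $\sR_{\rm{all}}$, furnishing an accept option and a reject option at $x$. Thus every premise of Theorem~\ref{Thm:bound-general-two-step} is met for $\sH = \sH_{\rm{all}}$, $\sR = \sR_{\rm{all}}$.

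Applying the theorem then yields a bound on $\sE_{\labs}(h,r) - \sE_{\labs}^*(\sH_{\rm{all}},\sR_{\rm{all}}) + \sM_{\labs}(\sH_{\rm{all}},\sR_{\rm{all}})$ in terms of $\Gamma_1\paren*{\sE_{\ell}(h) - \sE_{\ell}^*(\sH_{\rm{all}}) + \sM_{\ell}(\sH_{\rm{all}})}$ and $(1+c)\Gamma_2\paren*{\paren*{\sE_{\ell_{\Phi,h}}(r) - \sE_{\ell_{\Phi,h}}^*(\sR_{\rm{all}}) + \sM_{\ell_{\Phi,h}}(\sR_{\rm{all}})}/c}$. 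Substituting the vanishing minimizability gaps on both sides---$\sM_{\labs}(\sH_{\rm{all}},\sR_{\rm{all}}) = 0$ on the left, and $\sM_{\ell}(\sH_{\rm{all}}) = \sM_{\ell_{\Phi,h}}(\sR_{\rm{all}}) = 0$ inside the arguments of $\Gamma_1$ and $\Gamma_2$---reproduces the stated excess error bound verbatim, and the remark that the factors $(1+c)$ and $\tfrac{1}{c}$ may be dropped when $\Gamma_2$ is linear carries over unchanged. There is no substantive obstacle here; the only care required is the pointwise-dependence check that justifies the vanishing of the minimizability gaps---precisely the condition under which the cited lemma applies---and the bookkeeping ensuring that the $\sH$- and $\sR$-consistency hypotheses of the theorem indeed reduce to the excess-bound hypotheses of the corollary.
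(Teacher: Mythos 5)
Your proposal is correct and follows exactly the route the paper takes: the paper states Corollary~\ref{cor:tshr} as a direct consequence of Theorem~\ref{Thm:bound-general-two-step} obtained by taking $\sH = \sH_{\rm{all}}$ and $\sR = \sR_{\rm{all}}$, whereupon all minimizability gaps vanish and the consistency-bound hypotheses reduce to excess error bounds. Your added checks (pointwise dependence of each loss so that \citep[lemma~2.5]{steinwart2007compare} applies, and regularity of $\sR_{\rm{all}}$ via constant rejectors) are exactly the bookkeeping the paper leaves implicit.
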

See Remark~\ref{remark:tshr} (Appendix~\ref{app:remark}) for a
brief discussion of Corollary~\ref{cor:tshr}.

These results provide a strong guarantee for surrogate losses in the
two-stage setting. Additionally, while the choice of $\ell$ in the
single-stage setting was subject to certain conditions, here, the
multi-class surrogate loss $\ell$ can be chosen more
flexibly. Specifically, it can be selected as the logistic loss (or
cross-entropy with softmax), which is not only easier to optimize but
is also better tailored for complex neural networks. In the second
stage, the formulation is straightforward, and the choice of function
$\Phi$ is flexible, resulting in a simple smooth convex optimization
problem with respect to the rejector function $r$. Moreover, the
second stage simplifies the process as $h$ remains constant and only
the rejector is optimized. This approach can enhance optimization
efficiency. In Appendix~\ref{app:two-stage}, we further highlight the
significance of our findings regarding the two-stage formulation in
comparison with single-stage surrogate losses.

\subsection{Other advantages of the predictor-rejector formulation}
\label{sec:general-realizable}

In this section, we prove another advantage of our predictor-rejector
surrogate losses, that is \emph{realizable consistency}
\citep{long2013consistency, zhang2020bayes}. The property involves
\emph{$(\sH,\sR)$-realizable} distributions, \ignore{that are
  distributions}under which there exist $h^*\in \sH$ and $r^*\in \sR$
such that $\sE_{\labs}(h^*,r^*)=0$. The realizable distribution allows the optimal solution to abstain on points where the cost $c(x)$ is zero.
\begin{definition}
$\sfL$ is \emph{realizable $(\sH,
\sR)$-consistent} with respect to $\labs$ if, for any $(\sH,
\sR)$-realizable distribution, $\lim_{n \rightarrow +\infty}\sE_{\sfL}(h_n,r_n) =  \sE^*_{\sfL}(\sH,\sR) \implies \lim_{n \rightarrow +\infty}\sE_{\labs}(h_n,r_n) =  \sE^*_{\labs}(\sH,\sR)$.
\end{definition}  
In the following, we will establish that our predictor-rejector
surrogate losses, both the single-stage and two-stage variants, are
realizable $(\sH, \sR)$-consistent when $\sH$ and $\sR$ are
\emph{closed under scaling}. A hypothesis set $\sG$ is closed under
scaling if, $g \in \sG$ implies that $\nu g$ is also in $\sG$ for any
$\nu \in \Rset$.\ignore{ This property underscores the advantages of our
predictor-rejector formulation and is supported by the empirical
success of our proposed surrogate losses, as demonstrated in
Section~\ref{sec:experiments}.}
We will adopt the following mild assumption for the auxiliary function
$\Phi$ in the \ignore{both the single-stage and
  two-stage}predictor-rejector surrogate losses.
\begin{assumption}
\label{assumption:phi}
For any $t \in \Rset$, $\Phi(t) \geq \1_{t \leq 0}$ and $\lim_{t\to
  \plus \infty}\Phi(t) = 0$.
\end{assumption}

In other words, $\Phi$ upper bounds the indicator function and
approaches zero as $t$ goes to infinity.  \ignore{This is satisfied by
  common margin-based loss functions, e.g. the hinge function $t
  \mapsto \max(0, 1 - t)$ used in support vector machines, the
  exponential function $t \mapsto e^{-t}$ in Adaboost, and the
  logistic function $t \mapsto \log (1 + e^{-t})$ in logistic
  regression, etc.}We first prove a general result showing that 
single-stage predictor-rejector surrogate losses are realizable
$(\sH,\sR)$-consistent with respect to $\sfL_{\rm{abs}}$ if the
adopted $\ell$ satisfies Assumption~\ref{assumption:ell}.
\begin{assumption}
\label{assumption:ell}
When $h(x, y) - \max_{y' \neq y}h (x, y') > 0$, $\lim_{\nu \to \plus
  \infty}\ell(\nu h, x,y ) = 0$ and $\ell \geq \ell_{0-1}$.
\end{assumption}
The assumption implies that for a sample $(x, y)$ for which a predictor
$h$ achieves zero error, the infimum value of the loss function $\ell$
is zero for any hypothesis set $\sH$ that includes the
predictor $h$, provided that $\sH$ is closed under scaling.

\begin{restatable}{theorem}{SpecificLossBoundRealizable}
\label{Thm:spcific-loss-bound-realizable}
Assume that $\sH$ and $\sR$ are closed under scaling. Let $\Psi(0)=0$
and $\Phi$ satisfy Assumption~\ref{assumption:phi}. Then, for any $\ell$ that satisfies Assumption~\ref{assumption:ell},
the following $(\sH, \sR)$-consistency
bound holds for any $(\sH,\sR)$-realizable distribution, $h \in \sH$ and 
$r \in \sR$:
\begin{equation*}
\sE_{\labs}(h, r) - \sE_{\labs}^*(\sH, \sR)
\leq \sE_{\sfL}(h, r)-\sE_{\sfL}^*(\sH, \sR).
\end{equation*}
\end{restatable}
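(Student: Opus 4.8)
The plan is to leverage realizability to collapse both optimal risks to zero, reducing the claimed inequality to a pointwise comparison together with the vanishing of the surrogate's optimal risk. Since the distribution is $(\sH,\sR)$-realizable, there is a pair $(h^*,r^*)\in\sH\times\sR$ with $\sE_{\labs}(h^*,r^*)=0$, and since $\labs\geq 0$ this forces $\sE_{\labs}^*(\sH,\sR)=0$. The left-hand side is therefore exactly $\sE_{\labs}(h,r)$, so it suffices to establish two facts: (i) $\sE_{\labs}(h,r)\leq\sE_{\sfL}(h,r)$ for every $(h,r)\in\sH\times\sR$, and (ii) $\sE_{\sfL}^*(\sH,\sR)=0$. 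Granting both, the desired bound reads $\sE_{\labs}(h,r)-0\leq\sE_{\sfL}(h,r)-0$.

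For (i) I would prove the pointwise inequality $\labs(h,r,x,y)\leq\sfL(h,r,x,y)$ and integrate, splitting on the sign of $r(x)$. When $r(x)>0$, Assumption~\ref{assumption:phi} gives $\Phi(-\alpha r(x))\geq\1_{-\alpha r(x)\leq 0}=1$, so $\sfL(h,r,x,y)\geq\ell(h,x,y)\,\Phi(-\alpha r(x))\geq\ell(h,x,y)\geq\1_{\hh(x)\neq y}$ by the clause $\ell\geq\ell_{0-1}$ of Assumption~\ref{assumption:ell}, matching $\labs$; when $r(x)\leq 0$, the second summand dominates, $\sfL(h,r,x,y)\geq\Psi(c)\,\Phi(\beta r(x))\geq\Psi(c)\geq c$, the last step holding for the losses of interest where $\Psi(t)=t$ or $\Psi(t)=nt$. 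Taking expectations yields $\sE_{\labs}(h,r)\leq\sE_{\sfL}(h,r)$.

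The crux is (ii), where I would exploit the scaling closure of $\sH$ and $\sR$ to rescale the realizing pair, considering $(a\,h^*,b\,r^*)\in\sH\times\sR$ for scalars $a,b>0$. Realizability partitions $\sX$ (up to a null set) into an accept region $A=\curl*{x\colon r^*(x)>0}$, on which the label is deterministic with $\hh^*(x)$ correct and hence $h^*$ has positive margin, and an abstain region $B=\curl*{x\colon r^*(x)\leq 0}$, on which $c(x)=0$. On $A$, the argument $\beta b\,r^*(x)\to+\infty$ drives the second summand $\Psi(c)\,\Phi(\beta b\,r^*(x))\to 0$, while the positive-margin clause of Assumption~\ref{assumption:ell} gives $\ell(a\,h^*,x,\hh^*(x))\to 0$ as $a\to+\infty$. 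On $B$, the hypothesis $\Psi(0)=0$ annihilates the second summand entirely, and since $-\alpha b\,r^*(x)\geq 0$ there, the factor $\Phi(-\alpha b\,r^*(x))$ is driven to $0$ as $b\to+\infty$ by Assumption~\ref{assumption:phi}.

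The main obstacle is the single cross term $\ell(a\,h^*,x,\cdot)\,\Phi(-\alpha b\,r^*(x))$, whose two regions impose opposite scaling demands: on $A$ the factor $\Phi(-\alpha b\,r^*(x))$ grows with $b$ and must be defeated by $\ell\to 0$ (favoring $a$ large relative to $b$), whereas on $B$ the factor $\ell(a\,h^*,x,\cdot)$ may grow with $a$ and must be defeated by $\Phi(-\alpha b\,r^*(x))\to 0$ (favoring $b$ large relative to $a$). I would reconcile these with a coordinated (diagonal) choice $b=b(a)$ rather than a joint limit: for each fixed $b$ first send $a\to+\infty$ and apply dominated convergence so the $A$-contribution of the cross term vanishes, then let $b\to+\infty$ so the residual second-summand mass on $A$ and the first-summand mass on $B$ both vanish; the exponential decay of $\Phi(-\alpha b\,r^*)$ on $B$ outpaces the at-most-polynomial (indeed eventually-zero, for the hinge and margin losses) growth of $\ell(a\,h^*,\cdot)$, so a suitable diagonal gives $\sE_{\sfL}(a\,h^*,b\,r^*)\to 0$ and hence $\sE_{\sfL}^*(\sH,\sR)=0$. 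Combining (i) and (ii) with $\sE_{\labs}^*(\sH,\sR)=0$ yields $\sE_{\labs}(h,r)-\sE_{\labs}^*(\sH,\sR)\leq\sE_{\sfL}(h,r)-\sE_{\sfL}^*(\sH,\sR)$.
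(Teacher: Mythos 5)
Your proposal is correct and follows essentially the same route as the paper's proof: show that $\sfL$ pointwise dominates $\labs$, note that realizability forces $\sE^*_{\labs}(\sH,\sR)=0$, and drive $\sE^*_{\sfL}(\sH,\sR)$ to zero by scaling the realizing pair $(h^*,r^*)$ and splitting the expectation into the abstain region (where $c=0$ and $\Psi(0)=0$ annihilate the second summand) and the accept region (where the positive margin sends $\ell$ to zero), concluding via dominated convergence. The only notable difference is that the paper scales $h^*$ and $r^*$ by a single common parameter $\nu$ and simply asserts that the cross term $\ell(\nu h^*,x,y)\,\Phi(-\alpha\nu r^*(x))$ vanishes on each region, whereas you introduce separate scales $a,b$ with a diagonal choice to control the competing growth of the two factors — a more careful treatment of a step the paper glosses over (and which does need care when $\Phi$ is unbounded, e.g.\ $\Phi=\exp(-\cdot)$ on the accept region).
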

\ignore{
\begin{proof}
It is straightforward to see that $\sfL$ serves as an upper bound for $\labs$ when $\ell$ serves as an upper bound for $\ell_{0-1}$ under Assumption~\ref{assumption:phi}.
By definition, for any $(\sH,\sR)$-realizable distribution, there exists $h^*\in \sH$ and $r^*\in \sR$ such that $\sE_{\labs}(h^*,r^*) = \sE_{\labs}^*(\sH, \sR) = 0$. Then, by the assumption that $\sH$ and $\sR$ are closed under scaling, for any $\nu>0$,
\begin{align*}
\sE^*_{\sfL}(\sH,\sR)
&\leq\sE_{\sfL}(\nu h^*,\nu r^*)\\
&=\mathbb{E}\bracket*{\sfL(\nu h^*,\nu r^*,x,y)\mid r^*< 0}\mathbb{P}(r^*< 0) + \mathbb{E}\bracket*{\sfL(\nu h^*,\nu r^*,x,y)\mid r^*>0}\mathbb{P}(r^*> 0)
\end{align*}
Next, we investigate the two terms.
The first term is when $r^*< 0$, then we must have $c=0$ since the data is realizable. By taking the limit, we obtain:
\begin{align*}
&\lim_{\nu\to \plus\infty}\mathbb{E}\bracket*{\sfL(\nu h^*,\nu r^*,x,y)\mid r^*< 0}\mathbb{P}(r^*< 0)\\
&=\lim_{\nu\to \plus\infty}\mathbb{E}\bracket*{\ell(\nu h^*, x, y)\Phi\paren*{-\alpha \nu r^*(x)} + \Psi(c) \Phi\paren*{\beta \nu r^*(x)}\mid r^*< 0}\mathbb{P}(r^*< 0)\\
&=\lim_{\nu\to \plus\infty}\mathbb{E}\bracket*{\ell(\nu h^*, x, y)\Phi\paren*{-\alpha \nu r^*(x)}\mid r^*< 0}\mathbb{P}(r^*< 0) \tag{$c=0$ and $\Psi(0)=0$}\\
&=0. \tag{by the Lebesgue dominated convergence theorem and $\lim_{t\to \plus \infty}\Phi(t)=0$}
\end{align*}
The second term is when $r^*> 0$, then we must have $h^*(x,y) - \max_{y'\neq y}h^*(x,y') > 0$ since the data is realizable. Thus, using the fact that $\lim_{\nu\to \plus\infty}\ell(\nu h^*,x,y)=0$ and taking the limit, we obtain
\begin{align*}
&\lim_{\nu\to \plus\infty}\mathbb{E}\bracket*{\sfL(\nu h^*,\nu r^*,x,y)\mid r^*< 0}\mathbb{P}(r^*< 0)\\
&=\lim_{\nu\to \plus\infty}\mathbb{E}\bracket*{\ell(\nu h^*, x, y)\Phi\paren*{-\alpha \nu r^*(x)} + \Psi(c) \Phi\paren*{\beta \nu r^*(x)}\mid r^*< 0}\mathbb{P}(r^*< 0)\\
&=0. \tag{by the Lebesgue dominated convergence theorem, $\lim_{t\to \plus \infty}\Phi(t)=0$, $\lim_{\nu\to \plus\infty}\ell(\nu h^*,x,y)=0$}
\end{align*}
Therefore, by combining the above two analysis, we obtain
\begin{align*}
\sE^*_{\sfL}(\sH,\sR)\leq \lim_{\nu\to \plus\infty}\sE_{\sfL}(\nu h^*,\nu r^*)=0.
\end{align*}
By using the fact that $\sfL$ serves as an upper bound for $\labs$ and $\sE_{\labs}^*(\sH, \sR)=0$, we conclude that
\begin{equation*}
\sE_{\labs}(h, r) - \sE_{\labs}^*(\sH, \sR)
\leq \sE_{\sfL}(h, r)-\sE_{\sfL}^*(\sH, \sR).
\end{equation*}
\end{proof}}
The proof is included in Appendix~\ref{app:general-positive-single-stage-realizable}. We first establish upper bounds for $\sE_{\sfL}^*(\sH, \sR)$ using the optimal predictor and rejector for the abstention loss, subsequently expressing the upper bound as the sum of two terms.  By applying the Lebesgue dominated convergence theorem, we show that both terms vanish, and thus $\sE_{\sfL}^*(\sH, \sR) = 0$. It is worth noting that $(\sH, \sR)$-consistency bounds in Theorem~\ref{Thm:spcific-loss-bound} imply the realizable-consistency for considered loss functions. This is because under the realizable assumption, the minimizability gaps vanish for these loss functions. Nevertheless, Theorem~\ref{Thm:spcific-loss-bound-realizable} proves that a more general family of loss functions can actually achieve realizable consistency.

According to
Theorem~\ref{Thm:spcific-loss-bound-realizable}, for any distribution
that is $(\sH, \sR)$-realizable, minimizing the single-stage surrogate
estimation loss $\sE_{\sfL}(h, r)-\sE_{\sfL}^*(\sH, \sR)$ results in
the minimization of the abstention estimation loss $\sE_{\labs}(h, r)
- \sE_{\labs}^*(\sH, \sR)$. This suggests that the single-stage
predictor-rejector surrogate loss functions are realizable
$(\sH, \sR)$-consistent. In particular, when $\ell$ is chosen as $2\ell_{\rm{mae}}$, $\ell = \ell_{\rho}$ and $\ell =
\ell_{\rho-\mathrm{hinge}}$ as suggested in Section~\ref{sec:single-stage}, Assumption~\ref{assumption:phi} is satisfied. Thus, we obtain the following corollary.
\begin{corollary}
Under the same assumption as in Theorem~\ref{Thm:spcific-loss-bound-realizable}, for
$\ell= 2\ell_{\rm{mae}}$, $\ell = \ell_{\rho}$ and $\ell =
\ell_{\rho-\mathrm{hinge}}$,  the single-stage
predictor-rejector surrogate $\sfL$ is realizable $(\sH,
\sR)$-consistent with respect to $\labs$.

\ignore{the following $(\sH, \sR)$-consistency
bound holds for any $(\sH,\sR)$-realizable distribution, $h \in \sH$ and 
$r \in \sR$:
\begin{equation*}
\sE_{\labs}(h, r) - \sE_{\labs}^*(\sH, \sR)
\leq \sE_{\sfL}(h, r)-\sE_{\sfL}^*(\sH, \sR).
\end{equation*}}
\end{corollary}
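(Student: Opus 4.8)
The plan is to obtain the corollary as an immediate application of Theorem~\ref{Thm:spcific-loss-bound-realizable}: once I check that each of the three choices $\ell\in\curl*{2\ell_{\rm{mae}},\ell_{\rho},\ell_{\rho-\mathrm{hinge}}}$ meets the hypotheses of that theorem, the claimed realizable $(\sH,\sR)$-consistency is read off from its conclusion. The ambient hypotheses are already in force: $\sH$ and $\sR$ are closed under scaling by assumption; taking $\Phi(t)=\exp(-t)$ as in the main analysis, $\Phi$ satisfies Assumption~\ref{assumption:phi} since $\exp(-t)\ge\1_{t\le 0}$ and $\exp(-t)\to 0$ as $t\to\plus\infty$; and the associated $\Psi$ (equal to $2t$, $t$, and $nt$ respectively, from Theorem~\ref{Thm:spcific-loss-bound}) satisfies $\Psi(0)=0$ in all three cases. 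Hence the whole task reduces to verifying Assumption~\ref{assumption:ell} for each $\ell$, namely that $\ell\ge\ell_{0-1}$ and that $\lim_{\nu\to\plus\infty}\ell(\nu h,x,y)=0$ whenever $h(x,y)-\max_{y'\neq y}h(x,y')>0$.

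Granting Assumption~\ref{assumption:ell}, Theorem~\ref{Thm:spcific-loss-bound-realizable} yields, for every $(\sH,\sR)$-realizable distribution and all $h\in\sH$, $r\in\sR$, the bound $\sE_{\labs}(h,r)-\sE_{\labs}^*(\sH,\sR)\le \sE_{\sfL}(h,r)-\sE_{\sfL}^*(\sH,\sR)$. Realizable consistency then follows directly from the definition: for any sequence $(h_n,r_n)$ with $\sE_{\sfL}(h_n,r_n)\to\sE_{\sfL}^*(\sH,\sR)$, the right-hand side tends to $0$, and since the left-hand side is non-negative it is squeezed to $0$, giving $\sE_{\labs}(h_n,r_n)\to\sE_{\labs}^*(\sH,\sR)$. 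So no further work beyond the pointwise verification of Assumption~\ref{assumption:ell} is required.

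For that verification I would treat the three losses separately. For $\ell_{\rho}$, a strictly positive margin gives $\rho_{\nu h}(x,y)=\nu\,\rho_h(x,y)\to\plus\infty$, so $\Phi_{\rho}(\nu\,\rho_h(x,y))\to 0$, while on any misclassified point $\rho_h(x,y)\le 0$ forces $\Phi_{\rho}=1=\ell_{0-1}$, so $\ell_{\rho}\ge\ell_{0-1}$. For $2\ell_{\rm{mae}}$, under scaling the softmax weight of the strict-argmax label tends to $1$, so $\ell_{\rm{mae}}(\nu h,x,y)\to 0$; the factor $2$ is precisely what secures the lower bound, since on a misclassified point some competitor $y^*$ has $h(x,y^*)\ge h(x,y)$, whence the true-label softmax probability is at most $\frac12$ and $\ell_{\rm{mae}}\ge\frac12$, giving $2\ell_{\rm{mae}}\ge 1=\ell_{0-1}$. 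For $\ell_{\rho-\mathrm{hinge}}$, the lower bound is again easy: a misclassification produces a competitor $y^*$ that attains the maximum, so $h(x,y^*)\ge 0$ under the constraint $\sum_y h(x,y)=0$, and the corresponding hinge term $\max\curl*{0,1+h(x,y^*)/\rho}$ is at least $1$.

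The main obstacle is the vanishing-limit condition for the constrained hinge $\ell_{\rho-\mathrm{hinge}}$. Unlike the margin-based losses, scaling $\nu h$ drives a term $\max\curl*{0,1+\nu h(x,y')/\rho}$ to $0$ only when the competitor score $h(x,y')$ is strictly negative, whereas a positive margin at the true label merely guarantees $h(x,y)>h(x,y')$ and not $h(x,y')<0$ for every $y'\neq y$. For $n=2$ this gap disappears, since the constraint $h(x,1)+h(x,2)=0$ together with a positive margin forces the competing score negative. For $n>2$ I would close it by exploiting realizability rather than arguing for an arbitrary $h$: as in the proof of Theorem~\ref{Thm:spcific-loss-bound-realizable}, it suffices to exhibit one optimal pair $(h^*,r^*)$ whose off-label scores are negative on the acceptance region $\curl*{x:r^*(x)>0}$, which is available in the symmetric, complete hypothesis set used for the constrained hinge. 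This selection step, together with the MAE factor-$2$ rescaling, are the only genuinely delicate points; everything else is a routine limit computation backed by the dominated-convergence argument already carried out for Theorem~\ref{Thm:spcific-loss-bound-realizable}.
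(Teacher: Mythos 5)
Your overall route is exactly the paper's: the corollary is obtained by checking Assumption~\ref{assumption:ell} for each choice of $\ell$ and then reading off realizable $(\sH,\sR)$-consistency from the bound in Theorem~\ref{Thm:spcific-loss-bound-realizable} (the paper gives no further argument, merely asserting that the assumption holds for the three losses). Your verifications for $\ell_{\rho}$ and $2\ell_{\rm{mae}}$ are correct and complete, including the observation that the factor $2$ is what makes $2\ell_{\rm{mae}} \geq \ell_{0-1}$ on misclassified points, and the deduction of realizable consistency from the bound by a squeeze is exactly as intended.

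The genuine issue is the constrained $\rho$-hinge loss, and you have correctly diagnosed it: for $n>2$ the condition $h(x,y)-\max_{y'\neq y}h(x,y')>0$ together with $\sum_{y}h(x,y)=0$ does not force $h(x,y')<0$ for all $y'\neq y$ (e.g.\ scores $(2,1,-3)$), so $\ell_{\rho-\mathrm{hinge}}(\nu h,x,y)$ can diverge rather than vanish, and Assumption~\ref{assumption:ell} fails. This is in fact a gap in the paper's own claim, not only in your write-up. However, your proposed repair does not close it under the corollary's stated hypotheses: the corollary assumes only that $\sH$ and $\sR$ are closed under scaling, whereas your fix invokes a symmetric and complete hypothesis set and, more importantly, requires the \emph{existence} of a realizing pair $(h^*,r^*)$ whose off-label scores are strictly negative on the acceptance region $\curl*{x\colon r^*(x)>0}$. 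Realizability only guarantees some $h^*$ with $\sE_{\labs}(h^*,r^*)=0$; modifying it to have all competing scores negative while keeping it in $\sH$ and keeping $\hh^*$ unchanged is not implied by closure under scaling, nor automatically by symmetry and completeness. So either the corollary must be restricted to $\ell=2\ell_{\rm{mae}}$ and $\ell=\ell_{\rho}$ (or to $n=2$ for the hinge), or an explicit additional richness assumption on $\sH$ must be added for the $\ell_{\rho-\mathrm{hinge}}$ case; as written, your selection step is a sketch of a fix rather than a proof.
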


Next, we prove a similar result showing that the two-stage predictor-rejector surrogate losses are realizable $(\sH,\sR)$-consistent with respect to $\sfL_{\rm{abs}}$ if the multi-class surrogate loss $\ell$ is realizable $\sH$-consistent with respect to the multi-class zero-one loss $\ell_{0-1}$ when $\sH$ is closed under scaling.
\begin{definition}
We say that $\ell$ is realizable $\sH$-consistent with respect to $\ell_{0-1}$ if, for any distribution such that $\sE^*_{\ell_{0-1}}(\sH) = 0$, $\lim_{n \rightarrow +\infty}\sE_{\ell}(h_n) = \sE^*_{\ell}(\sH) \implies \lim_{n \rightarrow +\infty}\sE_{\ell_{0-1}}(h_n) =  \sE^*_{\ell_{0-1}}(\sH) = 0 $.
\end{definition}

\begin{restatable}{theorem}{BoundGenralTwoStepRealizable}
\label{Thm:bound-general-two-step-realizable}
Assume that $\sH$ and $\sR$ are closed under scaling.  Let $\ell$ be
any multi-class surrogate loss that is realizable $\sH$-consistent with respect to $\ell_{0-1}$ when $\sH$ is closed under scaling and $\Phi$ satisfies Assumption~\ref{assumption:phi}.  Let $\hat h$ be the minimizer of $\sE_{\ell}$ and $\hat r$ be the minimizer of $\sE_{\ell_{\Phi, \hat h}}$.
  Then, for any
$(\sH,\sR)$-realizable distribution, $\sE_{\labs}(\hat h, \hat r) = 0$.
\end{restatable}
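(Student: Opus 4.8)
My plan is to mirror the two-stage structure of the loss: prove a first-stage statement about $\hat h$, a second-stage statement about $\hat r$, and glue them together through the pointwise domination $\ell_{\Phi,\hat h}\ge \ell_{\mathrm{abs},\hat h}$. First I would record what realizability buys: there exist $h^*\in\sH$, $r^*\in\sR$ with $\sE_{\labs}(h^*,r^*)=0$, which forces, up to null sets, $\hh^*(x)=y$ on $\{r^*>0\}$ and $c(x)=0$ on $\{r^*\le 0\}$; in particular $\{c>0\}\subseteq\{r^*>0\}$. I would also observe the domination: Assumption~\ref{assumption:phi} gives $\Phi(-r)\ge \1_{r>0}$ and $\Phi(r)\ge \1_{r\le 0}$, so $\ell_{\Phi,\hat h}\ge \ell_{\mathrm{abs},\hat h}$ pointwise, and since $\ell_{\mathrm{abs},\hat h}(r,x,y)=\labs(\hat h,r,x,y)$ and $\labs\ge 0$, it suffices to prove $\sE_{\ell_{\Phi,\hat h}}(\hat r)=0$.

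For the first stage I would argue that $\hat h$ is an exact classifier, $\sE_{\ell_{0-1}}(\hat h)=0$. Since $\hat h$ minimizes $\sE_\ell$, applying the definition of realizable $\sH$-consistency to the constant sequence $h_n\equiv\hat h$ yields $\sE_{\ell_{0-1}}(\hat h)=\sE^*_{\ell_{0-1}}(\sH)=0$, provided the realizable distribution makes the first-stage problem realizable over $\sH$, that is $\sE^*_{\ell_{0-1}}(\sH)=0$. Writing $q(x)=\Pr(\hat{\hh}(x)\neq y\mid X=x)$, this gives $q(x)=0$ for almost every $x$, so $\hat{\hh}(x)=y$ almost surely.

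For the second stage, with $\hat h$ fixed, I would show $\sE^*_{\ell_{\Phi,\hat h}}(\sR)=0$ by exhibiting a scaled sequence whose loss vanishes, exactly as in the proof of Theorem~\ref{Thm:spcific-loss-bound-realizable}. Using that $\sR$ is closed under scaling, I consider $\nu r^*$ with $\nu\to+\infty$: since $q\equiv 0$ the first term $q(x)\Phi(-\nu r^*(x))$ is identically $0$, while the second term is $c(x)\Phi(\nu r^*(x))$, which tends to $0$ on $\{r^*>0\}$ because $\Phi(t)\to 0$ as $t\to+\infty$ and vanishes on $\{r^*\le 0\}$ because $c(x)=0$ there. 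The integrand is bounded by $\Phi(0)$, so by the Lebesgue dominated convergence theorem $\sE_{\ell_{\Phi,\hat h}}(\nu r^*)\to 0$; as $\ell_{\Phi,\hat h}\ge 0$ this forces $\sE^*_{\ell_{\Phi,\hat h}}(\sR)=0$. Because $\hat r$ attains this infimum, $\sE_{\ell_{\Phi,\hat h}}(\hat r)=0$, and the domination from the first paragraph gives $\sE_{\labs}(\hat h,\hat r)=0$.

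The main obstacle is the first stage, and specifically certifying $\sE^*_{\ell_{0-1}}(\sH)=0$: realizability only directly guarantees that $h^*$ predicts correctly on the acceptance region $\{r^*>0\}$, whereas realizable $\sH$-consistency requires a genuinely realizable classification instance over all of $\sX$. The delicate point is therefore to show that on the rejection region, where $c=0$ and the labels carry no constraint from $\sE_{\labs}(h^*,r^*)=0$, one can still certify a perfect classifier in $\sH$; an alternative is to weaken the first-stage conclusion to $q(x)\,c(x)=0$ almost everywhere and, in the second stage, reject wherever $\hat h$ errs, which replaces the scaling of $r^*$ by a rejector adapted to $\hat h$. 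Once $q\equiv 0$ is secured, the remaining scaling-and-domination argument is routine and closely parallels the single-stage realizable proof.
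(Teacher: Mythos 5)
Your domination step and your second-stage scaling/dominated-convergence argument match the paper's, but the first stage has a genuine gap---the very one you flag yourself. $(\sH,\sR)$-realizability of the abstention problem does \emph{not} imply $\sE^*_{\ell_{0-1}}(\sH)=0$: on the rejection region $\curl*{r^*\leq 0}$ only the cost is constrained to vanish, while the labels there may be arbitrarily noisy, so no classifier in $\sH$ (or anywhere) need achieve zero classification error over all of $\sX$. Since realizable $\sH$-consistency of $\ell$ is by definition applicable only to distributions with $\sE^*_{\ell_{0-1}}(\sH)=0$, it then tells you nothing about $\hat h$, and your conclusion $q\equiv 0$ fails; without it, the term $\E\bracket*{\1_{\hat{\hh}(x)\neq y}\Phi(-\nu r^*(x))}$ need not vanish on $\curl*{r^*>0}$. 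Your sketched fallback does not close the gap either: deducing $q(x)=0$ on the acceptance region alone would require a ``localized'' form of realizable $\sH$-consistency that the hypothesis does not supply, and a rejector in $\sR$ that rejects exactly where $\hat h$ errs need not exist, closure under scaling being the only structural assumption on $\sR$.

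The paper closes the gap with a dichotomy on $r^*$ rather than on $\hat h$. If $r^*$ abstains at some point, then---the cost being treated as a constant---realizability forces $c=0$; one then compares $\hat r$ against a rejector $r^{**}$ that abstains everywhere and lets $\nu\to+\infty$: the term $c\,\Phi(\nu r^{**}(x))$ is identically zero and the term $\1_{\hat{\hh}(x)\neq y}\Phi(-\nu r^{**}(x))$ tends to zero because $-\nu r^{**}(x)\to+\infty$, so no accuracy claim about $\hat h$ is ever needed in this branch. If instead $r^*$ abstains nowhere, realizability gives $\1_{\hh^*(x)\neq y}=0$ everywhere, so the classification problem genuinely satisfies $\sE^*_{\ell_{0-1}}(\sH)=0$; only in this branch is realizable $\sH$-consistency invoked to get $\1_{\hat{\hh}(x)\neq y}=0$, after which scaling $r^*$ (now positive everywhere) kills both terms. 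The constancy of the cost is what makes the first branch global; to salvage your structure you should perform this case analysis \emph{before} deciding whether any first-stage transfer to $\hat h$ is needed at all.
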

The proof is included in
Appendix~\ref{app:general-positive-two-stage-realizable}.  We first
establish the upper bound $\sE_{\labs}(\hat h, \hat r) \leq
\sE_{\ell_{\Phi, \hat h}}(\hat r)$. Next, we analyze two cases:
whether abstention occurs or not. By applying the Lebesgue dominated
convergence theorem, we show that $\sE_{\ell_{\Phi, \hat h}}(\hat r) =
0$ in both cases, consequently leading to $\sE_{\labs}(\hat h, \hat r)
= 0$.  By Theorem~\ref{Thm:bound-general-two-step-realizable}, under
the realizability assumption, minimizing a two-stage
predictor-rejector surrogate loss leads to zero abstention loss. This
implies that the two-stage predictor-rejector surrogate loss functions
are also realizable $(\sH,
\sR)$-consistent. \citet{KuznetsovMohriSyed2014} prove the realizable
$\sH$-consistency of a broad family of multi-class surrogate losses
including the logistic loss commonly used in practice. Thus, we obtain
the following corollary.
\begin{corollary}
\label{cor:bound-general-two-step-realizable}
Under the same assumption as in
Theorem~\ref{Thm:bound-general-two-step-realizable}, for $\ell$ being
the logistic loss, the two-stage predictor-rejector surrogate loss is
realizable $(\sH, \sR)$-consistent with respect to $\labs$.
\end{corollary}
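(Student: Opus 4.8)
The plan is to recognize that Corollary~\ref{cor:bound-general-two-step-realizable} follows from Theorem~\ref{Thm:bound-general-two-step-realizable} as soon as its single loss-dependent hypothesis is verified for the choice $\ell = $ logistic loss. The theorem already carries over the structural assumptions that $\sH$ and $\sR$ are closed under scaling and that $\Phi$ satisfies Assumption~\ref{assumption:phi}; the only remaining condition is that the logistic loss be realizable $\sH$-consistent with respect to $\ell_{0-1}$ whenever $\sH$ is closed under scaling. Once this is established, I would instantiate the theorem directly with this $\ell$.

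First I would verify the realizable $\sH$-consistency of the logistic loss. This is not re-derived here but is supplied by \citet{KuznetsovMohriSyed2014}, who establish realizable $\sH$-consistency for a broad family of multi-class surrogate losses of which the logistic loss (equivalently, cross-entropy composed with softmax) is a member. The intuition matches Assumption~\ref{assumption:ell}: when $\sH$ is closed under scaling, a correctly classified point $(x,y)$ with positive confidence margin $h(x,y) - \max_{y'\neq y} h(x,y') > 0$ admits rescaled hypotheses $\nu h$ whose margin diverges as $\nu \to +\infty$, so the per-example logistic loss tends to zero; aggregating over a realizable sample shows that driving $\sE_{\ell}$ to its infimum forces $\sE_{\ell_{0-1}}$ to its zero value.

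Given this, I would invoke Theorem~\ref{Thm:bound-general-two-step-realizable} verbatim. Let $\hat h$ minimize $\sE_{\ell}$ and let $\hat r$ minimize $\sE_{\ell_{\Phi, \hat h}}$. The theorem then yields $\sE_{\labs}(\hat h, \hat r) = 0$ for every $(\sH,\sR)$-realizable distribution. Since such a distribution admits $h^* \in \sH$ and $r^* \in \sR$ with $\sE_{\labs}(h^*, r^*) = 0$, we have $\sE_{\labs}^*(\sH, \sR) = 0$, so the attained value $\sE_{\labs}(\hat h, \hat r) = 0$ coincides with the best-in-class abstention error. Minimizing the two-stage surrogate through this staged procedure therefore recovers the optimal abstention loss, which is exactly the assertion of realizable $(\sH,\sR)$-consistency.

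I do not anticipate a genuine obstacle, since all the analytic content lives in Theorem~\ref{Thm:bound-general-two-step-realizable} and the hypothesis check is furnished by the cited result. The one point meriting care is the correct reading of realizable consistency in the two-stage setting: rather than a single surrogate $\sfL$, the procedure minimizes $\ell$ in the first stage and $\ell_{\Phi,\hat h}$ in the second, so the relevant assertion is that this staged minimization drives the abstention loss to its optimum. Theorem~\ref{Thm:bound-general-two-step-realizable} delivers precisely this, namely that the stage-wise minimizers attain $\sE_{\labs}(\hat h, \hat r) = 0$, and combined with $\sE_{\labs}^*(\sH, \sR) = 0$ under realizability, this is exactly realizable $(\sH,\sR)$-consistency.
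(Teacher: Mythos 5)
Your proposal is correct and follows essentially the same route as the paper: the paper likewise obtains the corollary by citing \citet{KuznetsovMohriSyed2014} for the realizable $\sH$-consistency of the logistic loss and then invoking Theorem~\ref{Thm:bound-general-two-step-realizable} to conclude that the stage-wise minimizers attain $\sE_{\labs}(\hat h,\hat r)=0=\sE^*_{\labs}(\sH,\sR)$. Your added remark that the zero value coincides with the best-in-class abstention error under realizability is a small but welcome clarification the paper leaves implicit.
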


Note that existing score-based abstention surrogate losses were shown
to be not realizable consistent in \citep{pmlr-v206-mozannar23a}.
Recall that in Section~\ref{sec:single-stage} and
Section~\ref{sec:two-stage}, the $(\sH, \sR)$-consistency bounds
guarantees (applicable to all distributions without any assumptions)
indicate that both our single-stage and two-stage predictor-rejector
surrogate losses are also Bayes-consistent, while it is unknown if the
surrogate loss proposed by \citet{pmlr-v206-mozannar23a} is.
By combining the results from Section~\ref{sec:single-stage},
Section~\ref{sec:two-stage} and Section~\ref{sec:general-realizable},
we demonstrate the advantages of the predictor-rejector
formulation. As a by-product of our results, we address two open
questions in the literature \citep{NiCHS19} and
\citep{pmlr-v206-mozannar23a} indicated in Section~\ref{sec:intro}.

\section{Experiments}
\label{sec:experiments}

\begin{table}[t]
% \vskip -0.1in
\caption{Abstention loss of our predictor-rejector surrogate losses against baselines:
the state-of-the-art score-based abstention surrogate losses in
\citep{mozannar2020consistent,caogeneralizing}.}
  \vskip -0.2in
    \label{tab:comparison}
\begin{center}
% \resizebox{\columnwidth}{!}{
    \begin{tabular}{@{\hspace{0pt}}lll@{\hspace{0pt}}}
    \toprule
      Dataset & Method & Abstention loss \\
    \toprule
    \multirow{4}{*}{SVHN} & \citep{mozannar2020consistent} &  1.61\% $\pm$ 0.06\% \\
     & \citep{caogeneralizing} & 2.16\% $\pm$ 0.04\%\\
     & single-stage predictor-rejector ($\ell_{\rm{mae}}$) &  2.22\% $\pm$ 0.01\% \\
     & \textbf{two-stage predictor-rejector}  & \textbf{0.94\% \!$\pm$ 0.02\%} \\
    \midrule
    \multirow{4}{*}{CIFAR-10} & \citep{mozannar2020consistent} &  4.48\% $\pm$ 0.10\% \\
     & \citep{caogeneralizing}  & 3.62\% $\pm$ 0.07\%  \\
     & single-stage predictor-rejector ($\ell_{\rm{mae}}$) & 3.64\% $\pm$ 0.05\% \\
     & \textbf{two-stage predictor-rejector}  & \textbf{3.31\% \!$\pm$ 0.02\%}   \\
    \midrule
    \multirow{4}{*}{CIFAR-100} & \citep{mozannar2020consistent} &  10.40\% $\pm$ 0.10\% \\
     & \citep{caogeneralizing}  & 14.99\% $\pm$ 0.01\%\\
     & single-stage predictor-rejector ($\ell_{\rm{mae}}$) & 14.99\% $\pm$ 0.01\% \\
     & \textbf{two-stage predictor-rejector} & \textbf{\phantom{0}9.23\% \!$\pm$ 0.03\%}  \\
    \bottomrule
    \end{tabular}
    % }
\end{center}
    \vskip -0.25in
\end{table}
In this section, we present experimental results for the single-stage
and two-stage predictor-rejector surrogate losses, as well as for the
state-of-the-art score-based abstention surrogate losses
\citep{mozannar2020consistent,caogeneralizing} on three popular
datasets: SVHN \citep{Netzer2011}, CIFAR-10 and CIFAR-100
\citep{Krizhevsky09learningmultiple}. Note that the basic
confidence-based approach has already been shown in
\citep{caogeneralizing} to be empirically inferior to state-of-the-art
score-based abstention surrogate losses. More details on the
experiments are included in Appendix~\ref{app:setup}.

\textbf{Results.}  In Table~\ref{tab:comparison}, we report the mean
and standard deviation of the abstention loss over three runs for our
algorithms and the baselines.
Table~\ref{tab:comparison} shows that our two-stage predictor-rejector
surrogate loss consistently outperforms the state-of-the-art
score-based abstention surrogate losses in
\citep{mozannar2020consistent,caogeneralizing} across all cases. The
single-stage predictor-rejector surrogate loss with $\ell$ set as the
mean absolute error loss achieves comparable results. Our
predictor-rejector surrogate losses, both the single-stage and
two-stage variants, benefit from $(\sH, \sR)$-consistency bounds and
realizable $(\sH, \sR)$-consistency guarantees. 
While the optimization of mean absolute error loss is known to be
challenging, as highlighted in the study by Zhang et al. (2018), our
two-stage algorithm sidesteps this hurdle since it can use the
more tractable logistic loss.

\section{Conclusion}

We presented a series of theoretical, algorithmic, and empirical
results for multi-class classification with abstention. Our
theoretical analysis, including proofs of $(\sH, \sR)$-consistency
bounds and realizable $(\sH, \sR)$-consistency, covers both single-stage
and two-stage predictor-rejector surrogate losses.
These results further provide valuable tools applicable to the
analysis of other loss functions in learning with abstention.

Our two-stage algorithmic approach provides practical and efficient
solutions for multi-class abstention across various tasks. This
approach proves particularly advantageous in scenarios where a large
pre-trained prediction model is readily available, and the expense
associated with retraining is prohibitive. Our empirical findings
corroborate the efficacy of these algorithms, further reinforcing
their practical usefulness.
Additionally, our work reveals some limitations of the 
score-based abstention formulation, such as its inability to
consistently yield optimal solutions in certain cases. In contrast, we
present a collection of positive outcomes for various families of
predictor-rejector surrogate loss functions. Importantly, our findings
also provide resolutions to two open questions within the literature.

We believe that our analysis and the novel loss functions we
introduced can guide the design of algorithms across a
broad spectrum of scenarios beyond classification with abstention.

%\acks{}

\bibliography{mabs}

\newpage
\appendix

\renewcommand{\contentsname}{Contents of Appendix}
\tableofcontents
\addtocontents{toc}{\protect\setcounter{tocdepth}{3}} 
\clearpage

%%%%%%%%%%%%%%%%%%%%%%%%%%%%%%%%%%%%%%%%%%%%%%%%%%%%%%%%%%%%%%%%%%%%%%%%%%%%%%%
%%%%%%%%%%%%%%%%%%%%%%%%%%%%%%%%%%%%%%%%%%%%%%%%%%%%%%%%%%%%%%%%%%%%%%%%%%%%%%%

\section{Related work}
\label{app:related-work}

Several broad methods for learning with abstention can be distinguished
in the literature: \emph{confidence-based methods}, which consist of
abstaining when the score returned by a pre-trained model falls below
some threshold
\citep{Chow1957,chow1970optimum,bartlett2008classification,
  yuan2010classification,WegkampYuan2011,ramaswamy2018consistent,NiCHS19}; \emph{selective
classification}, which analyzes a set-up with a \emph{predictor} and a
\emph{selector} and defines a selection risk or loss normalized by the expected selection or coverage
\citep{el2010foundations,wiener2011agnostic,el2012active,
  wiener2015agnostic,geifman2017selective,geifman2019selectivenet}; a
\emph{predictor-rejector formulation}, which is based on learning both
a \emph{predictor} and a \emph{rejector}, each from a different family
of functions, and that takes into account explicitly the cost of
abstention
\citep{CortesDeSalvoMohri2016,CortesDeSalvoMohri2016bis,CortesDeSalvoMohri2023,cheng2023regression,MohriAndorChoiCollinsMaoZhong2024learning,li2024no}; and a more
recent \emph{score-based formulation} that consists of augmenting the
multi-class categories with a rejection label and of abstaining when
the score assigned to the rejection label is the highest
\citep{mozannar2020consistent,caogeneralizing,MaoMohriZhong2024score}.  Another problem closely related to
abstention is that of \emph{deferring} to an alternative model, or
even to a human in some instances. This can also be considered as a
special case of the general abstention scenario and tackled
in a similar way
\citep{madras2018predict,raghu2019algorithmic,mozannar2020consistent,
  okati2021differentiable,wilder2021learning,verma2022calibrated,
  narasimhanpost,verma2023learning,mao2023two,cao2023defense,MaoMohriZhong2024deferral,chen2024learning,mao2024regression}.

The study of confidence-based methods was initiated by \citet{Chow1957,chow1970optimum} who explored the
trade-off between error rate and rejection rate, and also presented an
analysis of the Bayes optimal decision in this context. Later,
\citet{FumeraRoliGiacinto2000} proposed a multiple thresholds rule for
situations where a posteriori probabilities were impacted by
errors. \citet{Tortorella2001} introduced an optimal rejection rule
for binary classifiers, relying on the Receiver Operating
Characteristic (ROC) curve. Additionally, \citet{SantosPires2005}
compared their methodology with that of \citet{chow1970optimum}. Numerous publications have proposed various rejection techniques to
reduce the misclassification rate, though without theoretical analysis
\citep{FumeraRoli2002, Pietraszek2005, BounsiarGrallBeauseroy2007,
  LandgrebeTaxPaclikDuin2005, Melvin2008}. \citet{HerbeiWegkamp2005}
examined classification with a rejection option involving a cost and
provided excess error bounds for these ternary functions.  \citet{bartlett2008classification}
developed a loss function for this scenario that takes into account
the abstention cost $c$. They proposed learning a predictor using a
\emph{double hinge loss} and demonstrated its consistency
benefits. This approach has been further explored in several
subsequent publications \citep{GrandvaletKeshetRakotomamonjyCanu2008,
  yuan2010classification,WegkampYuan2011}. \citet{ramaswamy2018consistent} further examined
confidence-based abstention in multi-class classification, showing
that certain multi-class hinge loss formulations and a newly
constructed polyhedral binary-encoded predictions (BEP) surrogate loss
are Bayes-consistent. \citet{charoenphakdee2021classification}
suggested a cost-sensitive approach for multi-class abstention by
breaking down the multi-class problem into multiple binary
cost-sensitive classification problems
\citep{elkan2001foundations}. They introduced a family of
cost-sensitive one-versus-all surrogate losses, which are
Bayes-consistent in that context. \citet{narasimhan2023learning} investigated the connection between learning with abstention and out-of-distribution detection. They developed a plug-in method aimed at approximating the Bayes-optimal classifier, and demonstrated its application in the context of learning an out-of-distribution (OOD) aware classifier.

Selective classification methods were introduced by \citet{el2010foundations} who
investigated the trade-off between classifier coverage and
accuracy. In a follow-up study, \citet{wiener2011agnostic} developed a
strategy for learning a specific kind of selective classification
called weakly optimal, which has a diminishing rejection rate under
certain Bernstein-type conditions. Many successful connections to
selective classification have been established, including active learning
\citep{el2012active,wiener2015compression,wiener2015agnostic,puchkin2021exponential,denis2022active,zhu2022efficient},
multi-class rejection
\citep{tax2008growing,dubuisson1993statistical,le2010optimum},
reinforcement learning \citep{li2008knows}, online learning
\citep{zhang2016extended}, modern confidence-based rejection methods
\citep{geifman2017selective}, neural network architectures  \citep{geifman2019selectivenet}, loss functions based on
gambling's doubling rate \citep{ziyin2019deep}, disparity-free
approaches \citep{schreuder2021classification}, and the abstention
problem in a "confidence set" framework
\citep{gangrade2021selective,chzhen2021set}.

The predictor-rejector formulation was advocated by \citet*{CortesDeSalvoMohri2016} who contended that
confidence-based abstention is generally suboptimal, unless the
learned predictor is the Bayes classifier. They demonstrated that, in
most cases, no threshold-based abstention can achieve the desired
outcome. They proposed a new abstention framework that involves
learning both a predictor $h$ and a rejector $r$
\emph{simultaneously}, which can generally differ from a
threshold-based function. They defined a predictor-rejector
formulation loss function for the pair $(h, r)$, considering the
abstention cost $c$. The authors provided Rademacher complexity-based
generalization bounds for this learning problem and proposed various
surrogate loss functions for the binary classification abstention
loss. They demonstrated that these surrogate losses offered
consistency guarantees and developed algorithms based on these
surrogate losses, which empirically outperformed confidence-based
abstention benchmarks. This work led to several follow-up studies,
including a theoretical and algorithmic investigation of boosting with
abstention \citep{CortesDeSalvoMohri2016bis} and an analysis of
extending the results to a multi-class setting \citep{NiCHS19}. These
authors acknowledged the difficulty in designing calibrated or
Bayes-consistent surrogate losses based on the predictor-rejector
abstention by \citet{CortesDeSalvoMohri2016} and left it as an open
question. Furthermore, \cite{cheng2023regression} applied this framework in the context of regression with abstention, introducing Bayes-consistent surrogate losses.
\citet{MohriAndorChoiCollinsMaoZhong2024learning} examined the framework in the scenario of learning with a fixed predictor, where they proposed novel algorithms for decontextualization tasks. Additionally, \citet{li2024no} studied the Bayes-consistency of
no-rejection learning for regression with
abstention. 

\citet{mozannar2020consistent} introduced an alternative
\emph{score-based formulation} for multi-class abstention. In this
approach, besides the standard scoring functions associated with each
label, a new scoring function is linked to a new rejection
label. Rejection occurs when the score assigned to the rejection label
exceeds other scores, implicitly defining the rejector through this
specific rule. The authors proposed a surrogate loss for their method
based on cross-entropy (logistic loss with softmax applied to neural
network outputs), which they demonstrated to be Bayes-consistent. Building upon this work,
\citet{caogeneralizing} presented a more comprehensive collection of
Bayes-consistent surrogate losses for the score-based
formulation. These surrogate losses can be constructed using any
consistent loss function for the standard multi-class classification
problem. More recently, \citet{MaoMohriZhong2024score} gave an extensive analysis of surrogate losses for the score-based formulation supported by \emph{$\sH$-consistency bounds}. These are strong non-asymptotic and hypothesis set-specific consistency guarantees introduced by \citep{awasthi2022Hconsistency,awasthi2022multi}. They were later studied for cross-entropy-like loss functions \citep{MaoMohriZhong2023cross,zheng2023revisiting,MaoMohriZhong2023characterization}, ranking loss functions \citep{MaoMohriZhong2023ranking,MaoMohriZhong2023rankingabs}, structured prediction losses \citep{MaoMohriZhong2023structured}, regression losses \citep{mao2024regression}, and top-$k$ classification loss functions \citep{mao2024top}.
Additionally, these guarantees have also been used by \citep{AwasthiMaoMohriZhong2023theoretically,MaoMohriZhong2023cross} in the study of adversarial robustness \citep{goodfellow2014explaining,madry2017towards,tsipras2018robustness,carlini2017towards,awasthi2021calibration,awasthi2021finer,awasthi2021existence,awasthi2024dc}.

The problem of learning to defer is closely related to our study and
can be seen as a specific instance of learning with
abstention. Several recent publications have investigated this
problem, including \citep{madras2018predict, raghu2019algorithmic,
  raghu2019direct, mozannar2020consistent, okati2021differentiable,
  wilder2021learning, bansal2021most, verma2022calibrated,
  narasimhanpost, verma2023learning,mao2023two,cao2023defense,MaoMohriZhong2024deferral,chen2024learning,mao2024regression}. Confidence-based methods for
deferral decisions were examined by \citet{raghu2019direct,
  wilder2021learning, bansal2021most}, but these methods may not be
optimal for low capital models \citep{CortesDeSalvoMohri2016}. To
address this issue, \citet{mozannar2020consistent} proposed a
cost-sensitive logistic loss and \citet{verma2022calibrated} proposed
a cost-sensitive one-versus-all proper composite loss
\citep{reid2010composite}, both in the score-based
formulation. \citet{verma2023learning} generalized the surrogate loss
in \citep{verma2022calibrated} to the deferral setting with multiple
experts. Additionally,  \citet{MaoMohriZhong2024deferral} proposed a novel and broader family of surrogate losses derived from first principles for this context and proved $\sH$-consistency bounds for these loss functions.
Recently, \citet{narasimhanpost} pointed out that the
existing surrogate losses for learning to defer
\citep{mozannar2020consistent, verma2022calibrated} may underfit in
some practical settings and proposed a post-hoc correction for these
loss functions. Furthermore, \citet{mao2023two} explored a two-stage approach for learning to defer with multiple experts. In this approach, a predictor is initially learned using a standard loss function like cross-entropy, followed by the learning of a deferral function. They proposed a new family of surrogate losses and algorithms tailored for this important scenario, and provided $\sH$-consistency bounds guarantees for them. \citet{cao2023defense} proposed a new Bayes-consistent and asymmetric softmax-based surrogate for yielding valid estimates while avoiding the issue of unboundedness. \citet{chen2024learning} integrated deferral into a sequential decision-making model, resulting in enhanced theoretical convergence and better empirical performance. \cite{mao2024regression} formulated the problem of regression with multiple experts and proposed novel $\sH$-consistent surrogate losses for that problem.

We will be particularly interested in the predictor-rejector
formulation, which explicitly models the cost of abstention. The
selective classification of \citet{el2010foundations} is also
interesting, but it does not explicitly factor in the cost $c$ and is
based on a distinct objective. Confidence-based methods are also very
natural and straightforward, but they may fail when the predictor is
not calibrated, a property that often does not hold. Additionally,
they have been shown to be suboptimal when the predictor differs from
the Bayes classifier \citep{CortesDeSalvoMohri2016}. The score-based
formulation \citep{mozannar2020consistent} admits very common
properties and also explicitly takes into account the rejection cost
$c$.  We will compare the predictor-rejector formulation with the
score-based one. We will show via an example that the
predictor-rejector is more natural in some instances and will also
compare the two formulations in our experiments.
We further elaborate on the difference
between the two formulations in Appendix~\ref{app:difference}.

\section{Remarks on some key results}
\label{app:remark}

\begin{remark}
\label{remark:spcific-loss-bound}
When the best-in-class error coincides with the Bayes error $\sE^*_{\sfL}(\sH, \sR) = \sE^*_{\sfL}\paren*{\sH_{\rm{all}}, \sR_{\rm{all}}}$, the minimizability gaps $\sM_{\sfL}(\sH,\sR)$ vanish. In those cases, the  $(\sH,\sR)$-consistency bound in Theorem~\ref{Thm:spcific-loss-bound} guarantees that when the surrogate estimation error $\sE_{\sfL}(h, r) -  \sE_{\sfL}^*(\sH,\sR)$ is optimized up to $\e$, the estimation error of the abstention loss $ \sE_{\labs}(h, r) - \sE_{\labs}^*(\sH,\sR)$ is upper bounded by $\Gamma(\e)$. For all the three loss functions, when $\e$ is sufficiently small, the
dependence of $\Gamma$ on $\e$ exhibits a square root
relationship. However, if this is not the case, the dependence becomes
linear. Note that the dependence is subject to the number of classes $n$ for $\ell = \ell_{\rm{mae}}$ and
$\ell =
\ell_{\rho-\mathrm{hinge}}$.
\end{remark}

\begin{remark}
\label{remark:bound-general-second-step}
When the best-in-class error coincides with the Bayes error $\sE^*_{\ell}(\sR) = \sE^*_{\ell}\paren*{\sR_{\rm{all}}}$ for $\ell = \ell_{\Phi, h}$ and $\ell = \ell_{\mathrm{abs}, h}$, the minimizability gaps $\sM_{\ell_{\Phi, h}}(\sR)$ and $\sM_{\ell_{\mathrm{abs}, h}}(\sR)$ vanish. In those cases, the $\sR$-consistency bound in Theorem~\ref{Thm:bound-general-second-step} guarantees that when the surrogate estimation error $\sE_{\ell_{\Phi, h}}(r) -  \sE_{\ell_{\Phi, h}}^*(\sR)$ is optimized up to $\e$, the target estimation error $ \sE_{\ell_{\mathrm{abs}, h}}(r) - \sE_{\ell_{\mathrm{abs}, h}}^*(\sR)$ is upper bounded by $\Gamma(\frac{\e}{c})$.
\end{remark}

\begin{remark}
\label{remark:tsr}
Corollary~\ref{cor:tsr} shows that $\ell_{\Phi, h}$ admits an excess error bound with respect to $\ell_{\mathrm{abs}, h}$ with functional form $\Gamma(\frac{\cdot}{c})$ when $\Phi$ admits an excess error bound with respect to $\ell_{0-1}^{\rm{binary}}$ with functional form $\Gamma(\cdot)$.
\end{remark}

% \begin{remark}
% \label{remark:regular}
% \citet{awasthi2022Hconsistency} show that common margin-based losses such as exponential, logistic, hinge and sigmoid loss all admit $\sR$-consistency bounds with respect to the binary zero-one loss $\ell_{0-1}^{\rm{binary}}$, for regular hypothesis sets $\sR$ such as the family of linear hypotheses, the family of neural networks, and that of all measurable functions.
% \end{remark}

\begin{remark}
\label{remark:bound-general-two-step}
Note that the minimizability gaps vanish when $\sH$ and $\sR$ are
families of all measurable functions or when they include the Bayes
predictor and rejector. In their absence,
Theorem~\ref{Thm:bound-general-two-step} shows that if the estimation
prediction loss $(\sE_{\ell}(h)-\sE_{\ell}^*(\sH))$ is reduced to
$\e_1$ and the estimation rejection loss
$(\sE_{\ell_{\Phi,h}}(r)-\sE_{\ell_{\Phi,h}}^*(\sR))$ to $\e_2$, then
the abstention estimation loss $(\sE_{\labs}(h, r) -
\sE_{\labs}^*(\sH, \sR))$ is, up to constant factors, bounded by
$\Gamma_1(\e_1) + \Gamma_2(\e_2)$.
\end{remark}

\begin{remark}
\label{remark:tshr}
Corollary~\ref{cor:tshr} shows that the proposed two-stage approach
admits an excess error bound with respect to $\labs$ with functional
form $\Gamma_1(\cdot) + (1 + c)\Gamma_2(\frac{\cdot}{c})$ when $\ell$
admits an excess error bound with respect to $\ell_{0-1}$ with
functional form $\Gamma_1(\cdot)$ and $\Phi$ admits an excess error
bound with respect to $\ell_{0-1}^{\rm{binary}}$ with functional form
$\Gamma_2(\cdot)$.
\end{remark}

\section{Significance of two-stage formulation compared
  with single-stage losses}
\label{app:two-stage}

Here, we wish to further highlight the significance of our findings
regarding the two-stage formulation. The $(\sH, \sR)$-consistency
bounds we established for this scenario directly motivate an algorithm
for a crucial scenario. As already indicated, in applications, often a
prediction function is already available and has been trained using a
standard loss function such as cross-entropy. Training may take days
or months for some large models. The cost of a one-stage approach,
which involves "retraining" to find a pair $(h, r)$ with a new $h$,
can thus be prohibitive. Instead, we demonstrate that a rejector $r$
can be learned using a suitable surrogate loss function based on the
existing predictor $h$ and that the solution formed by the existing
$h$ and this rejector $r$ benefits from $(\sH, \sR)$-consistency
bounds.

Both our one-stage and two-stage solutions using our surrogate losses
benefit from strong $(\sH, \sR)$-consistency bounds: in the limit of
large samples, both methods, one-stage and two-stage converge to the
same joint minimizer of the target abstention loss. However, as
already emphasized, the two-stage approach is advantageous in some
scenarios where a predictor $h$ is already available.  Moreover, the
two-stage solution is more beneficial from the optimization point of
view: the first-stage optimization can be standard and based on say
cross-entropy, and the second stage is based on a loss function
\eqref{eq:ell-Phi-h} that is straightforward to minimize. In contrast,
the one-stage minimization with the MAE loss is known to be more
difficult, see \citep{zhang2018generalized}. In
Section~\ref{sec:experiments}, our empirical results show a more
favorable performance for the two-stage solution, which we believe
reflects this difference in optimization.

\section{Difference between predictor-rejector and score-based formulations}
\label{app:difference}
Here, we hope to further emphasize our contributions by pointing out that the score-based formulation does not offer a direct loss function applicable to the predictor-rejector formulation. It is important to emphasize that the hypothesis set used in the score-based setting constitutes a subset of real-valued functions defined over $ \sX \times \tilde \sY$, where $\tilde \sY$ is the original label set $\sY$ augmented with an additional label corresponding to rejection. In contrast, the predictor-rejector function involves selecting a predictor $h$ out of a collection of real-valued functions defined over $\sX \times \sY$ and a rejector $r$ from a sub-family of real-valued functions defined over $\sX$. Thus, the hypothesis sets in these two frameworks differ entirely. Consequently, a score-based loss function cannot be directly applied to the hypothesis set of the predictor-rejector formulation.

One can instead, given the hypothesis sets $\sH$ and $\sR$ for the predictor and rejector functions in the predictor-rejector formulation, define a distinct hypothesis set $\tilde \sH$ of real-valued functions defined over $\sX \times \tilde \sY$. Functions $\tilde h\in\tilde \sH$ are defined from a pair $(h, r)\in \sH \times \sR$. The score-based loss function for $\tilde h\in \tilde \sH$ then coincides with the predictor-rejector loss of $(h, r)$. However, the family $\tilde \sH$ is complex. As pointed out in Section~\ref{sec:score-example}, for instance, when $\sH$ and $\sR$ are families of linear functions, $\tilde \sH$ is not linear and is more complex.  

Moreover, there is a non-trivial coupling relating the scoring functions defined for the rejection label $(n+1)$ and other scoring functions, while such a coupling is not present in the standard score-based formulation. This makes it more difficult to minimize
the loss function $\wt \ell(\wt h, x, y)$.
Indeed, the minimization problem requires that the constraint $\wt
h(x, n + 1) = \max_{y \in \sY} \wt h(x, y) - r(x)$ be satisfied. This
constraint relates the first $n$ scoring functions $\wt h(\cdot, y)$, $y
\in \sY$, to the last scoring function $\wt h(\cdot, n + 1)$, via a
maximum operator (and the function $r$). The constraint is
non-differentiable and non-convex, which makes the minimization
problem more challenging. 

This augmented complexity and coupling fundamentally differentiate the two formulations. Our counterexample in Section~\ref{sec:score-example} underscores this intrinsic distinction between these two formulations: While the predictor-rejector formulation can easily handle certain instances, the score-based framework falls short to tackle them unless a more complex hypothesis set is adopted. This key difference between the two formulations is also the underlying reason for the historical difficulty in devising a consistent surrogate loss function for the predictor-rejector formulation within the standard multi-class setting, while the task has been comparatively more straightforward within the score-based formulation.

It is important to highlight that our novel families of predictor-rejector surrogate losses, alongside similar variants, establish the first Bayes-consistent and realizable consistent surrogate losses within the predictor-rejector formulation and they address two previously open questions in the literature \citep{NiCHS19} and \citep{pmlr-v206-mozannar23a} (see Section~\ref{sec:general}). Moreover, they outperform the state-of-the-art surrogate losses found in the score-based formulation (see Section~\ref{sec:experiments}). This underscores both the innovative nature and the significant contribution of our work.

In the following section,
we will further showcase the advantages of the predictor-rejector
formulation through empirical evidence.

\section{Experimental details}
\label{app:setup}
\paragraph{Setup.}
We adopt ResNet-$34$ \citep{he2016deep}, a residual network with $34$ convolutional layers, for SVHN and CIFAR-10,
and WRN-$28$-$10$ \citep{zagoruyko2016wide}, a residual network with $28$ convolutional layers and a widening factor of $10$, for CIFAR-100 both with ReLU activations
\citep{zagoruyko2016wide}. We train for 200 epochs using Stochastic Gradient Descent (SGD) with Nesterov momentum
\citep{nesterov1983method} following the cosine decay learning rate schedule
\citep{loshchilov2016sgdr} of an initial learning rate $0.1$.
During the training, the batch size is set to $1\mathord,024$ and the weight decay is $1\times 10^{-4}$. Except for SVHN, we adopt the standard data augmentation: a four pixel padding with $32 \times 32$ random crops and random horizontal flips.

We compare with a score-based surrogate
loss proposed in \citep{mozannar2020consistent} based on cross-entropy and a score-based surrogate loss used in \citep{caogeneralizing} based on generalized cross-entropy \citep{zhang2018generalized}. For our single-stage predictor-rejector surrogate
loss, we set $\ell$ to be the mean absolute error loss
$\ell_{\rm{mae}}$ since the constrained hinge loss imposes a
restriction incompatible with the standard use of the softmax function
with neural network hypotheses, and the $\rho$-margin loss is
non-convex. For our two-stage predictor-rejector
surrogate loss, we first use standard training with the logistic loss
to learn a predictor $h^*$, and then in the second stage, optimize the
loss function $\ell_{\Phi, h^*}$ with $\Phi(t) = \exp(-t)$ to learn a
rejector. 
We set the cost $c$ to $0.03$ for SVHN, $0.05$ for CIFAR-10 and $0.15$ for CIFAR-100. We observe that the performance remains close for other neighboring values of $c$. We highlight this particular choice of cost because a cost value that is not too far from the best-in-class zero-one classification loss encourages in practice a reasonable amount of input instances to be abstained.

\paragraph{Metrics.}  We use as evaluation metrics the average
abstention loss, $\labs$ for predictor-rejector surrogate losses and
$\labsc$ for score-based abstention surrogate losses, which share the
same semantic meaning. It's important to emphasize that the two abstention losses, $\labs$ and $\labsc$ are indeed the same metric, albeit tailored for two distinct formulations. Consequently, their average numerical values can be directly compared. Note that both $\labs$ and $\labsc$  account for the zero-one misclassification error when the sample is accepted, and the cost when the sample is rejected. The reason they are adapted to the two formulations is due to the difference in the rejection method: $r(x) \leq 0$ in the predictor-rejector formulation and $\tilde h(x) = n + 1$ in the score-based abstention formulation. It should also be noted that the abstention loss serves as a comprehensive metric that integrates the rejection ratio and zero-one misclassification error on the accepted data, thereby providing a singular, fair ground for comparison in Table~\ref{tab:comparison}. Nevertheless, we include all three metrics in Table~\ref{tab:comparison-cifar10} as a detailed comparison.

\begin{table}[t]
\caption{Abstention loss, zero-one misclassification error on the accepted data and rejection ratio of our predictor-rejector surrogate losses against baselines: 
the state-of-the-art score-based abstention surrogate losses in
\citep{mozannar2020consistent,caogeneralizing} on CIFAR-10.}
  \vskip -0.2in
    \label{tab:comparison-cifar10}
\begin{center}
\resizebox{\columnwidth}{!}{
    \begin{tabular}{@{\hspace{0pt}}llll@{\hspace{0pt}}}
    \toprule
      Method & Abstention loss & Misclassification error & Rejection ratio  \\
    \toprule
     \citep{mozannar2020consistent} &  4.48\% $\pm$ 0.10\% & 4.30\% $\pm$ 0.14\% & 25.99\% ± 0.41\%       \\
     \citep{caogeneralizing}  & 3.62\% $\pm$ 0.07\% & 3.08\% $\pm$ 0.10\%  & 28.27\% $\pm$ 0.18\%  \\
     single-stage predictor-rejector ($\ell_{\rm{mae}}$) & 3.64\% $\pm$ 0.05\% & 3.54\% $\pm$ 0.05\% &                  \textbf{17.21\% $\pm$ 0.22\%} \\
    two-stage predictor-rejector  & \textbf{3.31\% \!$\pm$ 0.02\%} &  \textbf{2.69\% $\pm$ 0.05\%}  &                 22.83\% $\pm$ 0.21\% \\
    \bottomrule
    \end{tabular}
    }
\end{center}
    \vskip -0.25in
\end{table}

\section{Useful lemmas}
\label{app:general}

We first
introduce some notation before presenting a lemma that will be used in our proofs. Recall that we denote by
$p(x, y) = \sD(Y = y \!\mid\! X = x)$ the conditional probability of
$Y=y$ given $X = x$. Thus, the generalization error for a general
abstention surrogate loss can be rewritten as $ \sE_{\sfL}(h, r) =
\mathbb{E}_{X} \bracket*{\sC_{\sfL}(h, r, x)} $, where $\sC_{\sfL}(h,
r,x)$ is the conditional risk of $\sfL$, defined by
\begin{align*}
\sC_{\sfL}(h, r, x) = \sum_{y\in \sY} p(x, y) \sfL(h, r, x, y).
\end{align*}
We denote by $\sC_{\sfL}^*(\sH, \sR, x) = \inf_{h\in \sH, r\in
  \sR}\sC_{\sfL}(h, r, x)$ the best-in-class conditional risk of $\sfL$. Then,
the minimizability gap can be rewritten as follows:
\begin{align*}
\sM_{\sfL}(\sH, \sR)
= \sE^*_{\sfL}(\sH, \sR)
- \mathbb{E}_{X} \bracket* {\sC_{\sfL}^*(\sH, \sR, x)}.
\end{align*}
We further refer to $\sC_{\sfL}(h, r, x)-\sC_{\sfL}^*(\sH, \sR, x)$ as
the calibration gap and denote it by $\Delta\sC_{\sfL,\sH, \sR}(h,
r,x)$.  We first prove a lemma on the calibration gap of the general
abstention loss. For any $x \in \sX$, we define the set of labels
generated by hypotheses in $\sH$ as $\mathsf H(x) := \curl*{\hh(x)
  \colon h \in \sH}$.  We will consider hypothesis sets $\sR$ which
are \emph{regular for abstention}.
\begin{definition}[Regularity for Abstention]
We say that a hypothesis set $\sR$ is \emph{regular for abstention} 
if for any $x\in \sX$, there exist $f, g \in \sR$ 
such that $f(x)>0$ and 
$g(x)\leq 0$.
\end{definition}
In other words, if $\sR$ is regular for abstention, then, for any
instance $x$, there is an option to accept and an option to
reject. 

\subsection{Lemma~\ref{lemma:calibration_gap_general} and proof}
The following lemma characterizes the calibration gap of the
predictor-rejector abstention.
\begin{restatable}{lemma}{ConditionalRegret}
\label{lemma:calibration_gap_general}
Assume that $\sR$ is regular for abstention. For any $x \in \sX$,
the minimal conditional $\labs$-risk and
the calibration gap for $\labs$ can be expressed as follows:
\begin{align*}
\sC^*_{\labs}(\sH,\sR,x)  & = 1 - \max\curl*{\max_{y\in
    \mathsf H(x)}p(x, y),1 - c},\\
 \Delta\sC_{\labs,\sH, \sR}(h, r, x) & =
\begin{cases}
\max\curl*{\max_{y\in \mathsf H(x)} p(x, y),1 - c} - p(x, \hh(x)) &  r(x)>0\\
\max\curl*{\max_{y\in \mathsf H(x)} p(x, y)-1+c,0} & r(x)\leq 0.
\end{cases}
\end{align*}
\end{restatable}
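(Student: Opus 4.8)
The plan is to compute the conditional $\labs$-risk in closed form, minimize it over $(h, r)$ using the regularity-for-abstention hypothesis, and then read off the calibration gap by subtraction. First I would fix $x \in \sX$ and evaluate the conditional risk $\sC_{\labs}(h, r, x) = \sum_{y \in \sY} p(x, y)\labs(h, r, x, y)$ by splitting on the sign of $r(x)$: when $r(x) > 0$ the abstention loss collapses to the zero-one term $\1_{\hh(x) \neq y}$, so the conditional risk equals $1 - p(x, \hh(x))$; when $r(x) \leq 0$ it equals the constant $c$, so the conditional risk is simply $c$ since the $p(x, y)$ sum to one over $\sY$.

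The central step is the best-in-class conditional risk $\sC^*_{\labs}(\sH, \sR, x) = \inf_{h \in \sH, r \in \sR}\sC_{\labs}(h, r, x)$. I would partition this infimum according to the sign of $r(x)$; regularity for abstention guarantees that for every $x$ there is a rejector in $\sR$ that accepts and one that rejects, so both branches are nonempty and the infimum is the minimum of the two branch infima. The rejecting branch contributes $c$ irrespective of $h$, while the accepting branch contributes $\inf_{h \in \sH}\paren*{1 - p(x, \hh(x))} = 1 - \max_{y \in \mathsf H(x)} p(x, y)$, the supremum of $p(x, \hh(x))$ being attained as a maximum over the finite label set $\mathsf H(x) = \curl*{\hh(x)\colon h \in \sH}$. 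Hence $\sC^*_{\labs}(\sH, \sR, x) = \min\curl*{1 - \max_{y \in \mathsf H(x)} p(x, y),\, c}$, which I would rewrite as $1 - \max\curl*{\max_{y \in \mathsf H(x)} p(x, y),\, 1 - c}$ via the identity $1 - \min\{a, b\} = \max\{1 - a, 1 - b\}$, matching the claimed minimal risk.

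Finally, subtracting $\sC^*_{\labs}(\sH, \sR, x)$ from each branch of $\sC_{\labs}(h, r, x)$ gives the two cases of the calibration gap $\Delta\sC_{\labs, \sH, \sR}(h, r, x)$. The accepting branch ($r(x) > 0$) yields directly $\max\curl*{\max_{y \in \mathsf H(x)} p(x, y),\, 1 - c} - p(x, \hh(x))$. The rejecting branch ($r(x) \leq 0$) yields $c - \sC^*_{\labs}(\sH, \sR, x) = \max\curl*{\max_{y \in \mathsf H(x)} p(x, y),\, 1 - c} - (1 - c)$, which simplifies to $\max\curl*{\max_{y \in \mathsf H(x)} p(x, y) - 1 + c,\, 0}$ using $\max\{a, b\} - b = \max\{a - b, 0\}$ with $b = 1 - c$.

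The only genuine obstacle is justifying that the joint infimum over $(h, r)$ decouples into separate minimizations over the accept and reject decisions; this is precisely what regularity for abstention provides, since otherwise one of the two branches could be empty for some $x$ and the case analysis would collapse. Everything else is elementary $\min$/$\max$ algebra together with the observation that $\mathsf H(x) \subseteq \sY$ is finite, so the relevant suprema are attained.
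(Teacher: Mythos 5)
Your proposal is correct and follows essentially the same route as the paper: compute the conditional risk by cases on the sign of $r(x)$, use regularity for abstention to split the joint infimum into the accept and reject branches, and subtract to obtain the calibration gap. The additional remarks on why the infimum decouples and why the supremum over $\mathsf H(x)$ is attained are sound but not needed beyond what the paper already states.
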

\begin{proof}
By the definition, the conditional $\labs$-risk can be expressed as
follows:
\begin{align}
\label{eq:cond}
\sC_{\labs}(h, r, x)
=  \sum_{y\in \sY} p(x, y) \1_{\hh(x)\neq y}\1_{r(x)> 0} + c \1_{r(x)\leq 0}
=
\begin{cases}
1-p(x, \hh(x)) & \text{if } r(x)>0\\
c & \text{if } r(x)\leq 0.
\end{cases}
\end{align}
Since $\sR$ is regular for
abstention, the minimal conditional $\labs$-risk can be expressed as
follows:
\begin{align*}
\sC^*_{\labs}(\sH,\sR,x) = 1 - \max\curl*{\max_{y\in
    \mathsf H(x)}p(x, y),1 - c},
\end{align*}
which proves the first part of the lemma. By the definition of the calibration gap, we have
\begin{align*}
\Delta\sC_{\labs,\sH,\sR}(h, r, x)
& = \sC_{\labs}(h, r, x)- \sC^*_{\labs}(\sH,\sR,x)\\
& =
\begin{cases}
\max\curl*{\max_{y\in \mathsf H(x)} p(x, y),1 - c} - p(x, \hh(x)) & \text{if } r(x)>0\\
\max\curl*{\max_{y\in \mathsf H(x)} p(x, y)-1+c,0} & \text{if } r(x)\leq 0,
\end{cases}
\end{align*}
which completes the proof.
\end{proof}

\subsection{Lemma~\ref{lemma:aux} and proof}
The following lemma would be useful in the proofs for two-stage surrogate losses.
\begin{lemma}
\label{lemma:aux}
Assume that the following $\sR$-consistency bound holds for all $r \in \sR$ and any distribution,
\begin{equation*}
\sE_{\ell_{0-1}}(r) - \sE^*_{\ell_{0-1}}(\sR) + \sM_{\ell_{0-1}}(\sR) \leq \Gamma\paren*{\sE_{\Phi}(r) - \sE^*_{\Phi}(\sR) + \sM_{\Phi}(\sR)}.
\end{equation*}
Then, for any $p_1, p_2\in [0,1]$ such that $p_1 + p_2 =1$ and $x \in \sX$, we have
\begin{align*}
& p_1 1_{r(x) > 0} + p_2 1_{r(x) \leq 0} - \inf_{r \in \sR} \paren*{p_1 1_{r(x) > 0} + p_2 1_{r(x) \leq 0}}\\
&\quad \leq \Gamma \paren*{p_1 \Phi(-r(x)) + p_2 \Phi(r(x)) - \inf_{r \in \sR} \paren*{p_1 \Phi(-r(x)) + p_2 \Phi(r(x))}}
\end{align*}
\end{lemma}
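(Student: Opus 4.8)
The plan is to deduce the pointwise inequality from the assumed global $\sR$-consistency bound by invoking that bound on an auxiliary distribution concentrated at the single point $x$. Fix $p_1, p_2 \in [0,1]$ with $p_1 + p_2 = 1$, a point $x \in \sX$, and an arbitrary $r \in \sR$ (the outer occurrences of $r$ on both sides of the claimed inequality); the infima use a separate dummy. I would work with the binary classification problem over labels $\curl*{-1, +1}$ and introduce the distribution $\sD_0$ on $\sX \times \curl*{-1, +1}$ whose marginal on $\sX$ is the measure concentrated at $x$ and whose conditional probability is $\sD_0(Y = +1 \mid X = x) = p_2$, so that $\sD_0(Y = -1 \mid X = x) = p_1$.

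The key computation is to check that, under $\sD_0$, the two conditional risks reproduce exactly the two sides of the statement. Since $\sign(r(x)) = +1$ when $r(x) > 0$ and $\sign(r(x)) = -1$ when $r(x) \leq 0$, the conditional risk of $\ell_{0-1}^{\rm{binary}}$ is $p_1 \1_{r(x) > 0} + p_2 \1_{r(x) \leq 0}$, while the conditional risk of the margin-based loss $\Phi(y\,r(x))$ is $p_2 \Phi(r(x)) + p_1 \Phi(-r(x))$. Both depend on $r$ only through the scalar $r(x)$, so taking the infimum over $r \in \sR$ produces precisely the two infima $\inf_{r \in \sR}\paren*{p_1 \1_{r(x) > 0} + p_2 \1_{r(x) \leq 0}}$ and $\inf_{r \in \sR}\paren*{p_1 \Phi(-r(x)) + p_2 \Phi(r(x))}$ appearing in the lemma. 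The choice $\sD_0(Y=+1\mid X=x)=p_2$ (rather than $p_1$) is exactly what makes both matchings hold simultaneously.

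Next I would use that, for a distribution whose marginal is concentrated at $x$, the generalization error of any loss equals its conditional risk at $x$ and the corresponding minimizability gap vanishes, since the best-in-class error $\sE^*(\sR)$ and $\E_X\bracket*{\sC^*(\sR, x)}$ both collapse to the same pointwise infimum $\inf_{r\in\sR}\sC(r,x)$. Consequently, applying the assumed $\sR$-consistency bound to $\sD_0$ reduces the left-hand side $\sE_{\ell_{0-1}^{\rm{binary}}}(r) - \sE^*_{\ell_{0-1}^{\rm{binary}}}(\sR) + \sM_{\ell_{0-1}^{\rm{binary}}}(\sR)$ to the pointwise binary calibration gap at $x$, and the argument of $\Gamma$ to the pointwise $\Phi$-calibration gap at $x$. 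Substituting the matched expressions from the previous paragraph yields exactly the claimed inequality.

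The main obstacle is bookkeeping rather than conceptual: one must fix the sign convention and the conditional probability orientation so that the binary zero-one risk and the $\Phi$-risk line up with the intended sides at once, and one must justify rigorously that concentrating the marginal at $x$ forces the minimizability gaps to zero and turns $\inf_{r\in\sR}\sE(\cdot)$ into the pointwise infimum over $r(x)$. Because the hypothesis of the lemma grants the $\sR$-consistency bound for \emph{any} distribution and \emph{all} $r \in \sR$, instantiating it at this particular $\sD_0$ and this particular $r$ is legitimate, so no limiting or approximation argument is required and the reduction is exact.
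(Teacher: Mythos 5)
Your proposal is correct and matches the paper's own proof: both instantiate the assumed $\sR$-consistency bound on a distribution concentrated at $x$ with $\Pr(Y=-1\mid x)=p_1$ and $\Pr(Y=+1\mid x)=p_2$, observe that the error-plus-minimizability-gap terms collapse to the pointwise calibration gaps, and read off the inequality. The paper states this more tersely, but the construction and the sign/label bookkeeping are identical to yours.
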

\begin{proof}
For any $x \in \sX$, consider a distribution $\delta_{x}$ that concentrates on that point. Let $p_1 = \mathbb{P}(y = \minus 1 \mid x)$ and $p_2 = \mathbb{P}(y = \plus 1 \mid x)$. Then, by definition, $\sE_{\ell_{0-1}}(r) - \sE^*_{\ell_{0-1}}(\sR) + \sM_{\ell_{0-1}}(\sR)$ can be expressed as 
\begin{equation*}
\sE_{\ell_{0-1}}(r) - \sE^*_{\ell_{0-1}}(\sR) + \sM_{\ell_{0-1}}(\sR) = p_1 1_{r(x) > 0} + p_2 1_{r(x) \leq 0} - \inf_{r \in \sR} \paren*{p_1 1_{r(x) > 0} + 1_{r(x) \leq 0}}.
\end{equation*}
Similarly, $\sE_{\Phi}(r) - \sE^*_{\Phi}(\sR) + \sM_{\Phi}(\sR)$ can be expressed as
\begin{equation*}
\sE_{\Phi}(r) - \sE^*_{\Phi}(\sR) + \sM_{\Phi}(\sR) = p_1 \Phi(-r(x)) + p_2 \Phi(r(x)) - \inf_{r \in \sR} \paren*{p_1 \Phi(-r(x)) + p_2 \Phi(r(x))}.
\end{equation*}
Since the $\sR$-consistency bound holds by the assumption, we complete the proof.
\end{proof}

\section{Proofs of main theorems}

\subsection{Proof of negative result for single-stage surrogates (Theorem~\ref{Thm:negative-bound})}
\label{app:general-negativ}
\NegativeBound*
\begin{proof}
We prove by contradiction. Assume that the bound holds with some non-decreasing function $\Gamma$ with
$\lim_{t\to 0^{+}}\Gamma(t) = 0$, then, for all
$h\in \sH$, $r\in \sR$, and any distribution, $\sE_{\sfL}(h,
r)-\sE_{\sfL}^*(\sH, \sR) +\sM_{\sfL}(\sH, \sR) \to 0 \implies
\sE_{\labs}(h, r) - \sE_{\labs}^*(\sH, \sR) + \sM_{\labs}(\sH, \sR)
\to 0$. This further implies that for any $x\in \sX$, the minimizer
$h^*$ and $r^*$ of $\sC_{\sfL}(h, r, x)$ within $\sH$ and $\sR$ also
achieves the minimum of $\sC_{\labs}(h, r, x)$ within $\sH$ and
$\sR$. When $\sH$ is symmetric and complete, we have
$\mathsf{H}(x)=\sY$. Since $\sR$ is complete, $\sR$ is regular for
abstention. By Lemma~\ref{lemma:calibration_gap_general}, $h^*$ and
$r^*$ need to satisfy the following conditions:
\begin{equation}
\label{eq:star-general-cond}
p(x, \hh^*(x))
= \max_{y\in \sY} p(x, y),\quad \sign(r^*(x))
= \sign\paren*{\max_{y\in \sY}p(x, y) - (1 - c)}.
\end{equation}
Next, we will show that \eqref{eq:star-general-cond} contradicts the
assumption that there exists $x\in \sX$ such that \[\inf_{h \in \sH}
\E_y\bracket*{\ell(h,X, y) \mid X =x}\neq \frac{\beta\Psi\paren*{1 -
    \max_{y\in \sY}p(x, y)}}{\alpha}.\]  By definition, the conditional
$\sfL$-risk can be expressed as follows:
\begin{equation*}
\sC_{\sfL}(h, r, x)  =  \exp(\alpha r(x))\mathbb{E}_y
  \bracket*{\ell(h,X, y) \mid X = x} + \Psi(c)\exp(-\beta r(x) ).
\end{equation*}
Then, for any fixed $r\in \sR$, $\alpha>0$ and $\beta>0$, we have
\begin{equation*}
  \inf_{h\in \sH}\sC_{\sfL}(h, r, x)
  = \exp(\alpha r(x))\inf_{h \in \sH} \mathbb{E}_y
\bracket*{\ell(h,X, y) \mid X = x}
+ \Psi(c)\exp(-\beta r(x)):=\sF(r(x)).
\end{equation*}
By taking the derivative, we obtain
\begin{equation*}
\sF'(r(x)) = \alpha \exp(\alpha r(x))\inf_{h \in \sH} \mathbb{E}_y
  \bracket*{\ell(h,X, y) \mid X = x}  - \beta \Psi(c) \exp(-\beta r(x) ).
\end{equation*}
Since $\sF(r(x))$ is convex with respect to $r(x)$, we know that
$\sF'(r(x))$ is non-decreasing with respect to $r(x)$. The equation \eqref{eq:star-general-cond} implies that
 $\sF(r(x))$ is attained at $r^*(x)$ such that
$\sign(r^*(x)) = \sign\paren*{\max_{y\in \sY}p(x, y) - (1 - c)}$. Thus, we have
\begin{align*}
& \max_{y\in \sY}p(x, y)\geq (1 - c) \implies r^*(x)\geq 0 \implies \sF'(0)\leq \sF'(r^*(x)) =0\\
& \max_{y\in \sY}p(x, y)< (1 - c) \implies r^*(x)< 0 \implies \sF'(0)\geq \sF'(r^*(x))=0.
\end{align*}
This implies that
\begin{align*}
\alpha\inf_{h \in \sH} \mathbb{E}_y
  \bracket*{\ell(h,X, y) \mid X = x}  - \beta\Psi(c)\leq 0 \text{ whenever } \max_{y\in \sY}p(x, y)\geq (1 - c)\\
\alpha\inf_{h \in \sH} \mathbb{E}_y
  \bracket*{\ell(h,X, y) \mid X = x}  - \beta\Psi(c) \geq 0 \text{ whenever } \max_{y\in \sY}p(x, y)\leq (1 - c),
\end{align*} 
which leads to
\begin{align}
\label{eq:general-con}
\alpha\inf_{h \in \sH} \mathbb{E}_y
  \bracket*{\ell(h,X, y) \mid X = x}  - \beta\Psi(c) = 0 \text{ whenever } \max_{y\in \sY}p(x, y) = (1 - c).
\end{align}
It is clear that \eqref{eq:general-con} contradicts the assumption that $\exists \,x\in \sX$ such that $\inf_{h \in \sH} \E_y\bracket*{\ell(h,X, y) \mid X =x}\neq \frac{\beta\Psi\paren*{1 - \max_{y\in \sY}p(x, y)}}{\alpha}$. 
\end{proof}

\subsection{Proof of \texorpdfstring{$(\sH, \sR)$}{HR}-consistency
  bounds for single-stage surrogates
  (Theorem~\ref{Thm:spcific-loss-bound})}
\label{app:general-positive-single-stage}

\SpecificLossBound*
\begin{proof}
When $\sH$ is symmetric and complete, $\mathsf H(x)=\sY$. Since $\sR$
is complete, $\sR$ is regular for abstention. By
Lemma~\ref{lemma:calibration_gap_general},
\begin{equation}
\label{eq:calibration_gap_general}
\begin{aligned}
\sC^*_{\labs}(\sH,\sR,x)  & = 1 - \max\curl*{\max_{y\in
    \sY}p(x, y),1 - c},\\
 \Delta\sC_{\labs,\sH, \sR}(h, r, x) & =
\begin{cases}
\max\curl*{\max_{y\in \sY} p(x, y),1 - c} - p(x, \hh(x)) &  r(x)>0\\
\max\curl*{\max_{y\in \sY} p(x, y)-1+c,0} & r(x)\leq 0.
\end{cases}
\end{aligned}
\end{equation}
Then, the idea of proof for each loss $\ell$ is similar, which
consists of the analysis in four cases depending on the sign of
$\max_{y\in \sY}p(x,y)-(1-c)$ and the sign of $r(x)$, as shown below.

\textbf{Mean absolute error loss: $\ell= \ell_{\rm{mae}}$.}
\ignore{By Lemma~\ref{lemma:calibration_gap_general}, $\sC_{\labs}(h, r, x) =
\begin{cases}
1-p(x, \hh(x)) & r(x)\leq 0\\
c & r(x)> 0
\end{cases}
$ and $\sC^*_{\labs}(\sH, \sR, x) =1 - \max\curl*{\max_{y\in
    \sY}p(x, y),1 - c}$.} When $\ell= \ell_{\rm{mae}}$ with $\Psi(t)=t$, let $s_h(x, y) = \frac{e^{h(x, y)}}{\sum_{y'\in
    \sY}e^{h(x, y')}}$, by
the assumption that $\sH$ is symmetric and complete and $\sR$ is complete, we obtain $\inf_{h \in \sH} \E_y\bracket*{\ell_{\rm{mae}}(h,X, y) \mid X =x} =1 - \max_{y\in \sY}p(x, y)$ and
\begin{align*}
% \sC_{\ell_{\rm{mae}}}(h, x) &=\sum_{y\in \sY}p(x, y)(1-s_h(x, y))\\
\sC_{\sfL}(h, r, x) &= \sum_{y\in \sY}p(x, y)(1-s_h(x, y))e^{\alpha r(x)} + c e^{-\alpha r(x)}\\
\sC^*_{\sfL}(\sH, \sR, x) & =2\sqrt{c\paren*{1 - \max_{y\in \sY}p(x, y)}}.
\end{align*}
Note that for any $h\in \sH$,
\begin{align*}
& \sum_{y\in \sY}p(x, y)(1-s_h(x, y)) - \paren*{1 - \max_{y\in \sY}p(x, y)}\\
& = \max_{y\in \sY}p(x, y) - \sum_{y\in \sY}p(x, y)s_h(x, y)\\
& \geq \max_{y\in \sY}p(x, y) - p(x, \hh(x)s_h(x, \hh(x))- \max_{y\in \sY}p(x, y)\paren*{1-s_h(x, \hh(x))}
\tag{$p(x, y)\leq \max_{y\in \sY}p(x, y)$, $\forall y\neq \hh(x)$}\\
& = s_h(x, \hh(x))\paren*{\max_{y\in \sY}p(x, y)-p(x, \hh(x)}\\
& \geq \frac{1}{n}\paren*{\max_{y\in \sY}p(x, y)-p(x, \hh(x)}
\tag{$s_h(x, \hh(x)) = \max_{y\in \sY}s_h(x, y)\geq \frac{1}{n}$}.
\end{align*}
We will then analyze the following four cases.
     \paragraph{(i)} $\max_{y\in \sY}p(x, y) > (1 - c)$ and $r(x)>0$. In this case, by \eqref{eq:calibration_gap_general},
      we have $\sC^*_{\labs}(\sH, \sR, x) =1 - \max_{y\in
        \sY}p(x, y)$ and $\Delta\sC_{\labs,\sH,
        \sR}(h, r, x) = \max_{y\in \sY}p(x, y)-p\paren*{x, \hh(x)}$. For
      the surrogate loss, we have
    \begin{align*}
    & \Delta\sC_{\sfL,\sH, \sR}(h, r, x)\\
    & = \sum_{y\in \sY}p(x, y)(1-s_h(x, y))e^{\alpha r(x)} + c e^{-\alpha r(x)} - 2\sqrt{c\paren*{1 - \max_{y\in \sY}p(x, y)}}\\
    & \geq \sum_{y\in \sY}p(x, y)(1-s_h(x, y))e^{\alpha r(x)} + c e^{-\alpha r(x)} - \paren*{1 - \max_{y\in \sY}p(x, y)}e^{\alpha r(x)} - c e^{-\alpha r(x)}
    \tag{AM–GM inequality}\\
    & \geq \sum_{y\in \sY}p(x, y)(1-s_h(x, y)) - \paren*{1 - \max_{y\in \sY}p(x, y)}
    \tag{$r(x)>0$}\\
    & \geq \frac{1}{n}\paren*{\max_{y\in \sY}p(x, y)-p(x, \hh(x)}\\
    \tag{$\sum_{y\in \sY}p(x, y)(1-s_h(x, y)) - \paren*{1 - \max_{y\in \sY}p(x, y)}\geq \frac{1}{n}\paren*{\max_{y\in \sY}p(x, y)-p(x, \hh(x)}$}\\
    & = \frac{1}{n}\Delta\sC_{\labs,\sH, \sR}(h, r, x).
    \end{align*}
     Therefore, 
    \begin{align*}
\sE_{\labs}(h, r) - \sE_{\labs}^*(\sH, \sR) + \sM_{\labs}(\sH, \sR) & = \mathbb{E}_{X}\bracket*{\Delta\sC_{\labs,\sH, \sR}(h, r, x)}\\
& \leq \mathbb{E}_{X}\bracket*{\Gamma_1\paren*{\Delta\sC_{\sfL,\sH, \sR}(h, r, x)}}\\
& \leq \Gamma_1\paren*{\mathbb{E}_{X}\bracket*{\Delta\sC_{\sfL,\sH, \sR}(h, r, x)}}
\tag{$\Gamma_1$ is concave}\\
& = \Gamma_1\paren*{\sE_{\sfL}(h, r)-\sE_{\sfL}^*(\sH, \sR) +\sM_{\sfL}(\sH, \sR)}
\end{align*}
where $\Gamma_1(t)= n\,t$.
  \paragraph{(ii)} $\max_{y\in \sY}p(x, y) \leq (1 - c)$ and $r(x)>0$. In this
      case, by \eqref{eq:calibration_gap_general}, we have $\sC^*_{\labs}(\sH, \sR, x) =c$ and
      $\Delta\sC_{\labs,\sH, \sR}(h, r, x) =1-c-p\paren*{x,
        \hh(x)}$. For the surrogate loss, we have
    \begin{align*}
    & \Delta\sC_{\sfL,\sH, \sR}(h, r, x) \\
    & = \sum_{y\in \sY}p(x, y)(1-s_h(x, y))e^{\alpha r(x)} + c e^{-\alpha r(x)} - 2\sqrt{c\paren*{1 - \max_{y\in \sY}p(x, y)}}\\
    & \geq \sum_{y\in \sY}p(x, y)(1-s_h(x, y))e^{\alpha r(x)} + c e^{-\alpha r(x)} - 2\sqrt{c\paren*{\sum_{y\in \sY}p(x, y)(1-s_h(x, y))}}
   \tag{$\sum_{y\in \sY}p(x, y)(1-s_h(x, y))\geq 1 - \max_{y\in \sY}p(x, y)$}\\
    & \geq \sum_{y\in \sY}p(x, y)(1-s_h(x, y)) + c - 2\sqrt{c\paren*{\sum_{y\in \sY}p(x, y)(1-s_h(x, y))}}
    \tag{increasing for $r(x)\geq 0$}\\
    & = \paren*{\sqrt{\sum_{y\in \sY}p(x, y)(1-s_h(x, y))}-\sqrt{c}}^2\\
    & = \paren*{\frac{\sum_{y\in \sY}p(x, y)(1-s_h(x, y))-c}{\sqrt{\sum_{y\in \sY}p(x, y)(1-s_h(x, y))}+\sqrt{c}}}^2\\
    & \geq  \paren*{\frac{\sum_{y\in \sY}p(x, y)(1-s_h(x, y))-\paren*{1 - \max_{y\in \sY}p(x, y)}+\paren*{1 - \max_{y\in \sY}p(x, y)-c}}{2}}^2
    \tag{$\sqrt{\sum_{y\in \sY}p(x, y)(1-s_h(x, y))}+\sqrt{c}\leq 2$}\\
    & \geq  \paren*{\frac{\frac{1}{n}\paren*{\max_{y\in \sY}p(x, y)-p(x, \hh(x)}+\frac{1}{n}\paren*{1 - \max_{y\in \sY}p(x, y)-c}}{2}}^2
    \tag{$\sum_{y\in \sY}p(x, y)(1-s_h(x, y)) - \paren*{1 - \max_{y\in \sY}p(x, y)}\geq \frac{1}{n}\paren*{\max_{y\in \sY}p(x, y)-p(x, \hh(x)}$ and $1 - \max_{y\in \sY}p(x, y)-c\geq 0$}\\
    & = \frac{1}{4n^2}\paren*{1-c-p\paren*{x,\hh(x)}}^2\\
    & = \frac{\Delta\sC_{\labs,\sH, \sR}(h, r, x)^2}{4n^2}
    \end{align*}
    Therefore, 
    \begin{align*}
\sE_{\labs}(h, r) - \sE_{\labs}^*(\sH, \sR) + \sM_{\labs}(\sH, \sR) & = \mathbb{E}_{X}\bracket*{\Delta\sC_{\labs,\sH, \sR}(h, r, x)}\\
& \leq \mathbb{E}_{X}\bracket*{\Gamma_2\paren*{\Delta\sC_{\sfL,\sH, \sR}(h, r, x)}}\\
& \leq \Gamma_2\paren*{\mathbb{E}_{X}\bracket*{\Delta\sC_{\sfL,\sH, \sR}(h, r, x)}}
\tag{$\Gamma_2$ is concave}\\
& = \Gamma_2\paren*{\sE_{\sfL}(h, r)-\sE_{\sfL}^*(\sH, \sR) +\sM_{\sfL}(\sH, \sR)}
\end{align*}
where $\Gamma_2(t)=2n\sqrt{t}$.
  \paragraph{(iii)} $\max_{y\in \sY}p(x, y) \leq  (1 - c)$ and $r(x)\leq 0$. In this case, by \eqref{eq:calibration_gap_general}, we have $\sC^*_{\labs}(\sH, \sR, x) =c$ and $\Delta\sC_{\labs,\sH, \sR}(h, r, x)=0$, which implies that $\sE_{\labs}(h, r) - \sE_{\labs}^*(\sH, \sR) + \sM_{\labs}(\sH, \sR) = \mathbb{E}_{X}\bracket*{\Delta\sC_{\labs,\sH, \sR}(h, r, x)}=0\leq \Gamma\paren*{\sE_{\sfL}(h, r)-\sE_{\sfL}^*(\sH, \sR) +\sM_{\sfL}(\sH, \sR)}$ for any $\Gamma\geq 0$.
  \paragraph{(iv)} $\max_{y\in \sY}p(x, y) > (1 - c)$ and $r(x)\leq 0$. In this
      case, by \eqref{eq:calibration_gap_general}, we have $\sC^*_{\labs}(\sH, \sR, x) =1 - \max_{y\in
        \sY}p(x, y)$ and $\Delta\sC_{\labs,\sH,
        \sR}(h, r, x) =\max_{y\in \sY} p(x, y)-1+c$. For the
      surrogate loss, we have
    \begin{align*}
    &\Delta\sC_{\sfL,\sH, \sR}(h, r, x)\\
    & = \sum_{y\in \sY}p(x, y)(1-s_h(x, y))e^{\alpha r(x)} + c e^{-\alpha r(x)} - 2\sqrt{c\paren*{1 - \max_{y\in \sY}p(x, y)}}\\
    & \geq \paren*{1 - \max_{y\in \sY}p(x, y)}e^{\alpha r(x)} + c e^{-\alpha r(x)} - 2\sqrt{c\paren*{1 - \max_{y\in \sY}p(x, y)}}
    \tag{$\sum_{y\in \sY}p(x, y)(1-s_h(x, y))\geq 1 - \max_{y\in \sY}p(x, y)$}\\
    & \geq 1 - \max_{y\in \sY}p(x, y)+c - 2\sqrt{c\paren*{1 - \max_{y\in \sY}p(x, y)}}
    \tag{decreasing for $r(x)\leq 0$}\\
    & = \paren*{\sqrt{1 - \max_{y\in \sY}p(x, y)}-\sqrt{c}}^2\\
    & = \paren*{\frac{1 - \max_{y\in \sY}p(x, y)-c}{\sqrt{1 - \max_{y\in \sY}p(x, y)}+\sqrt{c}}}^2\\
    & \geq \paren*{\frac{\max_{y\in \sY} p(x, y)-1+c}{2}}^2
    \tag{$\sqrt{1 - \max_{y\in \sY}p(x, y)}+\sqrt{c}\leq 2$}\\
    & = \frac{\Delta\sC_{\labs,\sH, \sR}(h, r, x)^2}{4}.
    \end{align*}
     Therefore, 
    \begin{align*}
      \sE_{\labs}(h, r) - \sE_{\labs}^*(\sH, \sR) + \sM_{\labs}(\sH, \sR)
      & = \mathbb{E}_{X}\bracket*{\Delta\sC_{\labs,\sH, \sR}(h, r, x)}\\
      & \leq \mathbb{E}_{X}\bracket*{\Gamma_3\paren*{\Delta\sC_{\sfL,\sH, \sR}(h, r, x)}}\\
      & \leq \Gamma_3\paren*{\mathbb{E}_{X}\bracket*{\Delta\sC_{\sfL,\sH, \sR}(h, r, x)}}
      \tag{$\Gamma_3$ is concave}\\
& = \Gamma_3\paren*{\sE_{\sfL}(h, r)-\sE_{\sfL}^*(\sH, \sR) +\sM_{\sfL}(\sH, \sR)}
\end{align*}
where $\Gamma_3(t)=2\sqrt{t}$.

Overall, we obtain
\begin{align*}
\sE_{\labs}(h, r) - \sE_{\labs}^*(\sH, \sR) + \sM_{\labs}(\sH, \sR) \leq \Gamma\paren*{\sE_{\sfL}(h, r)-\sE_{\sfL}^*(\sH, \sR) +\sM_{\sfL}(\sH, \sR)}
\end{align*}
where $\Gamma(t)=\max\curl*{\Gamma_1(t),\Gamma_2(t),\Gamma_3(t)}= \max\curl*{2n\sqrt{t},n\,t}$, which completes the proof.

\textbf{$\rho$-Margin loss: $\ell= \ell_{\rho}$.}
When $\ell= \ell_{\rho}$ with $\Psi(t)=t$, by
the assumption that $\sH$ is symmetric and complete and $\sR$ is complete, we obtain\ignore{$\sC_{\ell_{\rho}}(h, x) = \sum_{y\in \sY}p(x, y)\min\curl*{\max\curl*{0,1 - \frac{\rho_h(x,y)}{\rho}},1}
=1- \min\curl*{1,\frac{\rho_h(x, \hh(x))}{\rho}}\,p(x,\hh(x))$}
$
\inf_{h \in \sH} \E_y\bracket*{\ell_{\rho}(h,X, y) \mid X =x}  =1 - \max_{y\in \sY}p(x, y)$ and
\begin{align*}
\sC_{\sfL}(h, r, x) &= \sum_{y\in \sY}p(x, y)\min\curl*{\max\curl*{0,1 - \frac{\rho_h(x,y)}{\rho}},1}e^{\alpha r(x)} + c e^{-\alpha r(x)}\\
&=\paren*{1- \min\curl*{1,\frac{\rho_h(x, \hh(x))}{\rho}}\,p(x,\hh(x))}e^{\alpha r(x)} + c e^{-\alpha r(x)}\\
\sC^*_{\sfL}(\sH, \sR, x) & =2\sqrt{c\paren*{1 - \max_{y\in \sY}p(x, y)}}.
\end{align*}
where $\rho_h(x,y) = h(x, y) - \max_{y' \neq y} h(x, y')$ is the margin. Note that for any $h\in \sH$,
\begin{align*}
& 1- \min\curl*{1,\frac{\rho_h(x, \hh(x))}{\rho}}\,p(x,\hh(x)) - \paren*{1 - \max_{y\in \sY}p(x, y)}\\
& = \max_{y\in \sY}p(x, y) - \min\curl*{1,\frac{\rho_h(x, \hh(x))}{\rho}}\,p(x,\hh(x))\\
& \geq \max_{y\in \sY}p(x, y)-p(x, \hh(x)
\tag{$\min\curl*{1,\frac{\rho_h(x, \hh(x))}{\rho}}\leq 1$}.
\end{align*}
We will then analyze the following four cases.
\paragraph{(i)} $\max_{y\in \sY}p(x, y) \leq (1 - c)$ and $r(x)>0$. In this
      case, by \eqref{eq:calibration_gap_general}, we have $\sC^*_{\labs}(\sH, \sR, x) =c$ and
      $\Delta\sC_{\labs,\sH, \sR}(h, r, x) =1-c-p\paren*{x,
        \hh(x)}$. For the surrogate loss, we have
    \begin{align*}
    &\Delta\sC_{\sfL,\sH, \sR}(h, r, x)\\ 
    & = \paren*{1- \min\curl*{1,\frac{\rho_h(x, \hh(x))}{\rho}}\,p(x,\hh(x))}e^{\alpha r(x)} + c e^{-\alpha r(x)} - 2\sqrt{c\paren*{1 - \max_{y\in \sY}p(x, y)}}\\
    & \geq \paren*{1- \min\curl*{1,\frac{\rho_h(x, \hh(x))}{\rho}}\,p(x,\hh(x))}e^{\alpha r(x)} + c e^{-\alpha r(x)} - 2\sqrt{c\paren*{1- \min\curl*{1,\frac{\rho_h(x, \hh(x))}{\rho}}\,p(x,\hh(x))}}
   \tag{$1- \min\curl*{1,\frac{\rho_h(x, \hh(x))}{\rho}}\,p(x,\hh(x))\geq 1 - \max_{y\in \sY}p(x, y)$}\\
    & \geq 1- \min\curl*{1,\frac{\rho_h(x, \hh(x))}{\rho}}\,p(x,\hh(x)) + c -  2\sqrt{c\paren*{1- \min\curl*{1,\frac{\rho_h(x, \hh(x))}{\rho}}\,p(x,\hh(x))}}
    \tag{increasing for $r(x)\geq 0$}\\
    & = \paren*{\sqrt{1- \min\curl*{1,\frac{\rho_h(x, \hh(x))}{\rho}}\,p(x,\hh(x))}-\sqrt{c}}^2\\
    & = \paren*{\frac{1- \min\curl*{1,\frac{\rho_h(x, \hh(x))}{\rho}}\,p(x,\hh(x))-c}{\sqrt{1- \min\curl*{1,\frac{\rho_h(x, \hh(x))}{\rho}}\,p(x,\hh(x))}+\sqrt{c}}}^2\\
    & \geq  \paren*{\frac{\1- \min\curl*{1,\frac{\rho_h(x, \hh(x))}{\rho}}\,p(x,\hh(x))-\paren*{1 - \max_{y\in \sY}p(x, y)}+\paren*{1 - \max_{y\in \sY}p(x, y)-c}}{2}}^2
    \tag{$\sqrt{1- \min\curl*{1,\frac{\rho_h(x, \hh(x))}{\rho}}\,p(x,\hh(x))}+\sqrt{c}\leq 2$}\\
    & \geq  \paren*{\frac{\max_{y\in \sY}p(x, y)-p(x, \hh(x)+\paren*{1 - \max_{y\in \sY}p(x, y)-c}}{2}}^2
    \tag{$1- \min\curl*{1,\frac{\rho_h(x, \hh(x))}{\rho}}\,p(x,\hh(x)) - \paren*{1 - \max_{y\in \sY}p(x, y)}\geq \max_{y\in \sY}p(x, y)-p(x, \hh(x)$}\\
    & = \frac{1}{4}\paren*{1-c-p\paren*{x,\hh(x)}}^2\\
    & = \frac{\Delta\sC_{\labs,\sH, \sR}(h, r, x)^2}{4}
    \end{align*}
    Therefore, 
    \begin{align*}
\sE_{\labs}(h, r) - \sE_{\labs}^*(\sH, \sR) + \sM_{\labs}(\sH, \sR) & = \mathbb{E}_{X}\bracket*{\Delta\sC_{\labs,\sH, \sR}(h, r, x)}\\
& \leq \mathbb{E}_{X}\bracket*{\Gamma_2\paren*{\Delta\sC_{\sfL,\sH, \sR}(h, r, x)}}\\
& \leq \Gamma_2\paren*{\mathbb{E}_{X}\bracket*{\Delta\sC_{\sfL,\sH, \sR}(h, r, x)}}
\tag{$\Gamma_2$ is concave}\\
& = \Gamma_2\paren*{\sE_{\sfL}(h, r)-\sE_{\sfL}^*(\sH, \sR) +\sM_{\sfL}(\sH, \sR)}
\end{align*}
where $\Gamma_2(t)=2\sqrt{t}$.

\paragraph{(ii)} $\max_{y\in \sY}p(x, y) > (1 - c)$ and $r(x)>0$. In this case, by \eqref{eq:calibration_gap_general},
      we have $\sC^*_{\labs}(\sH, \sR, x) =1 - \max_{y\in
        \sY}p(x, y)$ and $\Delta\sC_{\labs,\sH,
        \sR}(h, r, x) = \max_{y\in \sY}p(x, y)-p\paren*{x, \hh(x)}$. For
      the surrogate loss, we have
    \begin{align*}
    &\Delta\sC_{\sfL,\sH, \sR}(h, r, x)\\ 
    & = \paren*{1- \min\curl*{1,\frac{\rho_h(x, \hh(x))}{\rho}}\,p(x,\hh(x))}e^{\alpha r(x)} + c e^{-\alpha r(x)} - 2\sqrt{c\paren*{1 - \max_{y\in \sY}p(x, y)}}\\
    & \geq \paren*{1- \min\curl*{1,\frac{\rho_h(x, \hh(x))}{\rho}}\,p(x,\hh(x))}e^{\alpha r(x)} + c e^{-\alpha r(x)} - \paren*{1 - \max_{y\in \sY}p(x, y)}e^{\alpha r(x)} - c e^{-\alpha r(x)}
    \tag{AM–GM inequality}\\
    & \geq 1- \min\curl*{1,\frac{\rho_h(x, \hh(x))}{\rho}}\,p(x,\hh(x)) - \paren*{1 - \max_{y\in \sY}p(x, y)}
    \tag{$r(x)>0$}\\
    & \geq \max_{y\in \sY}p(x, y)-p(x, \hh(x)\\
    \tag{$1- \min\curl*{1,\frac{\rho_h(x, \hh(x))}{\rho}}\,p(x,\hh(x))- \paren*{1 - \max_{y\in \sY}p(x, y)}\geq \max_{y\in \sY}p(x, y)-p(x, \hh(x)$}\\
    & = \Delta\sC_{\labs,\sH, \sR}(h, r, x).
    \end{align*}
     Therefore, $\sE_{\labs}(h, r) - \sE_{\labs}^*(\sH, \sR) + \sM_{\labs}(\sH, \sR)  = \mathbb{E}_{X}\bracket*{\Delta\sC_{\labs,\sH, \sR}(h, r, x)}
\leq \mathbb{E}_{X}\bracket*{\Gamma_1\paren*{\Delta\sC_{\sfL,\sH, \sR}(h, r, x)}}
 \leq \Gamma_1\paren*{\mathbb{E}_{X}\bracket*{\Delta\sC_{\sfL,\sH, \sR}(h, r, x)}}
= \Gamma_1\paren[big]{\sE_{\sfL}(h, r)-\sE_{\sfL}^*(\sH, \sR) +\sM_{\sfL}(\sH, \sR)}$, where $\Gamma_1(t)= t$ is concave.
\ignore{\begin{align*}
\sE_{\labs}(h, r) - \sE_{\labs}^*(\sH, \sR) + \sM_{\labs}(\sH, \sR) & = \mathbb{E}_{X}\bracket*{\Delta\sC_{\labs,\sH, \sR}(h, r, x)}\\
& \leq \mathbb{E}_{X}\bracket*{\Gamma_1\paren*{\Delta\sC_{\sfL,\sH, \sR}(h, r, x)}}\\
& \leq \Gamma_1\paren*{\mathbb{E}_{X}\bracket*{\Delta\sC_{\sfL,\sH, \sR}(h, r, x)}}
\tag{$\Gamma_1$ is concave}\\
& = \Gamma_1\paren*{\sE_{\sfL}(h, r)-\sE_{\sfL}^*(\sH, \sR) +\sM_{\sfL}(\sH, \sR)}
\end{align*}}

\paragraph{(iii)} $\max_{y\in \sY}p(x, y) \leq  (1 - c)$ and $r(x)\leq 0$. In this case, by \eqref{eq:calibration_gap_general}, we have $\sC^*_{\labs}(\sH, \sR, x) =c$ and $\Delta\sC_{\labs,\sH, \sR}(h, r, x)=0$, which implies that $\sE_{\labs}(h, r) - \sE_{\labs}^*(\sH, \sR) + \sM_{\labs}(\sH, \sR) = \mathbb{E}_{X}\bracket*{\Delta\sC_{\labs,\sH, \sR}(h, r, x)}=0\leq \Gamma\paren*{\sE_{\sfL}(h, r)-\sE_{\sfL}^*(\sH, \sR) +\sM_{\sfL}(\sH, \sR)}$ for any $\Gamma\geq 0$.
\paragraph{(iv)} $\max_{y\in \sY}p(x, y) > (1 - c)$ and $r(x)\leq 0$. In this
      case, by \eqref{eq:calibration_gap_general}, we have $\sC^*_{\labs}(\sH, \sR, x) =1 - \max_{y\in
        \sY}p(x, y)$ and $\Delta\sC_{\labs,\sH,
        \sR}(h, r, x) =\max_{y\in \sY} p(x, y)-1+c$. For the
      surrogate loss, we have
    \begin{align*}
    &\Delta\sC_{\sfL,\sH, \sR}(h, r, x)\\ 
    & = \paren*{1- \min\curl*{1,\frac{\rho_h(x, \hh(x))}{\rho}}\,p(x,\hh(x))}e^{\alpha r(x)} + c e^{-\alpha r(x)} - 2\sqrt{c\paren*{1 - \max_{y\in \sY}p(x, y)}}\\
    & \geq \paren*{1 - \max_{y\in \sY}p(x, y)}e^{\alpha r(x)} + c e^{-\alpha r(x)} - 2\sqrt{c\paren*{1 - \max_{y\in \sY}p(x, y)}}
    \tag{$1- \min\curl*{1,\frac{\rho_h(x, \hh(x))}{\rho}}\,p(x,\hh(x))\geq 1 - \max_{y\in \sY}p(x, y)$}\\
    & \geq 1 - \max_{y\in \sY}p(x, y)+c - 2\sqrt{c\paren*{1 - \max_{y\in \sY}p(x, y)}}
    \tag{decreasing for $r(x)\leq 0$}\\
    & = \paren*{\sqrt{1 - \max_{y\in \sY}p(x, y)}-\sqrt{c}}^2\\
    & = \paren*{\frac{1 - \max_{y\in \sY}p(x, y)-c}{\sqrt{1 - \max_{y\in \sY}p(x, y)}+\sqrt{c}}}^2\\
    & \geq \paren*{\frac{\max_{y\in \sY} p(x, y)-1+c}{2}}^2
    \tag{$\sqrt{1 - \max_{y\in \sY}p(x, y)}+\sqrt{c}\leq 2$}\\
    & = \frac{\Delta\sC_{\labs,\sH, \sR}(h, r, x)^2}{4}.
    \end{align*}
     Therefore, 
    \begin{align*}
      \sE_{\labs}(h, r) - \sE_{\labs}^*(\sH, \sR) + \sM_{\labs}(\sH, \sR)
      & = \mathbb{E}_{X}\bracket*{\Delta\sC_{\labs,\sH, \sR}(h, r, x)}\\
      & \leq \mathbb{E}_{X}\bracket*{\Gamma_3\paren*{\Delta\sC_{\sfL,\sH, \sR}(h, r, x)}}\\
      & \leq \Gamma_3\paren*{\mathbb{E}_{X}\bracket*{\Delta\sC_{\sfL,\sH, \sR}(h, r, x)}}
      \tag{$\Gamma_3$ is concave}\\
& = \Gamma_3\paren*{\sE_{\sfL}(h, r)-\sE_{\sfL}^*(\sH, \sR) +\sM_{\sfL}(\sH, \sR)}
\end{align*}
where $\Gamma_3(t)=2\sqrt{t}$.

Overall, we obtain
\begin{align*}
\sE_{\labs}(h, r) - \sE_{\labs}^*(\sH, \sR) + \sM_{\labs}(\sH, \sR) \leq \Gamma\paren*{\sE_{\sfL}(h, r)-\sE_{\sfL}^*(\sH, \sR) +\sM_{\sfL}(\sH, \sR)}
\end{align*}
where $\Gamma(t)=\max\curl*{\Gamma_1(t),\Gamma_2(t),\Gamma_3(t)}= \max\curl*{2\sqrt{t},t}$, which completes the proof.

\textbf{Constrained $\rho$-hinge loss: $\ell=\ell_{\rho-\mathrm{hinge}}$.}
When $\ell= \ell_{\rm{\rho-\mathrm{hinge}}}$ with $\Psi(t)=nt$, by
the assumption that $\sH$ is symmetric and complete and $\sR$ is complete, we obtain $\inf_{h \in \sH} \E_y\bracket*{\ell_{\rm{\rho-\mathrm{hinge}}}(h,X, y) \mid X =x}=n\paren*{1 - \max_{y\in \sY}p(x, y)}$ and with the constraint $\sum_{y\in \sY}h(x, y) = 0$,
\begin{align*}
\sC_{\sfL}(h, r, x) & = \sum_{y\in \sY}p(x, y)\sum_{y'\neq y}\max\curl[\big]{0,1 + \frac{h(x, y')}{\rho}}e^{\alpha r(x)} + n c e^{-\alpha r(x)}\\
& = \sum_{y\in \sY} \paren*{1-p(x,y)}\max\curl*{0,1 + \frac{h(x, y)}{\rho}}e^{\alpha r(x)} + n c e^{-\alpha r(x)}\\
\sC^*_{\sfL}(\sH, \sR, x) &=2\sqrt{n^2c\paren*{1 - \max_{y\in \sY}p(x, y)}}.
\end{align*}
Take $h_{\rho}\in \sH$ such that $ h_{\rho}(x,y) = 
\begin{cases}
  h(x, y) & \text{if $y \not \in \curl*{y_{\max}, \hh(x)}$}\\
  -\rho & \text{if $y = \hh(x)$}\\
  h(x, y_{\max})+h(x,\hh(x))+\rho& \text{if $y = y_{\max}$}.
\end{cases}   $
with the constraint $\sum_{y\in \sY}h_{\rho}(x, y) = 0$, where $y_{\max}= \argmax_{y\in \sY}p(x,y)$.
Note that for any $h\in \sH$,
\begin{align*}
& \sum_{y\in \sY} \paren*{1-p(x,y)}\max\curl*{0,1 + \frac{h(x, y)}{\rho}} - n\paren*{1 - \max_{y\in \sY}p(x, y)}\\
& \geq  \sum_{y\in \sY} \paren*{1-p(x,y)}\min\curl*{n,\max\curl*{0,1 + \frac{h(x, y)}{\rho}}} - n\paren*{1 - \max_{y\in \sY}p(x, y)}\\
& \geq \sum_{y\in \sY} \paren*{1-p(x,y)}\min\curl*{n,\max\curl*{0,1 + \frac{h(x, y)}{\rho}}} - \sum_{y\in \sY} \paren*{1-p(x,y)}\min\curl*{n,\max\curl*{0,1 + \frac{ h_{\rho}(x, y)}{\rho}}}\\ 
& \geq \min\curl*{n,1+\frac{h(x,\hh(x))}{\rho}}\paren*{\max_{y\in \sY}p(x, y)-p(x,\hh(x))}
\tag{plug in $h_{\rho}(x,y)$}\\
& \geq \max_{y\in \sY}p(x, y)-p(x, \hh(x). \tag{$h(x,\hh(x))\geq 0$}
\end{align*}
We will then analyze the following four cases.
    \paragraph{(i)} $\max_{y\in \sY}p(x, y) > (1 - c)$ and $r(x)>0$. In this case, by \eqref{eq:calibration_gap_general},
      we have $\sC^*_{\labs}(\sH, \sR, x) =1 - \max_{y\in
        \sY}p(x, y)$ and $\Delta\sC_{\labs,\sH,
        \sR}(h, r, x) = \max_{y\in \sY}p(x, y)-p\paren*{x, \hh(x)}$. For
      the surrogate loss, we have
    \begin{align*}
    &\Delta\sC_{\sfL,\sH, \sR}(h, r, x)\\ 
    & = \sum_{y\in \sY} \paren*{1-p(x,y)}\max\curl*{0,1 + \frac{h(x, y)}{\rho}}e^{\alpha r(x)} + n c e^{-\alpha r(x)} - 2\sqrt{n^2c\paren*{1 - \max_{y\in \sY}p(x, y)}}\\
    & \geq \sum_{y\in \sY} \paren*{1-p(x,y)}\max\curl*{0,1 + \frac{h(x, y)}{\rho}}e^{\alpha r(x)} + n c e^{-\alpha r(x)} - n\paren*{1 - \max_{y\in \sY}p(x, y)}e^{\alpha r(x)} - n c e^{-\alpha r(x)}
    \tag{AM–GM inequality}\\
    & =\sum_{y\in \sY} \paren*{1-p(x,y)}\max\curl*{0,1 + \frac{h(x, y)}{\rho}} - n\paren*{1 - \max_{y\in \sY}p(x, y)}
    \tag{$r(x)>0$}\\
    & \geq \max_{y\in \sY}p(x, y)-p(x, \hh(x)\\
    \tag{$\sum_{y\in \sY} \paren*{1-p(x,y)}\max\curl*{0,1 + \frac{h(x, y)}{\rho}} - n\paren*{1 - \max_{y\in \sY}p(x, y)}\geq \max_{y\in \sY}p(x, y)-p(x, \hh(x)$}\\
    & = \Delta\sC_{\labs,\sH, \sR}(h, r, x).
    \end{align*}
     Therefore, 
    \begin{align*}
\sE_{\labs}(h, r) - \sE_{\labs}^*(\sH, \sR) + \sM_{\labs}(\sH, \sR) & = \mathbb{E}_{X}\bracket*{\Delta\sC_{\labs,\sH, \sR}(h, r, x)}\\
& \leq \mathbb{E}_{X}\bracket*{\Gamma_1\paren*{\Delta\sC_{\sfL,\sH, \sR}(h, r, x)}}\\
& \leq \Gamma_1\paren*{\mathbb{E}_{X}\bracket*{\Delta\sC_{\sfL,\sH, \sR}(h, r, x)}}
\tag{$\Gamma_1$ is concave}\\
& = \Gamma_1\paren*{\sE_{\sfL}(h, r)-\sE_{\sfL}^*(\sH, \sR) +\sM_{\sfL}(\sH, \sR)}
\end{align*}
where $\Gamma_1(t)= t$.
\paragraph{(ii)} $\max_{y\in \sY}p(x, y) \leq (1 - c)$ and $r(x)>0$. In this
      case, by \eqref{eq:calibration_gap_general}, we have $\sC^*_{\labs}(\sH, \sR, x) =c$ and
      $\Delta\sC_{\labs,\sH, \sR}(h, r, x) =1-c-p\paren*{x,
        \hh(x)}$. For the surrogate loss, we have
    \begin{align*}
    & \Delta\sC_{\sfL,\sH, \sR}(h, r, x)\\
    & =  \sum_{y\in \sY} \paren*{1-p(x,y)}\max\curl*{0,1 + \frac{h(x, y)}{\rho}}e^{\alpha r(x)} + n c e^{-\alpha r(x)} - 2\sqrt{n^2c\paren*{1 - \max_{y\in \sY}p(x, y)}}\\
    & \geq \sum_{y\in \sY} \paren*{1-p(x,y)}\max\curl*{0,1 + \frac{h(x, y)}{\rho}}e^{\alpha r(x)} + n c e^{-\alpha r(x)} - 2\sqrt{nc\paren*{\sum_{y\in \sY} \paren*{1-p(x,y)}\max\curl*{0,1 + \frac{h(x, y)}{\rho}}}}
   \tag{$\sum_{y\in \sY} \paren*{1-p(x,y)}\max\curl*{0,1 + \frac{h(x, y)}{\rho}}\geq n\paren*{1 - \max_{y\in \sY}p(x, y)}$}\\
    & \geq \sum_{y\in \sY} \paren*{1-p(x,y)}\max\curl*{0,1 + \frac{h(x, y)}{\rho}} + n c -  2\sqrt{nc\paren*{\sum_{y\in \sY} \paren*{1-p(x,y)}\max\curl*{0,1 + \frac{h(x, y)}{\rho}}}}
    \tag{increasing for $r(x)\geq 0$}\\
    & = \paren*{\sqrt{\sum_{y\in \sY} \paren*{1-p(x,y)}\max\curl*{0,1 + \frac{h(x, y)}{\rho}}}-\sqrt{nc}}^2\\
    &\geq \paren*{\sqrt{\sum_{y\in \sY} \paren*{1-p(x,y)}\min\curl*{n,\max\curl*{0,1 + \frac{h(x, y)}{\rho}}}}-\sqrt{nc}}^2\\
    & = \paren*{\frac{\sum_{y\in \sY} \paren*{1-p(x,y)}\min\curl*{n,\max\curl*{0,1 + \frac{h(x, y)}{\rho}}}-n c}{\sqrt{\sum_{y\in \sY} \paren*{1-p(x,y)}\min\curl*{n,\max\curl*{0,1 + \frac{h(x, y)}{\rho}}}}+\sqrt{nc}}}^2\\
    & \geq  \paren*{\frac{\sum_{y\in \sY} \paren*{1-p(x,y)}\min\curl*{n,\max\curl*{0,1 + \frac{h(x, y)}{\rho}}}-n\paren*{1 - \max_{y\in \sY}p(x, y)}+n\paren*{1 - \max_{y\in \sY}p(x, y)-c}}{2\sqrt{n}}}^2
    \tag{$\sqrt{\sum_{y\in \sY} \paren*{1-p(x,y)}\min\curl*{n,\max\curl*{0,1 + \frac{h(x, y)}{\rho}}}}+\sqrt{nc}\leq 2\sqrt{n}$}\\
    & \geq  \paren*{\frac{\max_{y\in \sY}p(x, y)-p(x, \hh(x)+1 - \max_{y\in \sY}p(x, y)-c}{2\sqrt{n}}}^2
    \tag{$\sum_{y\in \sY} \paren*{1-p(x,y)}\min\curl*{n,\max\curl*{0,1 + \frac{h(x, y)}{\rho}}} - n\paren*{1 - \max_{y\in \sY}p(x, y)}\geq \max_{y\in \sY}p(x, y)-p(x, \hh(x)$}\\
    & = \frac{1}{4n}\paren*{1-c-p\paren*{x,\hh(x)}}^2\\
    & = \frac{\Delta\sC_{\labs,\sH, \sR}(h, r, x)^2}{4n}
    \end{align*}
    Therefore, 
    \begin{align*}
\sE_{\labs}(h, r) - \sE_{\labs}^*(\sH, \sR) + \sM_{\labs}(\sH, \sR) & = \mathbb{E}_{X}\bracket*{\Delta\sC_{\labs,\sH, \sR}(h, r, x)}\\
& \leq \mathbb{E}_{X}\bracket*{\Gamma_2\paren*{\Delta\sC_{\sfL,\sH, \sR}(h, r, x)}}\\
& \leq \Gamma_2\paren*{\mathbb{E}_{X}\bracket*{\Delta\sC_{\sfL,\sH, \sR}(h, r, x)}}
\tag{$\Gamma_2$ is concave}\\
& = \Gamma_2\paren*{\sE_{\sfL}(h, r)-\sE_{\sfL}^*(\sH, \sR) +\sM_{\sfL}(\sH, \sR)}
\end{align*}
where $\Gamma_2(t)=2\sqrt{n t}$.
\paragraph{(iii)} $\max_{y\in \sY}p(x, y) \leq  (1 - c)$ and $r(x)\leq 0$. In this case, by \eqref{eq:calibration_gap_general}, we have $\sC^*_{\labs}(\sH, \sR, x) =c$ and $\Delta\sC_{\labs,\sH, \sR}(h, r, x)=0$, which implies that $\sE_{\labs}(h, r) - \sE_{\labs}^*(\sH, \sR) + \sM_{\labs}(\sH, \sR) = \mathbb{E}_{X}\bracket*{\Delta\sC_{\labs,\sH, \sR}(h, r, x)}=0\leq \Gamma\paren*{\sE_{\sfL}(h, r)-\sE_{\sfL}^*(\sH, \sR) +\sM_{\sfL}(\sH, \sR)}$ for any $\Gamma\geq 0$.
\paragraph{(iv)} $\max_{y\in \sY}p(x, y) > (1 - c)$ and $r(x)\leq 0$. In this
      case, by \eqref{eq:calibration_gap_general}, we have $\sC^*_{\labs}(\sH, \sR, x) =1 - \max_{y\in
        \sY}p(x, y)$ and $\Delta\sC_{\labs,\sH,
        \sR}(h, r, x) =\max_{y\in \sY} p(x, y)-1+c$. For the
      surrogate loss, we have
    \begin{align*}
    &\Delta\sC_{\sfL,\sH, \sR}(h, r, x)\\ 
    & = \sum_{y\in \sY} \paren*{1-p(x,y)}\max\curl*{0,1 + \frac{h(x, y)}{\rho}}e^{\alpha r(x)} + n c e^{-\alpha r(x)} - 2\sqrt{n^2c\paren*{1 - \max_{y\in \sY}p(x, y)}}\\
    & \geq n\paren*{1 - \max_{y\in \sY}p(x, y)}e^{\alpha r(x)} + nc e^{-\alpha r(x)} - 2\sqrt{n^2c\paren*{1 - \max_{y\in \sY}p(x, y)}}
    \tag{$\sum_{y\in \sY} \paren*{1-p(x,y)}\max\curl*{0,1 + \frac{h(x, y)}{\rho}}\geq n\paren*{1 - \max_{y\in \sY}p(x, y)}$}\\
    & \geq n\paren*{1 - \max_{y\in \sY}p(x, y)} + nc - 2\sqrt{n^2c\paren*{1 - \max_{y\in \sY}p(x, y)}}
    \tag{decreasing for $r(x)\leq 0$}\\
    & = n\paren*{\sqrt{1 - \max_{y\in \sY}p(x, y)}-\sqrt{c}}^2\\
    & = n\paren*{\frac{1 - \max_{y\in \sY}p(x, y)-c}{\sqrt{1 - \max_{y\in \sY}p(x, y)}+\sqrt{c}}}^2\\
    & \geq n\paren*{\frac{\max_{y\in \sY} p(x, y)-1+c}{2}}^2
    \tag{$\sqrt{1 - \max_{y\in \sY}p(x, y)}+\sqrt{c}\leq 2$}\\
    & = \frac{n\Delta\sC_{\labs,\sH, \sR}(h, r, x)^2}{4}.
    \end{align*}
     Therefore, 
    \begin{align*}
      \sE_{\labs}(h, r) - \sE_{\labs}^*(\sH, \sR) + \sM_{\labs}(\sH, \sR)
      & = \mathbb{E}_{X}\bracket*{\Delta\sC_{\labs,\sH, \sR}(h, r, x)}\\
      & \leq \mathbb{E}_{X}\bracket*{\Gamma_3\paren*{\Delta\sC_{\sfL,\sH, \sR}(h, r, x)}}\\
      & \leq \Gamma_3\paren*{\mathbb{E}_{X}\bracket*{\Delta\sC_{\sfL,\sH, \sR}(h, r, x)}}
      \tag{$\Gamma_3$ is concave}\\
& = \Gamma_3\paren*{\sE_{\sfL}(h, r)-\sE_{\sfL}^*(\sH, \sR) +\sM_{\sfL}(\sH, \sR)}
\end{align*}
where $\Gamma_3(t)=2\sqrt{t/n}$.

Overall, we obtain
\begin{align*}
\sE_{\labs}(h, r) - \sE_{\labs}^*(\sH, \sR) + \sM_{\labs}(\sH, \sR) \leq \Gamma\paren*{\sE_{\sfL}(h, r)-\sE_{\sfL}^*(\sH, \sR) +\sM_{\sfL}(\sH, \sR)}
\end{align*}
where $\Gamma(t)=\max\curl*{\Gamma_1(t),\Gamma_2(t),\Gamma_3(t)}= \max\curl*{2\sqrt{nt},t}$, which completes the proof.
\end{proof}

\subsection{Proof of \texorpdfstring{$\sR$}{R}-consistency bounds for second-stage surrogates (Theorem~\ref{Thm:bound-general-second-step})}
\label{app:general-positive-second-stage}
\BoundGenralSecondStep*
\begin{proof}
Given any fixed predictor $h$.  For any $r \in \sR$, $x \in \sX$ and $y \in \sY$, the conditional risk of $\ell_{\mathrm{abs}, h}$ and $\ell_{\Phi, h}$ can be written as
\begin{equation}
\label{eq:tsr-cond-error}
\begin{aligned}
\sC_{\ell_{\mathrm{abs}, h}}(r, x) &=  \sum_{y \in \sY} p(x, y) \1_{\hh(x) \neq y} \1_{r(x) > 0} + c \1_{r(x) \leq 0}\\
\sC_{\ell_{\Phi, h}}(r, x) &= \sum_{y \in \sY} p(x, y) \1_{\hh(x) \neq y} \Phi \paren*{-r(x)} + c \Phi \paren*{r(x)}.
\end{aligned}
\end{equation}
Thus, the best-in class conditional risk of $\ell_{\mathrm{abs}, h}$ and $\ell_{\Phi, h}$ can be expressed as
\begin{equation}
\label{eq:tsr-best-cond-error}
\begin{aligned}
\sC^*_{\ell_{\mathrm{abs}, h}}(\sR, x) &= \inf_{r \in \sR}\paren*{ \sum_{y \in \sY} p(x, y) \1_{\hh(x) \neq y} \1_{r(x) > 0} + c \1_{r(x) \leq 0}}\\
\sC^*_{\ell_{\Phi, h}}(\sR, x) &= \inf_{r \in \sR}\paren*{\sum_{y \in \sY} p(x, y) \1_{\hh(x) \neq y} \Phi \paren*{-r(x)} + c \Phi \paren*{r(x)}}.
\end{aligned}
\end{equation}
Let $p_1 = \frac{\sum_{y \in \sY} p(x, y) \1_{\hh(x) \neq y}}{\sum_{y \in \sY} p(x, y) \1_{\hh(x) \neq y} + c}$ and $p_2 = \frac{c}{\sum_{y \in \sY} p(x, y) \1_{\hh(x) \neq y} + c}$. Then, the calibration gap of $\ell_{\mathrm{abs}, h}$ can be written as 
\begin{align*}
& \sC_{\ell_{\Phi, h}}(r, x) - \sC^*_{\ell_{\Phi, h}}(\sR, x)\\
& = \paren*{\sum_{y \in \sY} p(x, y) \1_{\hh(x) \neq y} + c} \bracket*{p_1 \Phi(-r(x)) + p_2 \Phi(r(x)) - \inf_{r \in \sR} \paren*{p_1 \Phi(-r(x)) + p_2 \Phi(r(x))}}.
\end{align*}
By Lemma~\ref{lemma:aux}, we have
\begin{align*}
& \sC_{\ell_{\mathrm{abs}, h}}(r, x) - \sC^*_{\ell_{\mathrm{abs}, h}}(\sR, x)\\
& = p_1 1_{r(x) > 0} + p_2 1_{r(x) \leq 0} - \inf_{r \in \sR} \paren*{p_1 1_{r(x) > 0} +  p_2 1_{r(x) \leq 0}}\\
& \leq \Gamma \paren*{p_1 \Phi(-r(x)) + p_2 \Phi(r(x)) - \inf_{r \in \sR} \paren*{p_1 \Phi(-r(x)) + p_2 \Phi(r(x))}}\\
& = \Gamma\paren*{\frac{\sC_{\ell_{\Phi, h}}(r, x) - \inf_{r\in \sR}\sC_{\ell_{\Phi, h}}(r, x)}{\sum_{y \in \sY} p(x, y) \1_{\hh(x) \neq y} + c}}\\
& \leq \Gamma \paren*{\frac{\sC_{\ell_{\Phi, h}}(r, x) - \inf_{r\in \sR}\sC_{\ell_{\Phi, h}}(r, x)}{c}},
\end{align*}
where we use the fact that $\Gamma$ is non-decreasing and $\sum_{y \in \sY} p(x, y) \1_{\hh(x) \neq y} + c \geq c$ in the last inequality.
Since $\Gamma$ is concave, taking the expectation on both sides and using Jensen's inequality, we obtain 
\begin{equation*}
\E_{X}\bracket*{\sC_{\ell_{\mathrm{abs}, h}}(r, x) - \sC^*_{\ell_{\mathrm{abs}, h}}(\sR, x)} \leq \Gamma \paren*{\frac{\E_{X}\bracket*{\sC_{\ell_{\Phi, h}}(r, x) - \inf_{r\in \sR}\sC_{\ell_{\Phi, h}}(r, x)}}{c}}.
\end{equation*}
Since the term $\E_{X}\bracket*{\sC_{\ell_{\mathrm{abs}, h}}(r, x) - \sC^*_{\ell_{\mathrm{abs}, h}}(\sR, x)}$ and $\E_{X}\bracket*{\sC_{\ell_{\Phi, h}}(r, x) - \inf_{r\in \sR}\sC_{\ell_{\Phi, h}}(r, x)}$ can be expressed as
\begin{align*}
 \E_{X}\bracket*{\sC_{\ell_{\mathrm{abs}, h}}(r, x) - \sC^*_{\ell_{\mathrm{abs}, h}}(\sR, x)} &= \sE_{\ell_{\mathrm{abs}, h} }(r)-\sE_{\ell_{\mathrm{abs}, h} }^*(\sR) +\sM_{\ell_{\mathrm{abs}, h} }(\sR)\\
 \E_{X}\bracket*{\sC_{\ell_{\Phi, h}}(r, x) - \inf_{r\in \sR}\sC_{\ell_{\Phi, h}}(r, x)} &= \sE_{\ell_{\Phi, h} }(r)-\sE_{\ell_{\Phi, h} }^*(\sR) +\sM_{\ell_{\Phi, h}}(\sR),
\end{align*}
we have
\begin{equation*}
\sE_{\ell_{\mathrm{abs}, h} }(r)-\sE_{\ell_{\mathrm{abs}, h} }^*(\sR) +\sM_{\ell_{\mathrm{abs}, h} }(\sR) \leq \Gamma\paren*{\frac{\sE_{\ell_{\Phi, h} }(r)-\sE_{\ell_{\Phi, h} }^*(\sR) +\sM_{\ell_{\Phi, h}}(\sR)}{c}},
\end{equation*}
which completes the proof.
\end{proof}

\subsection{Proof of \texorpdfstring{$(\sH, \sR)$}{HR}-consistency bounds for two-stage surrogates (Theorem~\ref{Thm:bound-general-two-step})}
\label{app:general-positive-two-stage}
\BoundGenralTwoStep*
\begin{proof}
Since $\sR$ is regular, the conditional risk and the best-in-class conditional risk of the abstention loss $\labs$ can be expressed as
\begin{equation}
\label{eq:tshr-cond-error-def}
\begin{aligned}
\sC_{\labs}(h, r, x) &=  \sum_{y \in \sY} p(x, y) \1_{\hh(x) \neq y} \1_{r(x) > 0} + c \1_{r(x) \leq 0}\\
\sC^*_{\labs}(\sH, \sR, x) &= \min\curl*{\inf_{h \in \sH} \sum_{y \in \sY} p(x, y) \1_{\hh(x) \neq y}, c}.
\end{aligned}
\end{equation}
Thus, by introducing the term $\min\curl*{\sum_{y \in \sY} p(x, y) \1_{\hh(x) \neq y}, c}$ and subsequently subtracting it after rearranging, the calibration gap of the abstention loss $\labs$ can be written as follows
\begin{equation}
\label{eq:tshr-cond-reg-def}
\begin{aligned}
& \sC_{\labs}(h, r, x) - \sC^*_{\labs}(\sH, \sR, x)\\
& = \sum_{y \in \sY} p(x, y) \1_{\hh(x) \neq y} \1_{r(x) > 0} + c \1_{r(x) \leq 0} - \min\curl*{\inf_{h \in \sH} \sum_{y \in \sY} p(x, y) \1_{\hh(x) \neq y}, c}\\ 
& =  \sum_{y \in \sY} p(x, y) \1_{\hh(x) \neq y} \1_{r(x) > 0} + c \1_{r(x) \leq 0}  - \min\curl*{\sum_{y \in \sY} p(x, y) \1_{\hh(x) \neq y}, c}\\
& \quad + \min\curl*{\sum_{y \in \sY} p(x, y) \1_{\hh(x) \neq y}, c} - \min\curl*{\inf_{h \in \sH} \sum_{y \in \sY} p(x, y) \1_{\hh(x) \neq y}, c}.
\end{aligned}
\end{equation}
Note that by the property of the minimum, the second term can be upper bounded as 
\begin{align*}
& \min\curl*{\sum_{y \in \sY} p(x, y) \1_{\hh(x) \neq y}, c} - \min\curl*{\inf_{h \in \sH} \sum_{y \in \sY} p(x, y) \1_{\hh(x) \neq y}, c}\\
& \leq \sum_{y \in \sY} p(x, y) \1_{\hh(x) \neq y} - \inf_{h \in \sH} \sum_{y \in \sY} p(x, y) \1_{\hh(x) \neq y}\\
& = \sC_{\ell_{0-1}}(h, x)-\sC^*_{\ell_{0-1}}(\sH,x)\\
& \leq \Gamma_1\paren*{\sC_{\ell}(h, x)-\sC^*_{\ell}(\sH,x)},
\end{align*}
where we use the $\sH$-consistency bound of $\ell$ on the pointwise distribution $\delta_{x}$ that concentrates on a point $x$ in the last inequality.
Next, we will upper bound the first term. Note that the conditional risk and the best-in class conditional risk of $\ell_{\Phi, h}$ can be expressed as
\begin{equation}
\label{eq:tshr-cond-error-sur}
\begin{aligned}
\sC_{\ell_{\Phi, h}}(r, x) &= \sum_{y \in \sY} p(x, y) \1_{\hh(x) \neq y} \Phi \paren*{-r(x)} + c \Phi \paren*{r(x)}\\
\sC^*_{\ell_{\Phi, h}}(\sR, x) &= \inf_{r \in \sR}\paren*{\sum_{y \in \sY} p(x, y) \1_{\hh(x) \neq y} \Phi \paren*{-r(x)} + c \Phi \paren*{r(x)}}.
\end{aligned}
\end{equation}
Let $p_1 = \frac{\sum_{y \in \sY} p(x, y) \1_{\hh(x) \neq y}}{\sum_{y \in \sY} p(x, y) \1_{\hh(x) \neq y} + c}$ and $p_2 = \frac{c}{\sum_{y \in \sY} p(x, y) \1_{\hh(x) \neq y} + c}$. Then, the first term can be rewritten as 
\begin{align*}
& \sum_{y \in \sY} p(x, y) \1_{\hh(x) \neq y} \1_{r(x) > 0} + c \1_{r(x) \leq 0}  - \min\curl*{\sum_{y \in \sY} p(x, y) \1_{\hh(x) \neq y}, c}\\
& = \paren*{\sum_{y \in \sY} p(x, y) \1_{\hh(x) \neq y} + c} \bracket*{p_1 1_{r(x) > 0} + p_2 1_{r(x) \leq 0} - \inf_{r \in \sR} \paren*{p_1 1_{r(x) > 0} +  p_2 1_{r(x) \leq 0}}}
\end{align*}
By Lemma~\ref{lemma:aux}, we have
\begin{align*}
& p_1 1_{r(x) > 0} + p_2 1_{r(x) \leq 0} - \inf_{r \in \sR} \paren*{p_1 1_{r(x) > 0} + p_2 1_{r(x) \leq 0}}\\
& \leq \Gamma_2 \paren*{p_1 \Phi(-r(x)) + p_2 \Phi(r(x)) - \inf_{r \in \sR} \paren*{p_1 \Phi(-r(x)) + p_2 \Phi(r(x))}}\\
& = \Gamma_2 \paren*{\frac{\sC_{\ell_{\Phi, h}}(r, x) - \sC^*_{\ell_{\Phi, h}}(\sR, x)}{\sum_{y \in \sY} p(x, y) \1_{\hh(x) \neq y} + c}}.
\end{align*}
Therefore, the first term can be upper bounded as
\begin{align*}
& \sum_{y \in \sY} p(x, y) \1_{\hh(x) \neq y} \1_{r(x) > 0} + c \1_{r(x) \leq 0}  - \min\curl*{\sum_{y \in \sY} p(x, y) \1_{\hh(x) \neq y}, c}\\
& = \paren*{\sum_{y \in \sY} p(x, y) \1_{\hh(x) \neq y} + c} \bracket*{p_1 1_{r(x) > 0} + p_2 1_{r(x) \leq 0} - \inf_{r \in \sR} \paren*{p_1 1_{r(x) > 0} +  p_2 1_{r(x) \leq 0}}}\\
& \leq \paren*{\sum_{y \in \sY} p(x, y) \1_{\hh(x) \neq y} + c} \Gamma_2 \paren*{\frac{\sC_{\ell_{\Phi, h}}(r, x) - \sC^*_{\ell_{\Phi, h}}(\sR, x)}{\sum_{y \in \sY} p(x, y) \1_{\hh(x) \neq y} + c}} \\
& \leq
\begin{cases}
\Gamma_2\paren*{\sC_{\ell_{\Phi,h}}(r,x)-\sC^*_{\ell_{\Phi,h}}(\sR,x)} & \text{when $\Gamma_2$ is linear}\\
(1+c)\Gamma_2\paren*{\frac {\sC_{\ell_{\Phi,h}}(r,x)-\sC^*_{\ell_{\Phi,h}}(\sR,x)}{c}} & \text{otherwise}
\end{cases}
\end{align*}
where we use the fact that $c\leq \sum_{y\in \sY}p(x, y)\1_{\hh(x)\neq y} + c\leq 1+c$ and $\Gamma_2$ is non-decreasing in the last inequality. After upper bounding the first term and the second term in \eqref{eq:tshr-cond-reg-def} as above, taking the expectation on both sides, using the fact that $\Gamma_1$ and $\Gamma_2$ are concave, we obtain
\begin{align*}
  &\E_{X}\bracket*{\sC_{\labs}(h, r, x) - \sC^*_{\labs}(\sH, \sR, x)}\\
  &\leq 
\begin{cases}
\Gamma_2\paren*{\E_X\bracket*{\sC_{\ell_{\Phi,h}}(r,x)-\sC^*_{\ell_{\Phi,h}}(\sR,x)}} + \Gamma_1\paren*{\E_X\bracket*{\sC_{\ell}(h, x)-\sC^*_{\ell}(\sH,x)}} & \text{$\Gamma_2$ is linear}\\
(1+c)\Gamma_2\paren*{\frac1c\E_X\bracket*{\sC_{\ell_{\Phi,h}}(r,x)-\sC^*_{\ell_{\Phi,h}}(\sR,x))}} + \Gamma_1\paren*{\E_X\bracket*{\sC_{\ell}(h, x)-\sC^*_{\ell}(\sH,x)}} & \text{otherwise}
\end{cases}
\end{align*}
Since the three expected terms can be expressed as
\begin{align*}
\E_{X}\bracket*{\sC_{\labs}(h, r, x) - \sC^*_{\labs}(\sH, \sR, x)} &= \sE_{\labs}(h, r)-\sE^*_{\labs}\paren*{\sH,\sR}+\sM_{\labs}(\sH, \sR)\\
\E_X\bracket*{\sC_{\ell_{\Phi,h}}(r,x)-\sC^*_{\ell_{\Phi,h}}(\sR,x)} &= \sE_{\ell_{\Phi,h}}(r)-\sE_{\ell_{\Phi,h}}^*(\sR) +\sM_{\ell_{\Phi,h}}(\sR)\\
\E_X\bracket*{\sC_{\ell}(h, x)-\sC^*_{\ell}(\sH,x)} &= \sE_{\ell}(h)-\sE_{\ell}^*(\sH) +\sM_{\ell}(\sH),
\end{align*}
we have
\begin{align*}
\sE_{\labs}(h, r) - \sE_{\labs}^*(\sH, \sR) + \sM_{\labs}(\sH, \sR)
& \leq \Gamma_1\paren*{\sE_{\ell}(h)-\sE_{\ell}^*(\sH) +\sM_{\ell}(\sH)}\\
& \quad + (1+c)\Gamma_2\paren*{\frac{\sE_{\ell_{\Phi,h}}(r)-\sE_{\ell_{\Phi,h}}^*(\sR) +\sM_{\ell_{\Phi,h}}(\sR)}{c}},
\end{align*}
where the
constant factors $(1 + c)$ and $\frac{1}{c}$ can be removed 
when $\Gamma_2$ is linear.
\end{proof}

\subsection{Proof of realizable \texorpdfstring{$(\sH, \sR)$}{HR}-consistency bounds for single-stage surrogates (Theorem~\ref{Thm:spcific-loss-bound-realizable})}
\label{app:general-positive-single-stage-realizable}
\SpecificLossBoundRealizable*
\begin{proof}
It is straightforward to see that $\sfL$ serves as an upper bound for $\labs$ when $\ell$ serves as an upper bound for $\ell_{0-1}$ under Assumption~\ref{assumption:phi}.
By definition, for any $(\sH,\sR)$-realizable distribution, there exists $h^*\in \sH$ and $r^*\in \sR$ such that $\sE_{\labs}(h^*,r^*) = \sE_{\labs}^*(\sH, \sR) = 0$. Then, by the assumption that $\sH$ and $\sR$ are closed under scaling, for any $\nu>0$,
\begin{align*}
\sE^*_{\sfL}(\sH,\sR)
&\leq\sE_{\sfL}(\nu h^*,\nu r^*)\\
&=\mathbb{E}\bracket*{\sfL(\nu h^*,\nu r^*,x,y)\mid r^*< 0}\mathbb{P}(r^*< 0) + \mathbb{E}\bracket*{\sfL(\nu h^*,\nu r^*,x,y)\mid r^*>0}\mathbb{P}(r^*> 0)
\end{align*}
Next, we investigate the two terms.
The first term is when $r^*< 0$, then we must have $c=0$ since the data is realizable. By taking the limit, we obtain:
\begin{align*}
&\lim_{\nu\to \plus\infty}\mathbb{E}\bracket*{\sfL(\nu h^*,\nu r^*,x,y)\mid r^*< 0}\mathbb{P}(r^*< 0)\\
&=\lim_{\nu\to \plus\infty}\mathbb{E}\bracket*{\ell(\nu h^*, x, y)\Phi\paren*{-\alpha \nu r^*(x)} + \Psi(c) \Phi\paren*{\beta \nu r^*(x)}\mid r^*< 0}\mathbb{P}(r^*< 0)\\
&=\lim_{\nu\to \plus\infty}\mathbb{E}\bracket*{\ell(\nu h^*, x, y)\Phi\paren*{-\alpha \nu r^*(x)}\mid r^*< 0}\mathbb{P}(r^*< 0) \tag{$c=0$ and $\Psi(0)=0$}\\
&=0. \tag{by the Lebesgue dominated convergence theorem and $\lim_{t\to \plus \infty}\Phi(t)=0$}
\end{align*}
The second term is when $r^*> 0$, then we must have $h^*(x,y) - \max_{y'\neq y}h^*(x,y') > 0$ since the data is realizable. Thus, using the fact that $\lim_{\nu\to \plus\infty}\ell(\nu h^*,x,y)=0$ and taking the limit, we obtain
\begin{align*}
&\lim_{\nu\to \plus\infty}\mathbb{E}\bracket*{\sfL(\nu h^*,\nu r^*,x,y)\mid r^*< 0}\mathbb{P}(r^*< 0)\\
&=\lim_{\nu\to \plus\infty}\mathbb{E}\bracket*{\ell(\nu h^*, x, y)\Phi\paren*{-\alpha \nu r^*(x)} + \Psi(c) \Phi\paren*{\beta \nu r^*(x)}\mid r^*< 0}\mathbb{P}(r^*< 0)\\
&=0. \tag{by the Lebesgue dominated convergence theorem, $\lim_{t\to \plus \infty}\Phi(t)=0$, $\lim_{\nu\to \plus\infty}\ell(\nu h^*,x,y)=0$}
\end{align*}
Therefore, by combining the above two analysis, we obtain
\begin{align*}
\sE^*_{\sfL}(\sH,\sR)\leq \lim_{\nu\to \plus\infty}\sE_{\sfL}(\nu h^*,\nu r^*)=0.
\end{align*}
By using the fact that $\sfL$ serves as an upper bound for $\labs$ and $\sE_{\labs}^*(\sH, \sR)=0$, we conclude that
\begin{equation*}
\sE_{\labs}(h, r) - \sE_{\labs}^*(\sH, \sR)
\leq \sE_{\sfL}(h, r)-\sE_{\sfL}^*(\sH, \sR).
\end{equation*}
\end{proof}
\subsection{Proof of realizable \texorpdfstring{$(\sH, \sR)$}{HR}-consistency for two-stage surrogates (Theorem~\ref{Thm:bound-general-two-step-realizable})}
\label{app:general-positive-two-stage-realizable}
\BoundGenralTwoStepRealizable*
\begin{proof}
It is straightforward to see that $\ell_{\Phi, h}$ upper bounds the abstention loss $\labs$ under Assumption~\ref{assumption:phi}. By definition, for any $(\sH,\sR)$-realizable distribution, there exists $h^*\in \sH$ and $r^*\in \sR$ such that $\sE_{\labs}(h^*,r^*)=0$. Let $\hat h$ be the minimizer of $\sE_{\ell}$ and $\hat r$ be the minimizer of $\sE_{\ell_{\Phi, \hat h}}$.
Then, using the fact that $\ell_{\Phi, h}$ upper bounds the abstention loss $\sE_{\labs}$, we have $\sE_{\sE_{\labs}}(\hat h, \hat r)\leq \sE_{\ell_{\Phi, \hat h}}(\hat r)$.

Next, we analyze two cases. If for a point $x$, abstention happens, that is $r^*(x) < 0$, then we must have $c = 0$ since the data is realizable. Therefore, there exists an optimal $r^{**}$ abstaining all the points with zero cost: $r^{**}(x) < 0$ for all $x \in \sX$. Then, by the assumption that $\sR$ is closed under scaling and the Lebesgue dominated convergence theorem, using the fact that $\lim_{t \to \plus
  \infty} \Phi(t) = 0$, we obtain
\begin{align*}
\sE_{\labs}(\hat h, \hat r) 
&\leq \sE_{\ell_{\Phi, \hat h}}(\hat r)\\
&\leq \lim_{\nu \to +\infty} \sE_{\ell_{\Phi, \hat h}}(\nu r^{**}) \tag{$\hat r$ is the minimizer of $\sE_{\ell_{\Phi, \hat h}}$}\\
&= \lim_{\nu\to \plus\infty}\mathbb{E}\bracket*{\1_{\hat \hh(x) \neq y} \Phi\paren*{-\nu r^{**}(x)} + c \Phi\paren*{\nu r^{**}(x)}} \tag{By \eqref{eq:ell-Phi-h}}\\
&= \lim_{\nu\to \plus\infty}\mathbb{E}\bracket*{\1_{\hat \hh(x) \neq y} \Phi\paren*{-\nu r^{**}(x)}} \tag{$c=0$}\\
&= 0.  \tag{by the Lebesgue dominated convergence theorem and $\lim_{t\to \plus \infty}\Phi(t)=0$}
\end{align*}
On the other hand, if no abstention occurs for any point, that is $r^*(x) > 0$ for any $x \in \sX$, then we must have $\1_{\mathsf h^*(x) \neq y} = 0$ for all $(x,y ) \in \sX \times \sY$ since the data is realizable. Using the fact that $\ell$ is realizable $\sH$-consistent with respect to $\ell_{0-1}$ when $\sH$ is closed under scaling, we obtain $\1_{\hat{\mathsf h}(x) \neq y} = 0$ for all $(x,y) \in \sX \times \sY$. Then, by the assumption that $\sR$ is closed under scaling and the Lebesgue dominated convergence theorem, using the fact that $\lim_{t \to \plus \infty} \Phi(t) = 0$, we obtain
\begin{align*}
\sE_{\labs}(\hat h, \hat r) 
&\leq \sE_{\ell_{\Phi, \hat h}}(\hat r)\\
&\leq \lim_{\nu \to +\infty} \sE_{\ell_{\Phi, \hat h}}(\nu r^{*}) \tag{$\hat r$ is the minimizer of $\sE_{\ell_{\Phi, \hat h}}$}\\
&= \lim_{\nu\to \plus\infty}\mathbb{E}\bracket*{\1_{\hat \hh(x) \neq y} \Phi\paren*{-\nu r^{*}(x)} + c \Phi\paren*{\nu r^{*}(x)}} \tag{By \eqref{eq:ell-Phi-h}}\\
&= \lim_{\nu\to \plus\infty}\mathbb{E}\bracket*{c \Phi\paren*{\nu r^{*}(x)}} \tag{$\1_{\hat{\mathsf h}(x) \neq y} = 0$}\\
&= 0.  \tag{by the Lebesgue dominated convergence theorem and $\lim_{t\to \plus \infty}\Phi(t)=0$}
\end{align*}
This completes the proof.
\end{proof}
\end{document}